\documentclass[12pt]{article}
\usepackage{geometry}
\usepackage{amsmath}
\usepackage{amsfonts}
\usepackage{amsthm} 

\RequirePackage{tgtermes}
\RequirePackage{newtxtext}
\RequirePackage{newtxmath}
\RequirePackage{bm}
\RequirePackage{endnotes}
\usepackage{dsfont}
\usepackage{enumitem}
\usepackage{array}
\usepackage{pifont}
\usepackage{tikz}
\usepackage{stackengine}
\usepackage{caption}
\usepackage{wrapfig}
\usepackage{subcaption}
\usepackage[normalem]{ulem}
\usetikzlibrary{arrows.meta, positioning}

\usepackage[english]{babel}

\newcommand{\EE}{\mathbb E}

\usepackage{algorithm}
\usepackage{algpseudocode}

\usepackage{natbib}
\bibpunct[, ]{(}{)}{,}{a}{}{,}

\usepackage{fancyhdr}
\usepackage{hyperref}
\providecommand{\Description}[1]{}

\newtheorem{theorem}{Theorem}
\newtheorem{lemma}[theorem]{Lemma}
\newtheorem{proposition}[theorem]{Proposition}
\newtheorem{corollary}[theorem]{Corollary}
\newtheorem{claim}[theorem]{Claim}
\theoremstyle{definition}

\theoremstyle{remark}

\begin{document}

\title{Right-Sizing Communication and Recommendation Set Size in AI-Assisted Search}

\author{
\small Jing Dong\\
\small Columbia Business School, Columbia University\\
\small \texttt{jing.dong@gsb.columbia.edu}
\and
\small Prakirt Raj Jhunjhunwala\\
\small Amazon.com Inc.\\
\small \texttt{prakirt2203@gmail.com}
\and
\small Yash Kanoria\\
\small Columbia Business School, Columbia University\\
\small \texttt{ykanoria@gmail.com}
}

\date{}

\maketitle

\begin{abstract}
We model the interaction between a user and an AI-driven recommendation system. The user initiates the process by conveying preference information through a costly and noisy message. The AI assistant, acting as a Bayesian agent, interprets the user's message to form a posterior belief about their true preferences and make product recommendations. In particular, it determines how many recommendations to present so as to maximize the user's expected utility from their final choice, while accounting for the search cost induced by the size of the recommendation set. We use mutual information based cost functions to model the two distinct costs incurred by the user during the interaction: (i) a \textit{communication cost}, which increases with the precision of their preference message, and (ii) a \textit{search cost}, which increases with the size of the recommendation set provided by the AI assistant.

We study products and preferences which live in $d$ dimensional space, and ask how the user's expected payoff can be maximized. For large $d$, we characterize how optimal message precision and recommendation set size depend on the cost parameters, under two distinct distributions from which recommendations can be sampled from the product universe: (i) \textit{Bayes\' posterior belief}, and (ii) an optimized \textit{tilted distribution}. Under the posterior sampling scheme (i), we identify a \textit{hybrid regime}, in which an efficient interaction policy requires jointly optimizing the amount of information (in bits) conveyed by the user and the number of recommendations provided by the AI assistant. In the tilted sampling scheme (ii), our results show that the optimal interaction policy uses only one of communication and search, favoring whichever of them is less costly.
\end{abstract}

\noindent\textbf{Keywords:} product recommendations, communication cost, search cost, sampling, choice overload

\bigskip

\section{Introduction}
AI-powered recommendation systems are increasingly embedded in customer-facing platforms across domains such as e-commerce \citep{bansal2025magentic, cui2017superagent}, healthcare \citep{moor2023foundation, jiang2017artificial, yu2018artificial}, and education \citep{kasneci2023chatgpt, atchley2024human}. As these systems proliferate, the nature of human decision-making is shifting from direct search to interactive delegation. This shift is particularly salient in e-commerce environments, where customers routinely face thousands of options for a single product category—for example, a search for headphones on Amazon can return an overwhelming number of results. Carefully reading product specifications and reviews to identify the option that best matches one's preferences can therefore be extremely time-consuming and cognitively demanding \citep{schwartz2015paradox, scheibehenne2010can}. AI shopping assistants can alleviate this burden by aggregating product information and generating recommendations based on a posterior belief over the customer's latent preferences. In this work, we use the term \textit{agent} to refer to an AI shopping assistant whose objective is to optimally assist the customer. (In particular, our shopping ``agents'' will neither be strategic nor enjoy much autonomy.) We also refer to customers as \textit{users}.

In modern recommendation settings, agents excel at processing large, complex product spaces and narrowing down feasible options, but they typically lack direct access to users’ true preferences. Eliciting this information requires communication from the user, an inherently costly activity for the user due to the cognitive effort and time spent. As a result, agents rely on limited user interactions to infer preferences and tailor recommendations. Our work studies how to optimally structure this user–agent interaction in a product recommendation setting. Here, \textit{interaction} refers specifically to the exchange of information between the user and the agent, namely the user’s communication of preference information and the agent’s provision of product recommendations. The key mechanism governing this interaction is that user preference communication improves the relevance of the recommendation set, while the agent’s provision of multiple recommendations addresses residual uncertainty in those preferences.

\smallskip
\noindent\textbf{Research Question.} \emph{How should one jointly choose the amount of preference information communicated by the user and the number of recommendations provided to the user, to balance communication costs, search costs, and product utility?}
\smallskip

This paper introduces a theoretical framework to study AI-assisted decision-making, where the user obtains utility from the selected item and incurs a \textit{communication cost} for specifying preferences and a \textit{search cost} for evaluating the recommendation set. Upon receiving a user message, the agent constructs a sampling distribution and generates a recommendation menu consisting of independent samples from this distribution. The menu size is optimized to balance two competing objectives: maximizing the expected utility of the best option in the set (i.e., ensuring the menu contains a highly preferred item), while keeping the set small enough to limit the user's search costs. Anticipating the agent’s response, the user can carefully choose the precision of their message, trading off the benefits of a more targeted recommendation set against the cost of communication. Our analysis uses a high-dimensional approximation in terms of the number of features and reveals a sharp distinction between two practically motivated recommendation sampling schemes. The default approach (for a Gen AI-based agent) is posterior sampling, whereby each recommendation is an independent and identically distributed (i.i.d.) draw from the posterior distribution over the user’s preferred product. Under posterior sampling, optimal performance generally requires an optimal blend of user preference communication and multiple product recommendations. In contrast, under optimally designed importance sampling of product recommendations (which we call sampling from a ``tilted'' distribution), the optimal policy turns out to be ``pure'', leveraging either only communication or only search, depending on their relative costs.

\subsection{Main Contributions}
This paper advances the theoretical understanding of AI-powered recommendation systems driven by user–agent interaction. We make three primary contributions. First, we develop a novel model of user–agent interaction that accounts for the communication cost of preference elicitation and the search cost of evaluating recommendations, with costs measured information-theoretically (via KL divergence and set size entropy).

Second, we derive a high-dimensional approximation of the problem under an asymptotic regime where the number of features ($d$) becomes large, i.e., $d\rightarrow \infty$. By establishing a Large Deviation Principle (LDP) for the maximum utility induced as a function of recommendation set size, we reduce the complex stochastic optimization problem to a deterministic optimization problem, yielding an explicit characterization of the optimal system design. We note that the large-$d$ regime is reminiscent of the architecture of modern recommendation systems, where user preferences and product features are represented as high-dimensional embeddings.

Third, using the analytic solution for the asymptotic limit, we characterize the system's performance under the optimal interaction policy and identify distinct operational regimes that arise from the relative magnitudes of the communication and search costs. Next, we provide a brief overview of our model, followed by the key insights derived from our work.

\subsubsection{Model:} We introduce a stylized model which formalizes the user–agent interaction in a $d$-dimensional spherical feature space, where both user preferences and products are represented as unit vectors.
The utility derived from a product is its alignment with the user's true preference, measured by the dot product. User preferences are uniformly distributed, and every point on the surface of the $d$-dimensional sphere is a product. The interaction unfolds in three stages that model the costly exchange of information between the user and the agent, with information theory-inspired quantification of costs. First, the user transmits a noisy message with precision $\kappa$, incurring a communication cost equal to the KL-divergence between the agent’s prior and posterior beliefs of the user's preference vector, multiplied by a communication cost parameter $\lambda_c$. Second, the agent generates a recommendation menu of size $n$, modeled as independent draws from a sampling distribution. Finally, the user identifies the utility-maximizing item from the menu, incurring a search cost equal to the logarithm of the menu size multiplied by a search-cost parameter $\lambda_s$. We frame the interaction as a collaborative process where the shared goal is to maximize the user's expected payoff (utility from the chosen item minus the search and communication costs) by jointly optimizing the precision of the user's message $\kappa$ and the size of the recommendation set $n$ provided by the agent. We consider two recommendation set sampling strategies: (i) posterior sampling, where the agent samples items from the posterior over the user's preferences vector, and (ii) optimized ``tilted'' sampling, where recommendations are drawn from a modified distribution that places additional, optimally chosen weight (a “tilt”) on the user’s communicated preference.

\subsubsection{Results and Insights:} In this section, we describe our analytic results characterizing the optimal message precision ($\kappa$) and recommendation set size ($n$). More extensive discussion is provided in Section \ref{sec:asymptotic_regimes} and Section \ref{sec:finite_d_regimes}. We first discuss the setting where recommendations are sampled from the posterior distribution over user preferences, mirroring what one may expect a Gen AI model to do by default, in the absence of any intentional tuning.

\paragraph{Joint optimization and hybrid communication-search regime:}

\begin{wrapfigure}{r}{0.35\textwidth}
    \centering
    \includegraphics[width=0.9\linewidth]{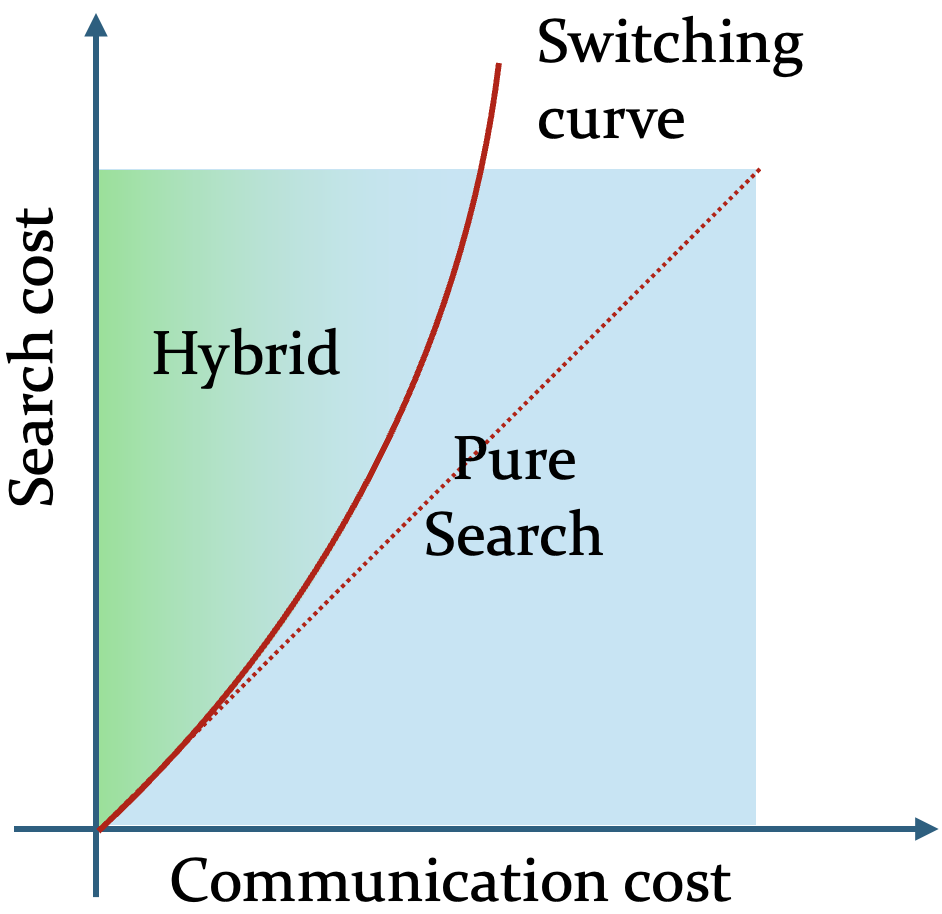}
    \setlength{\belowcaptionskip}{-9pt}
    \caption{Qualitative illustration of the optimal interaction regimes under posterior sampling.
    }
    \label{fig:hybrid_regime}
\end{wrapfigure}

A central insight from our analysis is that optimal performance requires balancing message precision and recommendation set size according to the relative magnitudes of communication and search costs. We identify a single switching curve in the cost–parameter space.

On one side of this curve, the optimal policy jointly optimizes communication and search; on the other side, it relies solely on search. In particular, when the search and communication costs scale as $\lambda_s \sim 1/d$ and $\lambda_c \sim 1/d$, there exists a nontrivial region -- bounded by this switching curve -- in which both costs are economically significant (see the Hybrid region in Fig.~\ref{fig:hybrid_regime}). Within this hybrid regime, optimal performance requires \textit{joint optimization}: nontrivial communication (positive message precision) together with non-degenerate recommendation sets (more than a single item).

\textit{Intuition behind the scaling laws under the hybrid regime:} The high-dimensional approximation also allows us to characterize the scaling laws for message precision $\kappa$ and recommendation set size $n$ with respect to the feature dimension $d$.
\begin{itemize}
    \item \textit{Scaling of message precision:} Our model assumes that the user preferences are uniformly distributed on a $d$-dimensional unit hypersphere.
    In this setting, the total uncertainty (entropy) about a user's preference scales linearly with the dimension $d$. For communication to be meaningful, the information a user provides must overcome an $\Omega(1)$ fraction of this uncertainty. Our results confirm that in the hybrid regime, the user communicates a fraction of their total preference information, i.e., providing an amount of information that scales linearly with $d$. Intuitively, this is similar to the user perfectly specifying a subset of their preference ``features'' while leaving the rest unspecified.

    \item \textit{Scaling of recommendation set size:} When a user provides partial information (or no information), a sizeable product subspace remains where their preferred item might reside.
    To account for these unspecified features, the agent must provide multiple recommendations to ensure adequate coverage. Intuitively, if each unspecified attribute can vary over a few qualitatively distinct levels (e.g., low/medium/high price, sporty/casual/professional style), then the agent would need to provide a recommended product for each possible combination of these levels for unspecified features, leading to an optimal recommendation set size which grows exponentially in $d$.

    We note that the optimal recommendation set size scales as $n = \exp(\alpha d)$ with typically a small exponent $\alpha$. As a result, the number of recommended items is much smaller than the size of the product space. As a concrete example, for $d=15$, we find that $\alpha = 0.15$ for plausible communication and search costs, yielding an optimal recommendation set size of $n = 15$, which is realistic. Note that the dimension $d$ in our model should be interpreted as capturing the effective preference complexity relevant for the interaction so $d=15$ may be reasonable for many product categories. We further study a model extension that allows different features to carry different weights (see Appendix \ref{sec:weighted_prefer}).
\end{itemize}

\paragraph{Recommendations from tilted distribution:}
Next, we consider a more sophisticated recommendation sampling policy: importance sampling of product recommendations, which we call sampling from a ``tilted'' distribution, loosely inspired by the possibility of creating intentionally tuned Gen AI tools such as ChatGPT Shopping Research mode. The core insight is that the agent can optimally manage (in a high-dimensional setting) the trade-off between exploiting the user's message and diversifying across uncommunicated features by adjusting a single, deterministic ``tilt'' parameter; as a result we call this approach \textit{tilting}. Here, the tilt parameter acts as a design lever that the agent must calibrate based on both the message precision as well as the recommendation set size.

\begin{wrapfigure}{r}{0.35\textwidth}
    \centering
    \includegraphics[width=0.9\linewidth]{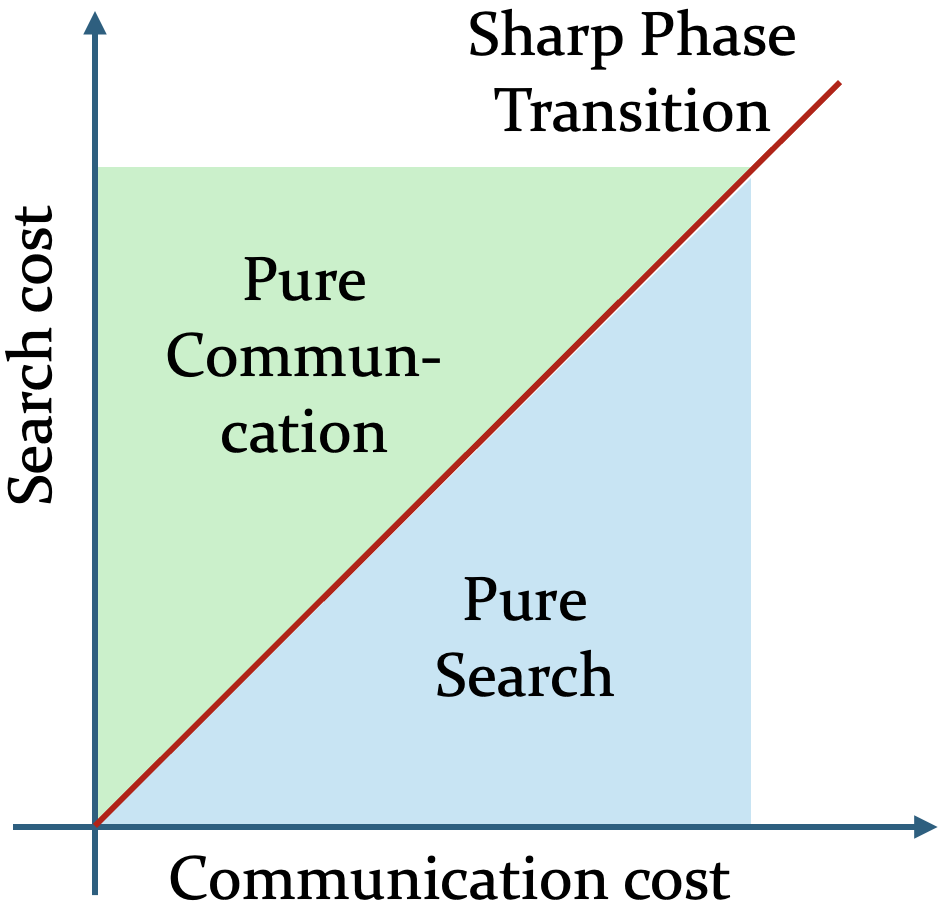}
    \setlength{\belowcaptionskip}{-8pt}
    \caption{Qualitative illustration of the optimal interaction regimes under tilted sampling.}
    \label{fig:tilted_regime}
\end{wrapfigure}
Our analysis demonstrates that optimal tilting outperforms direct posterior sampling, and increases the return to communication by placing greater weight on the user’s message. Furthermore, when all control parameters (message precision, recommendation set size, and the tilt parameter) are jointly optimized, perhaps surprisingly, a sharp phase transition emerges for large $d$. The optimal policy is pure, relying entirely on either communication or search, whichever is more cost-effective, with no hybrid regime arising. This behavior is qualitatively illustrated by the diagonal switching curve in Fig.~\ref{fig:tilted_regime}: on one side of a diagonal boundary in the cost-parameter space the policy relies entirely on communication, while on the other it relies entirely on search.

Two natural questions emerge: First, why does optimal tilting place greater weight on the user’s message than posterior sampling? At one extreme, when there is only a single recommendation ($n=1$), the optimal action is to recommend the item corresponding exactly to the user’s message (the maximum possible tilting); there is no room to compensate for the uncertainty in preferences. At the other extreme, when the number of recommendations is very large ($n\to \infty$), sampling from the posterior distribution (no tilting) works well, since with high probability at least one sampled item lies very close to the user’s true type, and additional tilting would only distort this distribution and reduce expected utility. For finite $n>1$, we are in an intermediate situation and appropriate tilting towards the user's message helps by acting as a form of regularization (avoiding overfitting to the posterior distribution).

Second, why does the optimal policy under large $d$ effectively use only one of the two modes of interaction when tilting is allowed? The key intuition is that tilting provides a lever for controlling how the user’s message is translated into recommendations. By tuning this lever, in our model for large $d$, the system is able to fully exploit the user's communication, and also make best use of product recommendations. As a result, the optimal policy favors a single mode, specifically, the mode which costs less per unit (recall that our model quantifies both costs along information theoretic lines).

We note that posterior sampling and tilting represent fundamentally different agent capabilities, not simply alternative methods with different performance. Posterior sampling captures a \textit{constrained} agent architecture, typical of modular or black-box generative systems, where the agent cannot modify its underlying posterior distribution. In this setting, balancing communication and search is essential to improve performance. In contrast, the tilted distribution represents an \textit{optimized} agent with the architectural flexibility to modify its internal sampling distribution to directly target the system objective. This additional design freedom enables both higher performance and qualitatively different interaction regimes. At the same time, realizing these gains requires a more sophisticated training and tuning pipeline, although such complexity may be justified in high-stakes applications such as e-commerce. Importantly, only a single-dimensional tilting parameter is needed, making the implementation analogous to choosing a context-dependent temperature parameter in modern generative systems.

\subsection{Related Work}
\paragraph{AI-Assisted Decision Making:} With the advent of chatbots and AI agents, there is huge interest in works that examine how human decision-making interacts with outputs from algorithmic agents. \citet{kleinberg2018human} show that machine learning predictions can augment human judgment in socially sensitive contexts, while \citet{angelova2023algorithmic} study settings where individuals retain discretion over whether to accept or reject algorithmic recommendations. In the medical domain, \citet{agarwal2023combining} provide experimental evidence that combining human expertise with AI predictions improves diagnostic accuracy, though the extent of improvement depends critically on how information is communicated and the cognitive effort required from experts.

Another strand of research investigates the cognitive underpinnings of Human–AI interaction. \citet{vasconcelos2023explanations} show that overreliance on AI persists even when predictions are accompanied by explanations, and argue that this is not simply a cognitive bias but rather a strategic choice rooted in cost–benefit reasoning: decision-makers implicitly weigh the cognitive effort of verification against the ease of acceptance. In a similar spirit, \citet{boyaci2024human} use a rational inattention framework (similar to our work) to study how machine predictions affect human decision-makers with limited cognitive capacity. They find that while AI assistance improves accuracy, it can also increase error rates and induce users to exert more cognitive effort.

Closest to our work is \citet{castro2023human}, who study the trade-off between output fidelity and communication cost under single-output recommendations. They show that preference heterogeneity can introduce substantial bias when only one recommendation is presented. In contrast, we argue that offering multiple recommendations can reduce this bias, as it allows users to choose from a broader set and better align outcomes with their preferences. Similarly, \citet{liang2025artificial} analyze delegation to AI “clones” and find that in high-dimensional settings, noise degrades performance and humans may outperform. Our framework shows that when multiple outputs are available, users can mitigate this risk by selecting the best fit, and further, that in high dimensions it is optimal for users to communicate richer information to ensure credible recommendations.

\paragraph{Costly Information Acquisition:} From an economic and information-theoretic perspective, our model builds on the rational inattention (RI) framework \citep{sims2003implications, sims2005rational}, which has become a foundational approach to modeling bounded rationality under costly information processing. The central insight of RI is that decision-makers optimally allocate limited attention across signals, trading off informativeness against the cost of acquiring information. This framework has been applied extensively in macroeconomics and finance, including monetary policy design \citep{sims2006rational, mackowiak2009optimal}, portfolio choice \citep{zhong2022optimal}, and multivariate information acquisition problems \citep{miao2022multivariate}. Closely related are approaches in information theory, such as rate–distortion methods \citep{tishby2010information}, which also study trade-offs between informativeness and cost. We also refer the reader to the review \citep{mackowiak2023rational} for more details. While most prior work applies RI to single-agent decision problems, few works \citep{castro2023human,boyaci2024human}, alongside our own, have used the RI framework to study the cognitive costs and trade-offs in human–AI interaction.

\paragraph{Product Recommendation:} Research on product recommendation has traditionally centered on collaborative filtering \citep{koren2021advances, su2009survey} and content-based methods \citep{thorat2015survey}, with hybrid approaches combining user-item interaction histories, product features, and contextual signals \citep{sarwar2001item, linden2003amazon}. These foundational methods, widely deployed in e-commerce platforms like Amazon and Netflix, rely on structured data to exploit similarities across users or products.
A parallel stream of research in human-computer interaction and information retrieval has focused on the concept of the \textit{selection set}, the curated subset of options from which a user makes their final choice. This literature recognizes that the value of a recommendation system lies not only in the relevance of individual items but also in the composition and size of the presented set. In particular, diversity can improve coverage under preference uncertainty \citep{ziegler2005improving}, while overly large sets risk cognitive overload and degrade decision quality \citep{iyengar2000choice}. This perspective, that the recommendation task fundamentally involves managing a user’s attention budget, directly motivates our theoretical treatment of communication and search costs.

The recent advent of Large Language Models (LLMs) has marked a new phase in recommendation research, introducing powerful new capabilities. LLMs can act as natural-language interfaces, allowing users to express complex preferences conversationally rather than through simple clicks or ratings \citep{zhang2023chatgpt}. Furthermore, pre-trained on vast corpora, they serve as knowledge-rich priors that encode deep semantic relationships, enabling effective zero-shot and few-shot recommendation \citep{bao2024large, wu2024survey}. Architectures such as retrieval-augmented generation (RAG) \citep{lewis2020retrieval} now explicitly separate the retrieval of a candidate set from the generation of the final recommendation, mirroring the two-stage structure of our model. Our contribution complements these developments by offering a theoretical framework that characterizes how users optimally trade off communication effort with reliance on the AI-generated recommendation set, an aspect largely absent from algorithmic studies of LLM-based recommenders.

\section{Model Description}
We consider a market consisting of a continuous space of products, each represented by a feature vector $\boldsymbol \theta \in \mathcal{S}^{d-1}$, where $\mathcal{S}^{d-1}$ is the surface of a unit sphere in $\mathbb R^d$, that is, $\mathcal{S}^{d-1} = \{ \boldsymbol \theta \in \mathbb R^d : \|\boldsymbol \theta \|_2 = 1 \}$. Users are similarly characterized by a preference vector $\mathbf h \in \mathcal{S}^{d-1}$, which encodes their most preferred product profile or \textit{true preferences}. We assume user preferences follow a uniform distribution over $\mathcal{S}^{d-1}$, giving us $\mathbf h\sim p(\mathbf h) = C_d(0)$, where $1/C_d(0)$ is the surface area of $\mathcal{S}^{d-1}$.

The utility that a user derives from selecting a product with feature vector $\boldsymbol \theta$ depends on how well the product matches their preference vector $\mathbf h$, and is given by the dot product between the two vectors, i.e., $u(\boldsymbol \theta, \mathbf  h) = \langle \mathbf h , \boldsymbol \theta \rangle$. Note that the utility of a product with feature $\boldsymbol \theta$ can also be expressed as the negative squared $\ell_2$-distance from the user's preferences $\mathbf h$, i.e., $u(\boldsymbol\theta , \mathbf h) =1 -\frac{1}{2}\|\boldsymbol \theta -\mathbf h\|^2_2$.

\textit{Motivation behind the feature space model:}
In our model, the space of products is represented through a set of underlying features (such as color, or size) that together determine user preferences. We assume that this feature space is characterized by a Pareto frontier, where each point corresponds to a product that is most preferred by some subset of users. Intuitively, any product that is strictly dominated, i.e., less preferred by all users compared to another, is redundant and can be excluded from the space. For a tractable stylized representation, we model this Pareto frontier as a high-dimensional isotropic space $\mathcal{S}^{d-1}$.

A complementary motivation for modeling the feature space as $\mathcal{S}^{d-1}$ arises from a tokenized representation of products and preferences, commonly employed in modern recommendation and language models. In such systems, product descriptions, and similarly, users' preferences, comprising textual, visual, and numerical attributes, are embedded as high-dimensional vectors in a shared semantic space. The relevance of a product to a user’s preferences is then primarily determined by the alignment between their embeddings, typically measured by cosine similarity. Such embedding-based representations form the foundation of large-scale recommendation systems and language models (see, e.g., \citealt{barkan2016item2vec}).

\noindent\textbf{User–Agent interaction:} We model the user–agent interaction as a three-stage process.
\vspace{1em}

\begin{tikzpicture}[
  box/.style={draw, rounded corners=2pt, minimum width=1.4cm, minimum height=0.7cm, align=center, font=\small\bfseries}, arrow/.style={-{Latex[length=2mm]}, thick, shorten >=1pt, shorten <=1pt}, node distance=3.2cm ]

\node[box] (human) {User ($\mathbf{h}$)};
\node[box, right=3.3cm of human] (ai) {Agent};
\node[box, right=5cm of ai] (ri) {User};
\node[right=1.9cm of ri] (choice) {$\boldsymbol\theta^*$};

\draw[arrow] (human) -- node[midway, above, font=\small] {$\mathbf{m} \sim p_\kappa(\mathbf{m}|\mathbf{h})$} (ai);
\draw[arrow] (human) -- node[midway, below, font=\small] {Message} (ai);
\draw[arrow] (ai) -- node[midway, above, font=\small] {$\boldsymbol \theta_1, \dots, \boldsymbol \theta_n \sim q_{\kappa}(\mathbf{h}|\mathbf{m}), \text{ i.i.d.}$} (ri);
\draw[arrow] (ai) -- node[midway, below, font=\small] {Recommendations} (ri);
\draw[arrow] (ri) -- (choice);
\draw[arrow] (ri) -- node[midway, below, font=\small] {Search} (choice);
\end{tikzpicture}
\begin{itemize}
    \item[(i)] \textit{Communication:} The interaction starts with the user transmitting a message (or context) $\mathbf{m}$ to the agent, where the message is generated stochastically given their feature $\mathbf{h}$ via a conditional distribution $p_{\kappa}(\mathbf{m}|\mathbf{h})$. This formulation captures the practical reality that users often find it cumbersome to articulate their complete preferences, as such, the message $\mathbf{m}$ may be incomplete, noisy, or only partially informative about the underlying preference vector $\mathbf{h}$. The stochasticity in $p_{\kappa}(\cdot|\mathbf{h})$ thus reflects both communication noise and inherent ambiguity in self-reported preferences. In this work, we assume that the message provided by the user follows a von Mises Fisher distribution (vMF) \citep{mardia2009directional}, given by
    \begin{equation}
    \label{eq:vMF_dist}
        p_{\kappa}(\mathbf{m}|\mathbf{h}) = C_d(\kappa) \exp \big(\kappa \langle\mathbf h, \mathbf m\rangle\big),
    \end{equation}
    where $C_d(\kappa)$ is the normalization constant, with a known closed form expression \citep[Chapter 9]{mardia2009directional}. The parameter $\kappa$ is typically known as the \textit{concentration parameter} as it governs how closely the user's message aligns with the true preference vector $\mathbf{h}$, where a larger $\kappa$ implies that $\mathbf{m}$ is concentrated closer to $\mathbf{h}$. As such, we refer to $\kappa$ as the \textit{message precision}. When $\kappa=0$, the distribution $p_{0}(\cdot|\mathbf{h})$ matches the uniform distribution $p(\cdot)$.

    \item[(ii)] \textit{Recommendation:} The agent interprets the message provided by the user, and uses Bayes' rule to obtain a posterior $q_\kappa(\mathbf{h}|\mathbf{m})$ using $p(\mathbf{h})$ as the prior, which results in
    $q_\kappa(\mathbf{h}|\mathbf{m}) = C_d(\kappa) \exp\big(\kappa \langle \mathbf{h},\mathbf{m}\rangle\big)$. Using the inferred posterior $q_{\kappa}(\mathbf{h}|\mathbf{m})$, the agent generates a menu of $n$ product recommendations $\{\boldsymbol\theta_1, \dots, \boldsymbol \theta_n\}$. In Section \ref{sec:results_insights}, we study the setting where $\boldsymbol{\theta}_i \sim q_{\kappa}(\cdot|\mathbf{m})$ for all $i\in[n]$, i.i.d. This is a stylized but natural modeling assumption motivated by modern Gen AI–based recommender systems: the agent produces recommendations by sampling from its posterior belief over the user’s preferences. Conceptually, this reflects the fact that autoregressively trained models act as implicit simulators of complex conditional distributions: having internalized rich latent structure during training, they can generate samples from the conditional distribution of “what the user might like” given the prompt, without explicitly constructing or optimizing a posterior.

    \textit{Recommendations from tilted distribution:} In a later part (Section \ref{sec:tilted_dist}), we consider a more sophisticated agent that optimally tilts (modifies) the posterior distribution to maximize the expected utility of the best-performing item in the recommendation menu. This approach explicitly accounts for the fact that the user’s utility is governed by the maximum of the $n< \infty$ recommended items. Consequently, the tilted distribution and the recommendation set size $n$ are jointly optimized to maximize the user’s expected utility.

    \item[(iii)] \textit{Search:} After receiving the recommendations $\{\boldsymbol\theta_1, \dots, \boldsymbol \theta_n\}$, the user evaluates each product to identify the best match based on their true preferences, $\mathbf{h}$, i.e., the user chooses the item
    \begin{align*}
        \boldsymbol \theta^* = \arg\max_{\boldsymbol \theta_i} \ \langle \mathbf h , \boldsymbol \theta_i \rangle.
    \end{align*}
    We assume that the user perfectly identifies the item that maximizes this alignment among the $n$ recommendations.
\end{itemize}

\noindent \textbf{Objective:} We now formalize the utility the user derives from a chosen product and the costs they incur during the interaction process. Throughout the remainder of the paper, we refer to the pair $(\kappa,n)$ as the \textit{interaction policy}.
\begin{itemize}
    \item[(i)]

    \textit{Utility from chosen recommendation:} As mentioned before, given the set of recommendations $\{\boldsymbol\theta_1, \dots, \boldsymbol \theta_n\}$, the user chooses the product that best matches their preferences. As such, the product utility received by the user is given by $\max_i \langle \mathbf h , \boldsymbol \theta_i \rangle$.

    \item[(ii)]
    \textit{Communication cost:}
    We assume that when the agent acquires information (message $\mathbf{m}$) about a user's features $\mathbf{h}$ to update their initial belief (the prior $p(\mathbf{h)}$) to a more informed one (the posterior $q_{\kappa}(\mathbf{h}|\mathbf{m})$), communication cost accrues to the user. We consider the cost to be linearly proportional to the \textit{information gain}, quantified by the KL-divergence between the prior and posterior distributions. As such, the communication cost is $\lambda_c D_{\mathrm{KL}}\big(q_{\kappa}(\mathbf{h}|\mathbf{m}) \| p(\mathbf{h})\big)$, where $\lambda_c$ is the communication cost parameter.

    \item[(iii)]
    \textit{Search cost:}  The search cost represents the effort a user expends to find the best product among a set of $n$ recommendations. Similar to the communication cost, we assume the cost is proportional to the information gain.
    In particular, we assume that at the start of the search process, a user views all $n$ product recommendations as equally likely to be the best, i.e., the user's prior on the index of the best item follows a uniform distribution over the set of indices $[n]$. Once the user determines the best product, their belief becomes a certainty ($\boldsymbol \theta_i$ is the best with probability either $0$ or $1$), and the total information gain during the search process equals the entropy of the uniform distribution over $[n]$, i.e., $\log n$. Motivated by this, we assume the user's search cost is $\lambda_s \log n$, where $\lambda_s$ is the search cost parameter.
\end{itemize}
Overall, the objective of an average user is given by
\begin{align*}
    \mathcal{P}_d(\kappa,n) := \mathbb E_{\mathbf h \sim p(\mathbf h)} \mathbb E_{\mathbf m \sim p_\kappa (\mathbf m | \mathbf{h})}\mathbb E_{\boldsymbol \theta_i \sim q_\kappa (\mathbf h | \mathbf{m}),  \textup{ i.i.d.}}\Big[ \max_{i} \ \langle \mathbf h , \boldsymbol \theta_i \rangle - \lambda_s \log n - \lambda_c D_{\mathrm{KL}}\big(q_{\kappa}(\mathbf{h}|\mathbf{m}) \| p(\mathbf{h})\big) \Big].
\end{align*}
Our objective in this work is to characterize the interaction policy $(\kappa, n)$ that maximizes $\mathcal{P}_d(\kappa,n)$.

\subsection{Decomposition of Preference and Recommendation for a given Message}
\label{sec:decomposition}
Any vector $\mathbf{x} \in \mathcal{S}^{d-1}$ admits a unique decomposition into components parallel and orthogonal to a given unit vector $\mathbf{m} \in \mathcal{S}^{d-1}$. This decomposition is expressed as $\mathbf{x} = \langle \mathbf{x} , \mathbf{m} \rangle \mathbf{m} + \sqrt{1-\langle \mathbf{x} , \mathbf{m} \rangle^2} \mathbf{x}_{\perp}$, where $\mathbf{x}_{\perp} \in \mathcal{S}^{d-1}$ is orthogonal to $\mathbf{m}$ (i.e., $\langle \mathbf{x}_{\perp} , \mathbf{m} \rangle = 0$).  We leverage this to decompose a random variable distributed as per the posterior over user preferences $\overline{\mathbf{h}}$, and the set of recommendations $\{ \boldsymbol\theta_1, \dots, \boldsymbol\theta_n\}$ conditioned on the message $\mathbf{m}$. For notational simplicity (since preferences $\overline{\mathbf{h}}$ lead to the same expected user utility), we drop the bar and denote the posterior preference vector as $\mathbf{h}$.

\textit{Decomposition of posterior on user preferences:} Define the \textit{message fidelity} $W = \langle \mathbf{h}, \mathbf{m} \rangle$, which quantifies the alignment between the preference $\mathbf{h}\sim q_{\kappa}(\cdot|\mathbf{m})$ and the message $\mathbf{m}$. We can write
\begin{equation}
\label{eq:h_decomposition}
    \mathbf{h} = W \mathbf{m} + \sqrt{1-W^2} \mathbf{Y}.
\end{equation}
The vector $\mathbf{Y}$ represents the \textit{uncommunicated component} of $\mathbf{h}$ (the part not captured by the message $\mathbf{m}$), and is orthogonal to $\mathbf{m}$, i.e., $\langle \mathbf{Y} , \mathbf{m} \rangle = 0$. Under the vMF posterior, the random variables $W$ and $\mathbf{Y}$ are independent. Note that a larger message precision $\kappa$ induces stronger concentration of $W$ near $1$. More details on properties of $W$ and $\mathbf{Y}$ are provided in Lemma \ref{lem:prelim_results}.

\textit{Decomposition of recommendations:} Each recommendation $\boldsymbol{\theta}_i$ is drawn independently from the same posterior distribution $q_{\kappa}(\mathbf{h}|\mathbf{m})$. Consequently, we have an analogous decomposition for it
\begin{align}
\label{eq:util_representation}
    \boldsymbol\theta_i = W_i \mathbf{m} + \sqrt{1-W_i^2} \mathbf{Y}_i, \ \text{ and } \ \langle \mathbf{h} , \boldsymbol{\theta}_i \rangle = W W_i + \sqrt{1-W^2} \sqrt{1-W_i^2} X_i,
\end{align}
where $(W_i,\mathbf{Y}_i)$ has the same distribution as $(W,\mathbf{Y})$,  $X_i := \langle \mathbf{Y} , \mathbf{Y}_i \rangle$, and we use $\langle \mathbf{m} , \mathbf{Y} \rangle = \langle \mathbf{m} , \mathbf{Y}_i \rangle = 0$.

\begin{lemma}
\label{lem:prelim_results}
For the message and product distribution, we have the following results:
\begin{enumerate}[label=(\roman*)]
    \item \label{lem:fidelity_dist} \emph{Distribution of $W$ and $\mathbf{Y}$:} The distribution function for $W$ is given by
    \begin{align}
    \label{eq:cos_dist}
        p_{\kappa,d}(w) = \tilde{C}_d(\kappa) e^{\kappa w} \left(1-w^2\right)^{\frac{d-3}{2}}, \ \text{ for } w \in [-1,1].
    \end{align}
    where $\tilde{C}_d(\kappa)$ is the normalization constant. Further, the uncommunicated component $\mathbf{Y}$ is uniformly distributed over the space $\mathcal{S}(\mathbf{m}) = \{ \mathbf y \in \mathcal{S}^{d-1}: \langle \mathbf{m} , \mathbf{y} \rangle = 0\}$.

    \item \label{lem:X_independence}  \emph{Independence of orthogonal components:} The set of random variables $\{X_1, \cdots, X_n\}$ are mutually independent, and follow the distribution $p_{0,d-1}(x)$, where $p_{\kappa,d}(\cdot)$ is as given in Eq. \eqref{eq:cos_dist}.

    \item \label{lem:kl_div_simplifiction} \emph{Communication cost simplification:}  The KL-divergence between $p(\mathbf{h})$ and $q_\kappa(\mathbf{h} | \mathbf{m})$ is
    \begin{align*}
        \EE_{\mathbf{h},\mathbf{m}} \big[D_{\emph{KL}}\big(q_{\kappa}(\cdot | \mathbf{m}) \| p(\cdot)\big) \big] =  \kappa \EE[W] -\log \frac{C_d(0)}{C_d(\kappa)},
    \end{align*}
    where the distribution of $W$ is itself a function of $\kappa$.
\end{enumerate}
\end{lemma}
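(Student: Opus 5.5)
For part (i), I will parametrize the sphere in coordinates adapted to the message. Any $\mathbf h\in\mathcal S^{d-1}$ is written as $\mathbf h = w\mathbf m + \sqrt{1-w^2}\,\mathbf y$ with $w\in[-1,1]$ and $\mathbf y\in\mathcal S(\mathbf m)\cong\mathcal S^{d-2}$. The standard surface-measure formula then gives
\[
d\sigma_{d-1}(\mathbf h) = (1-w^2)^{\frac{d-3}{2}}\,dw\,d\sigma_{d-2}(\mathbf y).
\]
This follows from the slice at height $w$ being a sphere of radius $\sqrt{1-w^2}$ with area $\propto(1-w^2)^{(d-2)/2}$, times the arclength factor $(1-w^2)^{-1/2}$.

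The posterior density $C_d(\kappa)e^{\kappa\langle\mathbf h,\mathbf m\rangle}=C_d(\kappa)e^{\kappa w}$ depends only on $w$. The joint density of $(W,\mathbf Y)$ therefore factorizes as $e^{\kappa w}(1-w^2)^{(d-3)/2}$ times a constant in $\mathbf y$. This yields three conclusions at once:
\begin{itemize}
\item the marginal of $W$ is \eqref{eq:cos_dist};
\item $\mathbf Y$ is uniform on $\mathcal S(\mathbf m)$;
\item $W$ and $\mathbf Y$ are independent.
\end{itemize}

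For part (ii), I will condition on $\mathbf Y$. The $\mathbf Y_i$ are i.i.d. uniform on $\mathcal S(\mathbf m)$ and independent of $\mathbf Y$, since the $\boldsymbol\theta_i$ are independent of $\mathbf h$ given $\mathbf m$ and part (i) applies to each. Given $\mathbf Y=\mathbf y$, each $X_i=\langle\mathbf y,\mathbf Y_i\rangle$ is the inner product of a uniform point on the $(d-1)$-dimensional sphere $\mathcal S(\mathbf m)$ with a fixed unit vector. Applying part (i) with $\kappa=0$ and $d$ replaced by $d-1$ gives density $p_{0,d-1}$. Conditional on $\mathbf Y=\mathbf y$ the $X_i$ are independent, as functions of independent $\mathbf Y_i$, and the conditional joint law $\prod_i p_{0,d-1}(x_i)$ does not depend on $\mathbf y$. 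Integrating out $\mathbf Y$ therefore gives unconditional mutual independence with the stated marginals. This rotational-invariance step is the only place requiring care: one must note that the conditional law is free of $\mathbf y$, so independence is not lost by mixing.

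For part (iii), I will compute the KL divergence directly:
\[
\log\frac{q_\kappa(\mathbf h|\mathbf m)}{p(\mathbf h)} = \log\frac{C_d(\kappa)}{C_d(0)} + \kappa\langle\mathbf h,\mathbf m\rangle.
\]
Integrating against $q_\kappa(\cdot|\mathbf m)$ gives $\kappa\,\EE[W]-\log\frac{C_d(0)}{C_d(\kappa)}$, with $W$ distributed as in part (i). By rotational symmetry this quantity does not depend on $\mathbf m$, so the outer expectation over $(\mathbf h,\mathbf m)$ is trivial.

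The computation is routine. The main points to state cleanly are the Jacobian in part (i) and the conditioning argument in part (ii).
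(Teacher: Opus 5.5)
Your proposal is correct and follows the paper's proof in substance: in (ii) you condition on $\mathbf{Y}$ and use the fact that the conditional law does not depend on $\mathbf{y}$, and in (iii) you compute the log-likelihood ratio directly. The only differences are small. In (i) you derive the density from the slice formula for surface measure, which also gives the independence of $W$ and $\mathbf{Y}$, where the paper cites Mardia--Jupp. In (ii) you work with the full joint conditional law, which is cleaner than the paper's reduction to pairwise independence, since pairwise independence does not by itself imply mutual independence.
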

The results in Lemma \ref{lem:prelim_results} characterize the distributional and information-theoretic properties of the underlying random variables. The proof utilizes standard results in directional statistics (see \citep{mardia2009directional}) regarding the marginal distribution of the vMF distribution, and is provided in Appendix \ref{sec:proof_lem_prelims}. Next, using Eq. \eqref{eq:util_representation} and Lemma \ref{lem:prelim_results}\ref{lem:kl_div_simplifiction}, we have
\begin{align*}
    \mathcal{P}_d(\kappa,n) &= \mathbb E \Big[\max_i \big\{  W W_i + \sqrt{1-W^2} \sqrt{1-W_i^2} X_i \big\}\Big] - \lambda_s \log n - \lambda_c \Big(\kappa \EE[W] - \log \frac{C_d(0)}{C_d(\kappa)}\Big).
\end{align*}

\textbf{Alternative Policies:} To study if there are gains from jointly optimizing communication and search, we compare against policies that use only one of them.
\begin{enumerate}
    \item \textit{Pure Search:} Under this policy, the user does not communicate their preferences, they jump straight to choosing between recommendations received.
    Mathematically, this amounts to setting $\kappa=0$ in the joint optimization problem. The agent’s posterior collapses to the prior, recommendations are samples from $p(\cdot)$ and the expected payoff is $\mathcal{P}_d(0,n)$.

    \item \textit{Pure Communication:} In this case, we consider a setting where the agent is restricted to recommending a single item, i.e., $n=1$. Under this policy, the user relies on providing an informative message to ensure that the agent delivers a single high-quality recommendation. Formally, when $n=1$, the objective simplifies to $\mathcal{P}_d(\kappa,1)$.
\end{enumerate}

\section{Main Results and Insights under Posterior Sampling Scheme}
\label{sec:results_insights}
While the optimization problem $\max_{\kappa, n} \mathcal{P}_d(\kappa, n)$ can be solved numerically, such computation offers limited structural insight. To gain structural insight, we turn to a high-dimensional asymptotic approximation, which serves as a tractable surrogate that captures the dominant interactions between key parameters. It reveals the fundamental scaling laws and operational regimes governing the optimal user–agent interaction.

\subsection{Intuitive Explanation behind High-Dimensional Approximation}
\label{sec:intuitive_high_dim}
In modern recommendation systems, both users and products are described by high-dimensional embeddings. Analyzing the regime where the feature dimension $d$ is large not only mirrors practical settings but also enables powerful asymptotic simplifications to reveal key tradeoffs between communication and search costs. As $d$ grows, the alignments $\langle \mathbf{h}, \boldsymbol{\theta}_i \rangle$'s concentrate sharply around their mean. Deviations of constant order away from this typical value occur with probabilities that decay exponentially in $d$. These rare but utility-relevant deviations are naturally characterized by a Large Deviation Principle (LDP) with an associated rate function that quantifies the exponential cost of achieving a given alignment level relative to its typical value. This perspective allows us to replace the stochastic maximization over a finite set of randomly generated recommendations with a deterministic optimization problem defined by the LDP rate function. In particular, the maximum utility is governed by the most likely extreme alignment achievable at exponential scale, rather than by typical fluctuations. The resulting approximation captures the asymptotically dominant alignment outcomes that govern the maximum utility in high dimensions.

\paragraph{Intuition behind scaling of $\kappa$ and $n$:} In order to conduct the high-dimensional approximation for the optimization problem $\max_{\kappa, n} \mathcal{P}_d(\kappa, n)$, we first examine the product utility received by the user as $d\rightarrow\infty$. Consider the alignment of a recommendation $\boldsymbol \theta_1 \sim q_{\kappa}(\cdot|\mathbf{m})$ with respect to the message $\mathbf{m}$, denoted by $W_1=\langle \boldsymbol \theta_1, \mathbf{m}\rangle$. The induced one-dimensional density of $W_1$ is $p_{\kappa,d}(w) \propto e^{\kappa w} \left(1-w^2\right)^{\frac{d-3}{2}}$. The exponential term $e^{\kappa w}$ pulls mass towards $w=1$ when $\kappa$ is large, while the spherical term $\left(1-w^2\right)^{\frac{d-3}{2}}$ pulls mass toward $w=0$ when $d$ is large. If $\kappa$ is fixed while $d\rightarrow\infty$, the spherical term dominates and $w$ tends to $0$. On the other hand, if $\kappa$ grows faster than $d$, the exponential term $e^{\kappa w}$ dominates and $w$ concentrates near $1$. In contrast to the two extremes, the exponential and the spherical are in balance when $\kappa$ scales linearly with $d$. To capture this, we set $\kappa=\frac{\rho}{1-\rho^2}(d-3)$ in the analysis (see Proposition \ref{prop: utility_approx}). Here $\rho$ turns out to be the mode of the distribution $p_{\kappa,d}(w)$, and $W, W_i \sim p_{\kappa,d}(w)$ concentrate at $\rho$ as $d \to \infty$.

Next, consider $n$ candidate products. The LDP implies that the probability of observing an unusually high alignment for any single product decays exponentially in $d$, say $e^{-d\alpha}$ for some $\alpha > 0$ for a  given degree of alignment. When searching over many products, the most preferred one will correspond to such a rare alignment, but to have a meaningful chance of observing it, the number of candidates must grow fast enough to offset this exponential decay. In particular, we require $ne^{-d\alpha}$ to remain non-trivial, for some $\alpha>0$, that is $n \propto \exp(d\alpha)$. Intuitively, exponential in $\Theta(d)$ recommendations are needed to address preferences which are unknown in $\Theta(d)$ dimensions.

\paragraph{Intuition behind scaling of cost parameters:} We next analyze the communication and search costs. For $\kappa=\frac{\rho}{1-\rho^2}(d-3)$, the expected information gain, measured by the KL-divergence,
grows linearly with $d$ (see Proposition \ref{prop: kl_approx}). Similarly, for $n\propto \exp(d\alpha)$, the search cost, proportional to $\log n$, also scales linearly in $d$. To keep the overall optimization meaningful in this regime, the communication and search cost coefficients $\lambda_c$ and $\lambda_s$ must scale inversely with $d$, i.e., $\propto 1/d$. Note that if the cost structure changes, the scaling must adjust accordingly. For example, if search cost were proportional to $n$, then $\lambda_s$ would need to decay exponentially, i.e., $\propto e^{-\alpha d}$.

\subsection{Theoretical Results and Insights for \texorpdfstring{$d\rightarrow \infty$}{d->infty}}
\label{sec:limiting_d_results}

We start by providing a high-dimensional approximation for $\mathcal{P}_d(\kappa,n)$.
\begin{proposition}[Product utility]
\label{prop: utility_approx}
Suppose $\rho \in [0,1)$ and $\alpha \geq 0$ are fixed and define:
\begin{align*}
    f(\rho,\alpha) := \max_{w, x \in (-1,1)} & \ \ \ \rho w + \sqrt{1-\rho^2} \sqrt{1-w^2} x \ \text{ such that } I_{\rho}(w,x) \leq \alpha,
\end{align*}
where $I_{\rho}(w,x)$ is the large deviations rate function, given by
\begin{align}
\label{eq:rate_function}
    I_{\rho}(w,x) = -\frac{\rho(w-\rho)}{1-\rho^2} - \frac{1}{2} \log(1-w^2)-\frac{1}{2} \log(1-x^2) + \frac{1}{2} \log(1-\rho^2).
\end{align}
Suppose the message precision $\kappa = \frac{\rho}{1-\rho^2}(d-3)$ and the recommendation set size $n = \lfloor e^{d\alpha}\rfloor$, then, the corresponding expected utility received by the user satisfies
\begin{align*}
    \mathbb E \left[\max_i \Big\{  W W_i + \sqrt{1-W^2} \sqrt{1-W_i^2} X_i \Big\}\right] = f(\rho,\alpha) + \mathcal{E}_{\ref{prop: utility_approx}}, \ \text{ where }  \ |\mathcal{E}_{\ref{prop: utility_approx}}| \leq K_{\ref{prop: utility_approx}} \sqrt{\frac{\log d}{d}},
\end{align*}
where $W, W_i$ and $X_i$'s are as in Eq. \eqref{eq:util_representation}, and the constant $K_{\ref{prop: utility_approx}}$ depends on $\rho$ and $\alpha$.
\end{proposition}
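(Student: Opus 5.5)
The plan is to prove matching upper and lower bounds on the expected maximum. The first step is to condition on $W$. By Lemma~\ref{lem:prelim_results}, $W$, the $W_i$'s and the $X_i$'s are mutually independent. The density $p_{\kappa,d}(w)\propto \exp\{(d-3)[\tfrac{\rho}{1-\rho^2}w+\tfrac12\log(1-w^2)]\}$ is log-concave with mode exactly at $\rho$. A Laplace/Gaussian tail bound therefore gives $|W-\rho|\le K\sqrt{\log d/d}$ with probability $1-O(d^{-1})$. Since the utility lies in $[-1,1]$, the complementary event costs only $O(1/d)$. The map $w\mapsto \rho w+\sqrt{1-\rho^2}\sqrt{1-w^2}x$ is Lipschitz in $\rho$ uniformly on compact subsets of $[0,1)$. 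Hence replacing $W$ by $\rho$ costs $O(\sqrt{\log d/d})$, and it suffices to study $M_n:=\max_i g(W_i,X_i)$ with $g(w,x)=\rho w+\sqrt{1-\rho^2}\sqrt{1-w^2}x$.

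The second step is the single-sample LDP. The joint density of $(W_i,X_i)$ is proportional to
\[
e^{\kappa w}(1-w^2)^{\frac{d-3}{2}}(1-x^2)^{\frac{d-4}{2}}.
\]
After normalizing, this equals $\exp\{-(d-3)I_\rho(w,x)+O(\log d)\}$. The normalizer follows from a Laplace evaluation at the mode $(\rho,0)$, where $I_\rho(\rho,0)=0$; this explains the constant $\tfrac12\log(1-\rho^2)$ in $I_\rho$. Note that $I_\rho$ is convex, being a sum of a linear term and $-\log$ terms. For any closed set $A$ with nonempty interior in $(-1,1)^2$, I would then obtain uniform bounds
\[
\Pr((W_i,X_i)\in A)=\exp\{-d\inf_A I_\rho+O(\log d)\}
\]
by integrating the explicit density. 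The upper bound bounds the density by its supremum times the area. The lower bound integrates over a box of side $d^{-1}$ around a near-minimizer.

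The third step sandwiches $M_n$ around $f(\rho,\alpha)$. Set $\delta=K\sqrt{\log d/d}$.

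For the \emph{upper tail}, let $A_t=\{g\ge t\}$. For $t=f(\rho,\alpha)+\delta$, strict monotonicity of the constrained value in $\alpha$ gives $\inf_{A_t}I_\rho\ge \alpha+c\,\delta$. Near $t$, the value function $\alpha\mapsto f(\rho,\alpha)$ has a finite positive derivative; this is where $\rho<1$ and the interiority of the optimizer are used. A union bound then gives
\[
\Pr(M_n\ge t)\le n e^{-d(\alpha+c\delta)+O(\log d)}=e^{-c\sqrt{d\log d}+O(\log d)}\to0,
\]
which is faster than any power once $K$ is large.

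For the \emph{lower tail}, take a small box around the point $(w^*,x^*)$ that is the optimizer for a level $\alpha-c\delta$. Each sample lands in it with probability at least $e^{-d(\alpha-c'\delta)-O(\log d)}$. Independence then gives
\[
\Pr(M_n< f-\delta)\le(1-p)^n\le \exp(-e^{c''\sqrt{d\log d}}),
\]
which is negligible.

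Boundedness of $g$ converts these tail bounds into $|\EE M_n-f|\le K\sqrt{\log d/d}$. The floor in $n=\lfloor e^{d\alpha}\rfloor$, the $d-3$ versus $d$ discrepancy, and the $d-4$ exponent all shift the exponent by $O(1/d)$ relative, so they are absorbed. The edge case $\alpha=0$ needs separate treatment: there $f(\rho,0)=\rho^2$, the typical value. It follows directly from concentration of $W_1$ at $\rho$ and of $X_1$ at $0$, together with the upper-tail bound for $n=1$.

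The main obstacle is the quantitative sensitivity of $f(\rho,\cdot)$ in step three: turning an $O(\log d/d)$ error in the exponent into an $O(\sqrt{\log d/d})$ error in the value. The first approach I would try is to solve the Lagrangian explicitly. The KKT conditions give $x^*$ and $w^*$ in closed form in terms of a multiplier $\mu(\alpha)>0$. From these I would verify that $\partial_\alpha f=\mu$ is bounded away from $0$ and $\infty$ locally. Near $\alpha=0$, where $f-\rho^2\sim\sqrt{\alpha}$, this derivative blows up. That is precisely where the $\sqrt{\cdot}$ loss arises, and it is why the stated rate is $\sqrt{\log d/d}$ rather than $\log d/d$.
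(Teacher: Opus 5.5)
Your proposal is correct and follows essentially the same route as the paper. Both replace $W$ by its mode $\rho$ using concentration, derive two-sided single-sample large-deviation bounds from the explicit density (normalizer of order $d$), sandwich $\max_i g(W_i,X_i)$ by a union bound above and independence below, convert exponent slack into value slack via local Lipschitz continuity of $f(\rho,\cdot)$ at fixed $\alpha>0$, and treat $\alpha=0$ separately.

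One correction to your closing diagnosis. At fixed $\alpha>0$ the sandwich alone already gives $\tilde O(1/d)$: the paper offsets the exponent by $O(\log d/d)$ rather than offsetting the value by $\sqrt{\log d/d}$. So the stated $\sqrt{\log d/d}$ rate comes from the $d^{-1/2}$ fluctuations of $W$ in your first step. The blow-up of $\partial_\alpha f$ as $\alpha\to 0$ only affects how the constant $K$ depends on $\alpha$, which the statement permits.
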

Proposition \ref{prop: utility_approx} provides a high-dimensional approximation of the expected utility an average user receives, and shows that the approximated expected utility can be obtained by solving a deterministic optimization problem. The parameter $\rho$ denotes the mode of the distribution of $W$, and as the dimension $d$ increases, the distribution of $W$ concentrates sharply around this mode. The result in Proposition \ref{prop: utility_approx} is derived using a LDP for the set of sample pairs $\{(W_i, X_i)\}_{i\in[n]}$. The LDP implies that as the dimension grows, the set of points $\{(W_i, X_i)\}_{i\in[n]}$ lies, with high probability, within the effective support of the joint distribution of $(W_i,X_i)$. This effective support is characterized by sub-level sets of the associated rate function $I_{\rho}(w,x)$ (the rate function corresponding to the distribution of $(W_i, X_i)$), and its boundary determines the extreme realizations that govern the maximum utility. The approximation error of $\tilde{O}\big(d^{-\frac{1}{2}}\big)$ arises from residual stochastic fluctuations in the random variable $W$, where we use the fact that the standard deviation of $W$ is of order $d^{-\frac{1}{2}}$. The proof of Proposition \ref{prop: utility_approx}  is provided in Appendix \ref{sec:proof_prop_util_approx}.

\begin{proposition}[Communication cost]
\label{prop: kl_approx}
Suppose $\rho$ is fixed and let $\kappa = \frac{\rho}{1-\rho^2}(d-3)$. We have,
\begin{align*}
\EE_{\mathbf{h},\mathbf{m}} \big[D_{\mathrm{KL}}\big(q_{\kappa}(\cdot|\mathbf{m}) \| p(\cdot)\big) \big] = \frac{d-2}{2} \log \frac{1}{1-\rho^2} + \mathcal{E}_{\ref{prop: kl_approx}},
\end{align*}
where $|\mathcal{E}_{\ref{prop: kl_approx}}(\rho)| \leq K_{\ref{prop: kl_approx}} \sqrt{\frac{d}{(1-\rho^2)^3}}$, and $K_{\ref{prop: kl_approx}}$ is a universal constant.
\end{proposition}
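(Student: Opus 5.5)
The plan is to start from Lemma~\ref{lem:prelim_results}\ref{lem:kl_div_simplifiction}:
\[
\EE\big[D_{\mathrm{KL}}(q_\kappa(\cdot|\mathbf m)\|p(\cdot))\big]=\kappa\EE[W]-\log\frac{C_d(0)}{C_d(\kappa)} .
\]
I would estimate the two terms separately, using a Laplace-type analysis of the one-dimensional density \eqref{eq:cos_dist}.

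\textbf{Step 1 (ratio of normalizing constants).} Integrating the vMF density over the sphere with the decomposition \eqref{eq:h_decomposition}, and using that $\mathbf Y$ is uniform on $\mathcal S(\mathbf m)$, gives
\[
\frac{C_d(0)}{C_d(\kappa)}=\frac{\int_{-1}^1 e^{g(w)}\,dw}{\int_{-1}^1 (1-w^2)^{\frac{d-3}{2}}\,dw},
\qquad
g(w):=\kappa w+\tfrac{d-3}{2}\log(1-w^2).
\]
The function $g$ is strictly concave. With $\kappa=\frac{\rho}{1-\rho^2}(d-3)$ it is maximized exactly at $w=\rho$, where
\[
g(\rho)=(d-3)\Big[\tfrac{\rho^2}{1-\rho^2}+\tfrac12\log(1-\rho^2)\Big].
\]
For the numerator I would prove two-sided Laplace bounds.
\begin{itemize}
\item \emph{Upper bound.} Since $-g''(w)=(d-3)\frac{1+w^2}{(1-w^2)^2}\ge d-3$, we have $g(w)\le g(\rho)-\frac{d-3}{2}(w-\rho)^2$. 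This gives $\int e^{g}\le e^{g(\rho)}\sqrt{2\pi/(d-3)}$.
\item \emph{Lower bound.} Restrict to a window $|w-\rho|\le c(1-\rho)/\sqrt d$. On this window $-g''\lesssim d/(1-\rho^2)^2$, so $\int e^g\ge e^{g(\rho)}\cdot\Omega\big((1-\rho^2)/\sqrt d\big)$.
\end{itemize}
The denominator is $\Theta(d^{-1/2})$ by the same argument at $\rho=0$. Hence
\[
\log\frac{C_d(0)}{C_d(\kappa)}=g(\rho)+O\Big(1+\log\tfrac{1}{1-\rho^2}\Big).
\]

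\textbf{Step 2 ($\kappa\EE[W]$).} Write $\kappa\EE[W]=\kappa\rho+\kappa(\EE[W]-\rho)$. The density of $W$ is log-concave with $-g''\ge d-3$, so the Brascamp--Lieb inequality gives $\mathrm{Var}(W)\le 1/(d-3)$. For a log-concave law the distance between the mean and the mode $\rho$ is at most a constant times the standard deviation. Therefore
\[
\kappa|\EE[W]-\rho|\lesssim \frac{\rho\,d}{1-\rho^2}\cdot\frac{1}{\sqrt d}=O\Big(\frac{\sqrt d}{1-\rho^2}\Big).
\]
This is within the claimed $O\big(\sqrt{d/(1-\rho^2)^3}\big)$.

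\textbf{Step 3 (assembly).} Combining the two steps:
\[
\kappa\rho-g(\rho)=\frac{(d-3)\rho^2}{1-\rho^2}-\frac{(d-3)\rho^2}{1-\rho^2}+\frac{d-3}{2}\log\frac{1}{1-\rho^2}
=\frac{d-2}{2}\log\frac{1}{1-\rho^2}-\frac12\log\frac{1}{1-\rho^2}.
\]
The leftover $\tfrac12\log\frac{1}{1-\rho^2}$ and the $O(1+\log\frac{1}{1-\rho^2})$ term from Step 1 are both dominated by $\sqrt{d/(1-\rho^2)^3}$, using $\log x\le x$. This yields the proposition with a universal constant.

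The main obstacle I expect is making the lower Laplace bound in Step 1 uniform in $\rho$ as $\rho\to1$. There the window must stay inside $(-1,1)$ and the curvature blows up like $(1-\rho)^{-2}$. I would handle this by choosing the window width proportional to $(1-\rho)/\sqrt d$ and checking that, for $|w-\rho|\le (1-\rho)/2$, one has $1-w^2\ge c(1-\rho^2)$, so the bound on $-g''$ holds on the whole window. The resulting loss is only logarithmic in $1/(1-\rho^2)$, which the generous error allowance absorbs.
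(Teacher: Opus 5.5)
Your proposal is correct, and its skeleton is the paper's. The paper also starts from Lemma~\ref{lem:prelim_results}\ref{lem:kl_div_simplifiction}. It sandwiches the normalizer between $\frac{K(1-\rho^2)}{\sqrt{d^\star}}e^{g(\rho)}$ and $\sqrt{2\pi/d^\star}\,e^{g(\rho)}$ (Lemma~\ref{lem:w_concentration}, whose lower bound uses a window of width of order $(1-\rho^2)/\sqrt{d}$ next to $\rho$, much like yours). It then uses the same cancellation $\kappa\rho-g(\rho)=\frac{d-3}{2}\log\frac{1}{1-\rho^2}$.

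The one genuine difference is your Step 2. The paper bounds $\mathbb{E}|W-\rho|$ through the second moment about $\rho$. It dominates the numerator by the Gaussian $e^{g(\rho)-d^\star(w-\rho)^2/2}$ and divides by the \emph{lower} Laplace bound on the normalizer, which loses a factor $(1-\rho^2)^{-1}$ in that second moment. This gives $\mathbb{E}|W-\rho|=O\big(1/\sqrt{d^\star(1-\rho^2)}\big)$, and it is exactly where the $(1-\rho^2)^{-3/2}$ in the statement comes from.

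You instead take the variance bound $1/(d-3)$, which is already implied by the sub-Gaussian MGF bound in Lemma~\ref{lem:w_concentration}. You then pass from mean to mode with the classical inequality $|\mathbb{E}W-\rho|\le\sqrt{3}\,\mathrm{sd}(W)$ for unimodal laws. That step is uniform in $\rho$, so you get $\kappa|\mathbb{E}W-\rho|=O(\sqrt{d}/(1-\rho^2))$, a factor $(1-\rho^2)^{-1/2}$ sharper than the statement requires. The cost is importing an outside inequality. The paper's route is self-contained, and it produces a bound on $\mathbb{E}|W-\rho|$ (not just $|\mathbb{E}W-\rho|$) that it reuses in Proposition~\ref{prop: utility_approx}.

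Your Step 3 bookkeeping is also more accurate than the paper's. After shifting by $\frac12\log(1-\rho^2)$, the paper asserts a two-sided bound $\pm K_0$, silently dropping a $\frac12\log\frac{1}{1-\rho^2}$ term. You keep that term and absorb it via $\log x\le x$; this is harmless for the final statement, but your version is the correct one.
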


Proposition \ref{prop: kl_approx} provides an approximation of the KL divergence between the prior on the user's preference and the posterior given the message. It is important to note that the KL divergence scales linearly in the dimension $d$. This makes sense: Recall that the entropy of prior preference distribution (i.e., uniform over $\mathcal{S}^{d-1}$) scales linearly with the dimension $d$. As such, the amount of information that the user needs to provide to meaningfully reduce the agent's uncertainty must also grow proportionally. Conceptually, this linear scaling implies that the user effectively pays a constant price for each dimension of preference they clarify for the agent.

Combining Propositions \ref{prop: utility_approx} and \ref{prop: kl_approx}, we define the following optimization problem, which allows us to state our first theorem:
\begin{align}
\label{eq:joint_opt}
\text{OPT}_{\text{Joint}} := \max_{\rho \in [0,1), \alpha \geq 0} \ \ \Big \{ f(\rho,\alpha) - c_s \alpha + \frac{1}{2} c_c \log (1-\rho^2) \Big\},
\end{align}
where $c_s  \text{ and } c_c$ are constants, and the terms $\rho$, $\alpha$ and $f(\rho,\alpha)$ are as defined in Proposition \ref{prop: utility_approx}.

\begin{theorem}[Asymptotics of Interaction Policy]
\label{thm:asym_opt}
Let $c_s, c_c > 0$ be fixed constants such that the cost coefficients scale as $\lambda_s = \frac{c_s}{d} + o(d^{-1})$ and $\lambda_c = \frac{c_c}{d} + o(d^{-1})$. Let $\rho^*(c_s, c_c)$ and $\alpha^*(c_s,c_c)$ denote the optimal solution to $\mathrm{OPT}_{\mathrm{Joint}}$ given $(c_s, c_c)$, and construct an interaction policy $(\kappa^*_\infty, n^*_\infty)$ as:
\begin{align}
\label{eq:high_dim_interaction_policy}
    \kappa^*_\infty =  \frac{\rho^*(c_s,c_c)}{1-(\rho^*(c_s,c_c))^2} (d-3), \quad \text{ and } \quad n^*_\infty = \lfloor e^{d \alpha^*(c_s,c_c)} \rfloor.
\end{align}
We have the following results:
\begin{enumerate}
    \item \emph{Convergence:} The expected payoff satisfies,
    \begin{equation*}
        \lim_{d\rightarrow \infty} \max_{\kappa , n} \mathcal{P}_d(\kappa,n) = \lim_{d \rightarrow \infty} \mathcal{P}_d(\kappa^*_\infty,n^*_\infty) = \mathrm{OPT}_{\mathrm{Joint}}.
    \end{equation*}
    \item \emph{Monotonicity:} The optimal message precision $\rho^*(c_s, c_c)$ and set size exponent $\alpha^*(c_s,c_c)$ satisfy,
    \begin{align*}
        \frac{\partial \rho^*(c_s, c_c)}{ \partial c_c} \leq 0, &&  \frac{\partial \rho^*(c_s, c_c)}{ \partial c_s} \geq 0, && \frac{\partial \alpha^*(c_s, c_c)}{ \partial c_c} \geq 0, && \frac{\partial \alpha^*(c_s, c_c)}{ \partial c_s} \leq 0
    \end{align*}
    \item \emph{Search-Only Regime:} For every fixed $c_s>0$, there exists a threshold $\bar{c}_c(c_s) > 0$ such that for all $c_c > \bar{c}_c(c_s)$, the optimal policy involves no communication, that is, $\rho^*(c_s, c_c)=0$.
\end{enumerate}
\end{theorem}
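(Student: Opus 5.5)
The plan is to handle the three parts separately, with the bulk of the work in the convergence statement. Write $\phi(\rho) := \frac12\log\frac{1}{1-\rho^2}$ and $G(\rho,\alpha) := f(\rho,\alpha) - c_s\alpha - c_c\phi(\rho)$. Note that $\kappa \mapsto \rho$, given by $\kappa = \frac{\rho}{1-\rho^2}(d-3)$, is a bijection $[0,\infty)\to[0,1)$, so every policy $(\kappa,n)$ corresponds to a pair $(\rho,\alpha)$ with $\alpha = \log n / d$.

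\emph{Convergence: lower bound.} At the policy $(\kappa^*_\infty,n^*_\infty)$ I would combine Proposition \ref{prop: utility_approx} and Proposition \ref{prop: kl_approx} with Lemma \ref{lem:prelim_results}\ref{lem:kl_div_simplifiction}. This gives
\[
\mathcal{P}_d(\kappa^*_\infty,n^*_\infty) = f(\rho^*,\alpha^*) - \lambda_s \log\lfloor e^{d\alpha^*}\rfloor - \lambda_c\Big(\tfrac{d-2}{2}\log\tfrac{1}{1-\rho^{*2}} + O(\sqrt d)\Big) + O\big(\sqrt{\log d/d}\big).
\]
Since $\lambda_s = c_s/d + o(1/d)$ and $\lambda_c = c_c/d + o(1/d)$, the right side equals $\mathrm{OPT}_{\mathrm{Joint}} + o(1)$. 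This uses $\rho^*<1$, which holds because $\phi(\rho)\to\infty$ as $\rho\to1$ while $f\le 1$. Hence $\liminf_d \max_{\kappa,n}\mathcal{P}_d \ge \mathrm{OPT}_{\mathrm{Joint}}$.

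\emph{Convergence: upper bound (main obstacle).} The issue is uniformity over all $(\kappa,n)$, because the constants in both propositions depend on $(\rho,\alpha)$. I would proceed in three steps.
\begin{enumerate}
\item Compactify. Utility is at most $1$ and $\mathcal{P}_d(0,1)\ge 0$. So any policy with $\lambda_s\log n > 2$ is suboptimal, which confines attention to $\alpha\le\bar\alpha \approx 2/c_s$.
\item Bound $\rho$ away from $1$. The expected KL equals the mutual information between $\mathbf h$ and $\mathbf m$, which is nondecreasing in $\kappa$. Applying Proposition \ref{prop: kl_approx} at a fixed $\bar\rho<1$ then lower bounds the cost for every $\kappa \ge \kappa(\bar\rho)$. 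Choosing $\bar\rho$ with $c_c\phi(\bar\rho) > 3$ rules out $\rho>\bar\rho$.
\item Discretize $[0,\bar\rho]\times[0,\bar\alpha]$ into a finite grid. Expected utility is monotone in $n$, so it suffices to control the $n$ direction at grid points. In the $\rho$ direction I would show that the expected utility is Lipschitz in $\rho$ uniformly in $d$. To get this, couple $W_i$ across nearby $\kappa$ via the monotone likelihood ratio property of $p_{\kappa,d}$, or else prove directly that $K_{\ref{prop: utility_approx}}$ is bounded on compacts. I would also use continuity of $f$.
\end{enumerate}
Together these give $\limsup_d \max_{\kappa,n}\mathcal{P}_d \le \mathrm{OPT}_{\mathrm{Joint}}$.

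\emph{Monotonicity.} I would apply Topkis' theorem to $G$ in the variables $(\rho,-\alpha)$ with parameter $(-c_c,c_s)$. The term $-c_c\phi(\rho)$ has increasing differences in $(\rho,-c_c)$ because $\phi$ is increasing. The term $-c_s\alpha = c_s(-\alpha)$ has increasing differences in $(-\alpha,c_s)$. What remains is to show $\partial^2 f/\partial\rho\,\partial\alpha \le 0$, i.e.\ that communication and search are substitutes. By the envelope theorem, $\partial f/\partial\alpha = \mu(\rho,\alpha)$, the Lagrange multiplier of the constraint $I_\rho(w,x)\le\alpha$. So I would solve the KKT system explicitly and check that $\mu$ is decreasing in $\rho$. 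This computation is the second delicate point. If it fails globally, I would restrict it to the region containing the optimizers.

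\emph{Search-only regime.} Fix $c_s$, so that $\alpha^*\le\bar\alpha$. At $\rho=0$ the maximizer of $f(0,\alpha)$ has $w=0$. Moreover $\partial_\rho$ of both the objective $\rho w+\sqrt{1-\rho^2}\sqrt{1-w^2}\,x$ and the constraint $I_\rho$ vanishes at $(\rho,w)=(0,0)$. The envelope theorem therefore gives $\partial_\rho f(0,\alpha)=0$. A second-order bound then yields $f(\rho,\alpha)-f(0,\alpha)\le K\rho^2$ for small $\rho$, uniformly in $\alpha\le\bar\alpha$. Since $f\le1$, the same inequality extends to all $\rho\in[0,1)$ after enlarging $K$. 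Combining this with $\phi(\rho)\ge\rho^2/2$, any $c_c>\bar c_c:=2K$ gives $G(\rho,\alpha)<G(0,\alpha)$ for every $\rho>0$. Hence $\rho^*=0$.
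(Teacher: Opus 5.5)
Your proposal is sound in outline, and it takes a different route from the paper in all three parts.

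\textbf{Convergence.} The paper proves only pointwise convergence of $\mathcal{P}_d$ at fixed $(\rho,\alpha)$. It then asserts uniform convergence and confines only the \emph{limiting} optimizer to a compact set. You correctly identify the uniform $\limsup$ bound over all finite-$d$ policies as the real issue.

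Your confinement of $\rho$ is cleaner than the paper's. The proof of Theorem~\ref{thm: perf_gap} applies Proposition~\ref{prop: kl_approx} at a $d$-dependent $\tilde\rho$ whose error term is not controlled a priori. The monotonicity you invoke does hold: the expected KL equals $\kappa\Psi'(\kappa)-\Psi(\kappa)+\Psi(0)$ with $\Psi=-\log C_d$, and its $\kappa$-derivative is $\kappa\,\mathrm{Var}(W)\ge 0$.

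You can drop the grid-and-Lipschitz step entirely. Only the upper bound must be uniform, and each of its ingredients already is:
\begin{itemize}
\item the $W\to\rho$ replacement costs at most $3\bar K/(\sqrt{d^\star}(1-\bar\rho^2))$ for every $n$;
\item the constant $K_1$ in Claim~\ref{claim:g_upper_bound} is universal;
\item the error in Proposition~\ref{prop: kl_approx} is uniform on $[0,\bar\rho]$.
\end{itemize}
Together these give $\mathcal{P}_d(\kappa,n)\le f(\rho,\alpha+2\log d^\star/d^\star)-c_s\alpha-c_c\phi(\rho)+o(1)$ uniformly on $[0,\bar\rho]\times[0,\bar\alpha]$. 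Continuity of $f$ then finishes the argument.

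\textbf{Monotonicity.} The paper differentiates the first-order conditions at an assumed interior optimum with negative definite Hessian. Topkis needs neither interiority, so it also covers $\rho^*=0$, nor differentiability or uniqueness of the optimizer. Both arguments, however, rest on the same substitutes property $\partial^2 f/\partial\rho\,\partial\alpha\le 0$. The paper only asserts this on intuitive grounds, so your plan to show that the multiplier $\partial_\alpha f$ decreases in $\rho$ is the step that genuinely remains. It is tractable: minimizing $I_\rho$ over $w$ at a fixed utility level gives an explicit one-dimensional rate function whose inverse is $f(\rho,\cdot)$.

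\textbf{Search-only regime.} Here the paper's route is shorter and sharper. Because the $w$-part of $I_\rho$ is nonnegative, $f(\rho,\alpha)\le\sqrt{1-(1-\rho^2)e^{-2\alpha}}$, with equality at $\rho=0$. The substitution $z=(1-\rho^2)e^{-2\alpha}$ from the tilted analysis then shows $\rho^*=0$ whenever $c_c>c_s$, so one may take $\bar c_c(c_s)=c_s$.

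Your local estimate $f(\rho,\alpha)-f(0,\alpha)\le K\rho^2$ gives only a non-explicit threshold. It must also hold uniformly down to $\alpha=0$, where the envelope argument breaks down. There the feasible set collapses to the point $(\rho,0)$, at which $\nabla I_\rho=0$, and the multiplier blows up like $\alpha^{-1/2}$. The estimate is still true in that limit, but it needs its own argument.
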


Theorem \ref{thm:asym_opt} delivers three key insights into the structure of optimal user–agent interaction in high dimensions. First, the convergence of $\max_{\kappa , n} \mathcal{P}_d(\kappa,n)$ shows that, despite the inherent stochasticity of the recommendation process, the complex finite-dimensional problem admits a deterministic characterization in the limit. This validates the joint optimization problem $\mathrm{OPT}_{\mathrm{Joint}}$ as a tractable proxy for system design when the number of features is large, and allows us to analytically understand the fundamental trade-offs in communication and search. Second, the monotonicity properties reveal a clear and intuitive substitution pattern between communication and search. As communication becomes more expensive, users optimally reduce message precision and compensate by relying on larger recommendation sets, whereas increases in search costs lead to more informative communication and smaller recommendation set size. Finally, the existence of a search-only regime underscores the limits of communication in high-dimensional settings. When communication is sufficiently costly relative to search, the user optimally withholds information and delegates all effort to exploration. The proof of Theorem \ref{thm:asym_opt} is provided in Appendix \ref{app:proof_thm_asym_opt}.

\subsubsection{Asymptotic Regimes of Operation}
\label{sec:asymptotic_regimes}

Our analysis characterizes the asymptotic structure of optimal user–agent interaction under communication and search costs that scale as $\lambda_s = c_s/d$ and $\lambda_c = c_c/d$ with fixed constants $(c_s,c_c)$. Under this specification, the interaction admits a well-defined asymptotic limit characterized by the optimization problem OPT$_{\text{Joint}}$. Varying $(c_s,c_c)$ induces qualitatively distinct regimes of operation with a phase transition between them.

\paragraph{Hybrid Regime (and Joint Optimization)}

The hybrid regime and the need for joint optimization arises when both scaled costs $c_s$ and $c_c$ are finite and of comparable magnitude. More specifically, when $c_c < \bar c_c(c_s)$, where $\bar c_c(c_s)$ is a threshold that depends on the communication cost parameter $c_s$, the optimal solution to OPT$_{\text{Joint}}$ involves strictly positive values of both the message precision parameter $\rho^*(c_s,c_c)$ and the recommendation set exponent $\alpha^*(c_s,c_c)$. This corresponds to a hybrid interaction policy in which the user provides partial but informative communication, while the agent supplies a recommendation set whose size grows exponentially with $d$. This hybrid regime highlights that in complex environments, optimal performance requires jointly leveraging communication and search, rather than relying exclusively on either mechanism. The optimal values $\rho^*(c_s,c_c)$ and $\alpha^*(c_s,c_c)$ characterize the asymptotic scaling of the optimal message precision $\kappa$ and recommendation set size $n$, respectively. The underlying intuition for these scaling laws was developed in Section~\ref{sec:intuitive_high_dim}.

A natural illustration of the hybrid regime arises in everyday \textit{online shopping} for differentiated products such as electronics, apparel, or home goods, on e-commerce websites such as Amazon. In these settings, customers often find it cognitively costly to articulate all relevant preferences (e.g., color, brand affinity, or design features) that jointly determine the best product fit. At the same time, the sheer scale of available options renders exhaustive search impractical. Consequently, effective interaction requires combining both mechanisms: users convey coarse but informative preference signals, while the agent presents a carefully sized set of recommendations that spans the remaining uncertainty, leading to a hybrid policy.

\paragraph{Search-Only Regime
} When communication cost exceeds a certain threshold $\bar c_c(c_s)$, the optimal solution is a pure search policy. In this regime, the optimal message precision collapses to $\rho^*(c_s,c_c)=0$, and the user refrains from providing any informative communication. In this case, the user’s payoff is derived entirely from search. The user's decision to not provide preference information in the search-only regime is not due to a lack of informative potential. Under the same cost structure, the user might optimally provide partial information if the agent were restricted to a single recommendation ($n=1$). However, due to high communication cost, the ability to search cheaply renders costly communication economically inefficient.

A representative example of this regime is \textit{window shopping}, in which users face high cognitive costs in articulating precise preferences, as they themselves may be uncertain about what they seek. This corresponds to a setting with high communication costs, as transmitting informative signals is either infeasible or too cognitively demanding. Consequently, the informational burden shifts entirely to search, and the agent must compensate by offering a large recommendation set to cover the space of plausible user preferences. This interaction closely resembles a brute-force browsing experience on a generic e-commerce platform, where identifying a suitable product relies almost exclusively on the user’s capacity to search through many alternatives. We defer further discussion on the phase transition between the two regimes and the switching curve $\bar c_c(c_s)$ to Section \ref{sec:post_tilt_comparision}.

\paragraph{Frictionless Extremes}
Apart from the above mentioned regimes, there exist additional extremes outside the $1/d$ cost scaling. If either $\lambda_s \ll 1/d$ or $\lambda_c \ll 1/d$, which is analogous to $c_s=0$ or $c_c=0$, respectively, the interaction becomes effectively frictionless. In the former case, the user can search at negligible cost; in the latter, the user can communicate preferences with essentially perfect precision. Under either scenario, frictions vanish asymptotically, and the resulting expected payoff converges to the maximal attainable value $1$.

\paragraph{Absence of Communication-Only Regime} Under the high-dimensional approximation, a communication-only policy corresponds to choosing a singleton recommendation set, i.e., $\alpha = 0$, so that the user relies entirely on the message and performs no search. Unlike the search-only regime identified in   Theorem~\ref{thm:asym_opt}(3), this outcome does not arise over any nondegenerate range of parameters. In particular, $\alpha = 0$ is optimal only in two boundary scenarios. First, when $c_s \rightarrow \infty$, search becomes prohibitively costly, forcing the user to rely exclusively on communication. Second, when communication is frictionless, i.e., $c_c = 0$, the user can transmit preferences without cost, eliminating the need for search. In contrast, whenever communication carries any strictly positive cost (resulting in optimal $\rho <1$), the user strictly benefits from allowing at least some exploration, i.e., $\alpha > 0$. A singleton recommendation set, generated through posterior sampling, prevents residual preference uncertainty through search, making pure communication suboptimal away from these boundary cases.
Consequently, while $\rho^*(c_s, c_c)$ exhibits a genuine threshold behavior in $c_c$, the optimal set-size exponent $\alpha^*(c_s, c_c)$ does not display an analogous phase transition; the communication-only policy appears only as a boundary solution rather than as an interior regime.

\subsubsection{Numerical results for \texorpdfstring{$d \rightarrow \infty$}{d -> infinity} case}
The plots in Figure \ref{fig:opt_solutions} present the optimal values of $\rho^*(c_s, c_c)$ (left) and $\alpha^*(c_s,c_c)$ (middle) that solve the optimization problem OPT$_{\text{Joint}}$ for given values of $c_s$ and $c_c$. The left panel of Figure \ref{fig:opt_solutions} reveals that, for any fixed value of $c_s$, there exists a threshold $\bar{c}_c(c_s)$ at which the interaction policy undergoes a sharp transition from a hybrid communication–search regime to a purely search-only regime, as shown in Theorem \ref{thm:asym_opt}. For values of $c_c < \bar{c}_c(c_s)$, the system operates in a hybrid communication-search regime, where the optimal interaction policy is to jointly optimize over the message precision and recommendation set size. It can also be observed that when $c_c$ is small relative to $c_s$, the optimal solution favors higher values of $\rho^*(c_s, c_c)$ and $\alpha^*(c_s, c_c)$ is close to zero, even though $\alpha^*(c_s, c_c)=0$ only at the boundary $c_c = 0$.
\begin{figure}[t]
    \centering
    \includegraphics[width=0.98\linewidth]{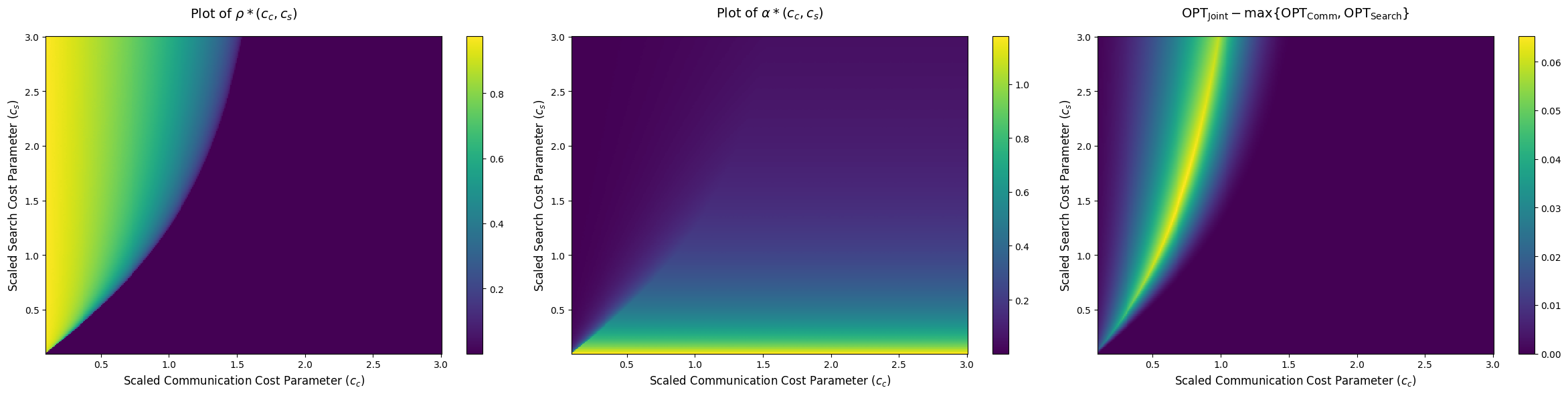}
    \setlength{\belowcaptionskip}{-14pt}
    \caption{Heatmap for $\rho^*(c_s, c_c)$ (left) and $\alpha^*(c_s,c_c)$ (middle) with respect to $c_s$ and $c_c$; and (right) incremental payoff of the joint policy relative to the best pure policy.}
    \label{fig:opt_solutions}
    \Description{The provided figure illustrates the optimal user-agent interaction policy in a high-dimensional feature space. The left and middle panels demonstrate a hybrid regime where message precision and recommendation set size are jointly optimized to navigate communication and search costs. A sharp phase transition (the curved boundary) reveals that when communication costs become sufficiently high relative to search costs, the user optimally switches to a search-only strategy, setting message precision to zero. The right panel highlights that the performance gains from this joint optimization (compared to pure search or pure communication) are most significant in intermediate cost regimes.}
\end{figure}
The right panel in Figure \ref{fig:opt_solutions} presents the heatmap of $\big[\text{OPT}_{\text{Joint}} - \max \{\text{OPT}_{\text{Comm}}, \text{OPT}_{\text{Search}}\}\big]$, where $\text{OPT}_{\text{Comm}}$ and $\text{OPT}_{\text{Search}}$ are obtained by setting $\alpha = 0$ and $\rho = 0$ in the optimization problem for $\text{OPT}_{\text{Joint}}$, respectively, and serve as asymptotic characterizations of the pure communication and pure search policies, respectively. As such, the right panel in Figure \ref{fig:opt_solutions} presents the incremental gain of using joint optimization compared to the better of the two pure benchmarks. The results show that performance gains are most significant in a narrow band of intermediate cost regimes, where $c_c$ is smaller than but comparable to $c_s$. In these settings, jointly optimizing message precision and recommendation set size yields meaningful improvements over either pure strategy, whereas outside this region the incremental benefit of joint optimization is limited.

\subsection{Theoretical Results and Insights for Finite \texorpdfstring{$d$}{d}}
\label{sec:results_finite_d}
We now turn to the finite-dimensional problem and assess the accuracy of the asymptotic prescriptions derived in the previous section. The next theorem presents a performance gap between the asymptotically optimal policy, and that under finite $d$.

\begin{theorem}[Performance Gap]
\label{thm: perf_gap}
Suppose $c_c >0$ and $c_s>0$. Let $(\rho^*(c_s,c_c), \alpha^*(c_s,c_c))$ denote the optimal solution to $\mathrm{OPT}_{\mathrm{Joint}}$ with scaled cost parameters $c_s = d\lambda_s$ and $c_c = d\lambda_c$. Then, the performance gap between the true optimal policy $(\kappa^*_d, n^*_d)$ and the asymptotically optimal policy $(\kappa^*_\infty, n^*_\infty)$ (see Eq. \eqref{eq:high_dim_interaction_policy}) satisfies
\begin{align}
    0 \ \leq \ \mathcal{P}_d \big(\kappa^*_d, n^*_d\big) - \mathcal{P}_d\big(\kappa^*_\infty, n^*_\infty\big) \ \leq \ K_{\ref{thm: perf_gap}} \sqrt{\frac{\log d}{d}},
\end{align}
where the constant $K_{\ref{thm: perf_gap}}$ depends on the cost parameters $c_c$ and $c_s$.
\end{theorem}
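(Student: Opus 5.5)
The lower bound is immediate: $(\kappa^*_d,n^*_d)$ maximizes $\mathcal{P}_d$, so the gap is nonnegative. For the upper bound, I plan to sandwich both quantities around $d$-scaled versions of $\mathrm{OPT}_{\mathrm{Joint}}$ with explicit error terms. Throughout, set $\lambda_s=c_s/d$ and $\lambda_c=c_c/d$ exactly, as in the statement.

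\emph{Step 1 (lower bound on the asymptotic policy).} At $(\kappa^*_\infty,n^*_\infty)$ I will apply Proposition~\ref{prop: utility_approx} with $(\rho^*,\alpha^*)$, which gives a utility error at most $K\sqrt{\log d/d}$. Proposition~\ref{prop: kl_approx} gives the communication cost as $\frac{c_c}{d}\cdot\frac{d-2}{2}\log\frac{1}{1-\rho^{*2}}$, up to an error of order $\frac{c_c}{d}\sqrt{d/(1-\rho^{*2})^3}=O(d^{-1/2})$. Here $\rho^*<1$ is a fixed number because $c_c>0$: as $\rho\to1$ the penalty $\frac{c_c}{2}\log(1-\rho^2)\to-\infty$ while $f\le1$. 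The search cost is $\frac{c_s}{d}\log\lfloor e^{d\alpha^*}\rfloor=c_s\alpha^*+O(e^{-d\alpha^*}/d)$ when $\alpha^*>0$, and it is exactly $0$ when $\alpha^*=0$. Replacing $(d-2)/d$ by $1$ costs $O(1/d)$. Together these give $\mathcal{P}_d(\kappa^*_\infty,n^*_\infty)\ge \mathrm{OPT}_{\mathrm{Joint}}-K\sqrt{\log d/d}$.

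\emph{Step 2 (upper bound on the true optimum).} This is the main obstacle. The propositions control the error only for fixed $(\rho,\alpha)$, with constants depending on them, whereas $(\kappa^*_d,n^*_d)$ may drift with $d$. My plan is first to confine the optimizer to a compact set.
\begin{itemize}
\item Since the payoff is at most $1$ minus the costs, and Step 1 bounds the optimum below by $\mathrm{OPT}_{\mathrm{Joint}}-o(1)$, any near-optimal policy must have search cost $c_s\log n/d\le 2$. This forces $\alpha=\log n/d\le A:=2/c_s$.
\item By the same argument, the communication cost must be $O(1)$. Using the monotonicity of the KL divergence in $\kappa$ and Proposition~\ref{prop: kl_approx}, this forces $\rho\le\bar\rho<1$, where $\bar\rho$ depends only on $c_c$.
\end{itemize}
Every integer $n$ and real $\kappa\ge0$ corresponds to some $(\rho,\alpha)$ in $[0,\bar\rho]\times[0,A]$. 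On this compact set I need the errors of Propositions~\ref{prop: utility_approx} and~\ref{prop: kl_approx} to be uniform. For Proposition~\ref{prop: kl_approx} the constant is explicit, and $(1-\rho^2)^{-3/2}$ is bounded on the set. For Proposition~\ref{prop: utility_approx} I would reinspect its proof to check that $K$ is continuous in $(\rho,\alpha)$ and hence bounded on the compact set. Failing that, I would prove the one-sided bound $\mathbb E[\max_i]\le f(\rho,\alpha)+K\sqrt{\log d/d}$ directly. For that I would use a union bound
$$\mathbb P\Big(\max_i \langle\mathbf h,\boldsymbol\theta_i\rangle>f(\rho,\alpha)+\epsilon\Big)\le n\,\mathbb P\big(I_\rho(W_1,X_1)>\alpha+c\epsilon\big)$$
with explicit density bounds, which are uniform on compacts, and I would control the fluctuation of $W$ around $\rho$ with a concentration bound at scale $\sqrt{\log d/d}$.

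Combining the uniform estimates gives $\mathcal{P}_d(\kappa,n)\le f(\rho,\alpha)-c_s\alpha+\tfrac{c_c}{2}\log(1-\rho^2)+K\sqrt{\log d/d}\le\mathrm{OPT}_{\mathrm{Joint}}+K\sqrt{\log d/d}$ for all admissible $(\kappa,n)$. Subtracting Step 1 then proves the upper bound in the statement, with a constant depending only on $(c_s,c_c)$.
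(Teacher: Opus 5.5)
Your proposal is correct and follows essentially the same route as the paper. It bounds $\mathcal{P}_d$ at $(\kappa^*_\infty,n^*_\infty)$ via Propositions~\ref{prop: utility_approx}--\ref{prop: kl_approx} at the fixed point $(\rho^*,\alpha^*)$, confines the true optimizer to a compact set $[0,\bar\rho]\times[0,A]$ using $\mathcal{P}_d\le 1-\text{costs}$, and then combines uniform error constants there with $\mathcal{P}_\infty(\tilde\rho,\tilde\alpha)\le\mathrm{OPT}_{\mathrm{Joint}}$.

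You are more careful than the paper, which simply asserts that $K_{\mathrm{total}}$ is bounded on the compact set: you observe that only the one-sided union-bound estimate needs to be uniform, and you use monotonicity of the KL divergence in $\kappa$ to bound $\rho$. If you carry out that union bound, note that $f(\rho,\cdot)$ is only H\"older-$\tfrac12$ near $\alpha=0$. Use a threshold $\alpha+c\epsilon^2$ rather than $\alpha+c\epsilon$; this follows from the $1$-strong convexity of $I_\rho$ and still gives the $\sqrt{\log d/d}$ rate.
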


Theorem \ref{thm: perf_gap} establishes the accuracy of the high-dimensional approximation by quantifying the performance gap, which decays at a rate of $\tilde{O}(d^{-1/2})$. The proof of Theorem \ref{thm: perf_gap} is provided in Appendix \ref{app:proof_thm_perf_gap}. A crucial distinction from Theorem \ref{thm:asym_opt} is that this result does not rely on asymptotic scaling assumptions for the cost parameters. Instead, for any finite $d$, the theorem effectively defines the scaled parameters $c_s = d\lambda_s$ and $c_c = d\lambda_c$ and solves $\mathrm{OPT}_{\mathrm{Joint}}$ to derive the optimal $\rho^*(c_s,c_c)$ and $\alpha^*(c_s,c_c)$, which are then mapped to the control parameters $\kappa^*_\infty$ and $n^*_\infty$. From a practical standpoint, this implies that system designers can rely on the simpler asymptotic optimization to select message precision and recommendation set size without incurring significant utility loss, particularly in modern environments where embeddings are inherently high-dimensional.

\subsubsection{Discretization and the Emergence of Communication-Only Regime}
\label{sec:finite_d_regimes}
A key distinction between the finite-dimensional setting and its high-dimensional approximation lies in the granularity of the recommendation set size. In the asymptotic limit ($d \to \infty$), the recommendation set size is governed by a continuous exponent $\alpha$, allowing for smooth transitions between regimes. However, when $d$ is finite, $n$ is strictly constrained to the set of positive integers ($n \in \{1, 2, \dots\}$). This introduces a ``discretization gap'', particularly in the transition from a single recommendation ($n=1$) to multiple ($n \ge 2$). The marginal search cost of adding a second item is significant ($\lambda_s \log 2$), creating a barrier that prevents the smooth adoption of hybrid policy. Consequently, the system is more prone to ``stick'' to pure policies at the extremes of the cost spectrum, as formalized below.

\begin{proposition}[Phase Transitions in Finite-Dimensions]
\label{prop:finite_phase_transition}
    Fix the dimension $d$. We have,
    \begin{enumerate}
        \item \emph{Pure Communication regime:} For all $\lambda_s > 0$, there exists $\underline \lambda_c(\lambda_s)$, such that for all $\lambda_c < \underline \lambda_c(\lambda_s)$ we have  $n^*_d = 1$.
        \item \emph{Pure Search regime:} For all  $\lambda_s > 0$, there exists $\bar \lambda_c(\lambda_s)$, such that for all $\lambda_c > \bar{\lambda}_c(\lambda_s)$ we have $\kappa^*_d = 0$.
    \end{enumerate}
\end{proposition}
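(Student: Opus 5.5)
The plan is a direct comparison of payoffs at fixed $d$. Write
\[
\mathcal P_d(\kappa,n)=U_d(\kappa,n)-\lambda_s\log n-\lambda_c K_d(\kappa),
\]
where $U_d(\kappa,n)$ is the expected utility of the best item and $K_d(\kappa)=\kappa\EE[W]-\log\frac{C_d(0)}{C_d(\kappa)}$ is the expected KL divergence from Lemma \ref{lem:prelim_results}\ref{lem:kl_div_simplifiction}. I will use three facts. First, $|U_d|\le 1$. Second, $K_d(0)=0$, and $K_d(\kappa)>0$ for $\kappa>0$ with $K_d$ continuous. Third, $U_d(0,1)=0$, because the inner product of $\mathbf h$ with an independent uniform vector has mean zero.

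\emph{Part 1 (pure communication).} I will show that $n=1$ strictly beats every $n\ge 2$. For any $\kappa$ and any $n\ge2$,
\[
\mathcal P_d(\kappa,n)\le 1-\lambda_s\log 2 .
\]
For $n=1$, Eq. \eqref{eq:util_representation} together with the independence of $W$, $W_1$ and $X_1$ and $\EE[X_1]=0$ gives $U_d(\kappa,1)=(\EE[W])^2$. Since $p_{\kappa,d}$ in Eq. \eqref{eq:cos_dist} concentrates at $w=1$ as $\kappa\to\infty$, this tends to $1$. So I can fix $\kappa_0=\kappa_0(\lambda_s,d)$ with $U_d(\kappa_0,1)\ge 1-\tfrac12\lambda_s\log2$. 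Then set
\[
\underline\lambda_c(\lambda_s)=\frac{\lambda_s\log 2}{4K_d(\kappa_0)} .
\]
For $\lambda_c<\underline\lambda_c$ this gives $\mathcal P_d(\kappa_0,1)> 1-\tfrac34\lambda_s\log 2>\sup_{\kappa,\,n\ge2}\mathcal P_d(\kappa,n)$, so $n^*_d=1$.

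\emph{Part 2 (pure search).} First I bound $n$. Since $\mathcal P_d(\kappa,n)\le 1-\lambda_s\log n$ and $\mathcal P_d(0,1)=0$, every optimal $n$ satisfies $n\le N:=\lceil e^{1/\lambda_s}\rceil$; this bound does not depend on $\lambda_c$. Next I compare, for each $n\le N$ and each $\kappa>0$, $\mathcal P_d(\kappa,n)$ with $\mathcal P_d(0,n)$. The cost side is $\lambda_c K_d(\kappa)$, and the gain side is $U_d(\kappa,n)-U_d(0,n)\le 2$.

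The difficulty is small $\kappa$. There $K_d(\kappa)\approx \kappa^2/(2d)$ is only quadratic, so I must show the utility gain is also $O(\kappa^2)$, uniformly over $n\le N$. To do this, view $U_d(\kappa,n)$ as an expectation of the bounded function $g=\max_i\langle\mathbf h,\boldsymbol\theta_i\rangle$ under the joint law of $(\mathbf m,\mathbf h,\boldsymbol\theta_1,\dots,\boldsymbol\theta_n)$. This law is the base law (uniform $\mathbf m$, independent uniform $\mathbf h,\boldsymbol\theta_i$) tilted by $\exp\big(\kappa\langle\mathbf m,\mathbf h+\sum_i\boldsymbol\theta_i\rangle\big)/C(\kappa)^{n+1}$ per factor. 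Hence $\kappa\mapsto U_d(\kappa,n)$ is smooth.

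At $\kappa=0$ the derivative is a covariance of $g$ with $T=\langle\mathbf m,\mathbf h+\sum_i\boldsymbol\theta_i\rangle$. Conditional on $(\mathbf h,\boldsymbol\theta_i)$, $T$ has mean zero over uniform $\mathbf m$, so the first derivative vanishes. The second derivative is bounded by a constant times $\sup|g|$ times centred moments of $T$ under the tilted law, and $|T|\le n+1$. This yields $U_d(\kappa,n)-U_d(0,n)\le C_N\kappa^2$ for $\kappa\le1$ and all $n\le N$.

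On the cost side, I show $K_d(\kappa)\ge c_d\min(\kappa^2,1)$. This follows from $K_d''(0)=\mathrm{Var}_0(W)>0$ and from positivity and continuity of $K_d$ away from $0$, using $K_d(\kappa)\to\infty$ as $\kappa\to\infty$. Taking $\bar\lambda_c(\lambda_s)>\max(C_N,2)/c_d$ then gives $\mathcal P_d(\kappa,n)<\mathcal P_d(0,n)$ for every $\kappa>0$ and $n\le N$, hence $\kappa^*_d=0$.

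I expect the main obstacle to be this second-order bound: showing the first-order effect of $\kappa$ on $U_d$ is zero and the curvature is controlled uniformly in $n\le N$. My first attempt will be the exponential-family covariance computation just described. If that becomes messy, I will instead average over the rotation $\mathbf m\mapsto-\mathbf m$ to symmetrise and cancel odd-order terms directly.
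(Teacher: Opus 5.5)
Your proof is correct, and it takes a different route from the paper in both parts.

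\textbf{Part 1.} The paper compares every $n\ge 2$ (payoff at most $1-\lambda_s\log 2$) only against the zero-payoff baseline $(0,1)$. That forces $n^*_d=1$ only when $\lambda_s>1/\log 2$. The paper then appeals to the envelope theorem and inverts a threshold $\bar\lambda_s(\lambda_c)$ to obtain $\underline\lambda_c(\lambda_s)$. This inversion presupposes that $\bar\lambda_s(\lambda_c)\downarrow 0$ as $\lambda_c\downarrow 0$, which is essentially the claim itself. You instead compare against a high-precision single recommendation with $U_d(\kappa_0,1)=(\EE[W])^2$ close to $1$. This works for every $\lambda_s>0$ directly and gives an explicit threshold. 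One small fix: take $\kappa_0>0$ so that $K_d(\kappa_0)>0$ in your denominator.

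\textbf{Part 2.} The paper argues that, because the KL cost is increasing and unbounded, $\partial_\kappa\mathcal P_d(\kappa,n)|_{\kappa=0}<0$ for large $\lambda_c$. That step cannot work as written. One has $K_d'(\kappa)=\kappa\,\mathrm{Var}_\kappa(W)$, so $K_d'(0)=0$. As you show, $\partial_\kappa U_d(\kappa,n)|_{\kappa=0}=0$ as well, so this derivative vanishes for every $\lambda_c$. In any case, a sign condition at $\kappa=0$ is only local.

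Your argument supplies the global second-order comparison the statement actually needs. It has three ingredients:
\begin{itemize}
\item the $\lambda_c$-free a priori bound $n^*_d\le e^{1/\lambda_s}$;
\item the exponential-family representation, which gives $U_d(\kappa,n)-U_d(0,n)\le C_N\kappa^2$ uniformly over $n\le N$;
\item the matching lower bound $K_d(\kappa)=\int_0^\kappa s\,\mathrm{Var}_s(W)\,ds\ge c_d\min(\kappa^2,1)$.
\end{itemize}
The utility bound in fact holds for all $\kappa\ge 0$, since $|\partial_\kappa^2 U_d|\le 2\,\mathrm{Var}_\kappa(T)\le 2(n+1)^2$, so your split at $\kappa=1$ is unnecessary but harmless. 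The a priori bound on $n$ is what keeps $C_N$ finite, and it is also what makes $\bar\lambda_c$ depend on $\lambda_s$, as the quantifiers require. Your domination argument also shows that the supremum is attained on $\{(0,n):n\le N\}$, so the existence of an optimizer comes for free.

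The paper's argument is shorter, but yours is the one that actually establishes both claims.
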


The result in Proposition \ref{prop:finite_phase_transition} highlights that while the joint  policy is dominant in the hybrid regime, the integer constraints on the recommendation set size give rise to a \textit{communication-only} regime. Specifically, when the communication cost parameter is small ($\lambda_c < \underline \lambda_c(\lambda_s)$), the discrete jump in search cost required to move from $n=1$ to $n=2$ outweighs the marginal benefit of multiple recommendations, locking the system into a \textit{pure communication} policy. Conversely, when communication is expensive ($\lambda_c > \bar{\lambda}_c(\lambda_s)$), the system enters a \textit{search-only} regime, mirroring the asymptotic result in Theorem \ref{thm:asym_opt}.

A natural example of the communication-only regime is the \textit{deep research} setting, where the desired output is a long, detailed report or analysis. Such outputs are cognitively costly for the user to evaluate, making it infeasible to provide or review multiple alternatives. As a result, the interaction effectively collapses to a single recommendation and the overall performance is highly sensitive to the quality of the user’s initial message, shifting the burden entirely to communication. Anticipating that only one output will be produced, it is worthwhile for the users to invest effort in providing a high-fidelity, well-specified query so that the agent can tailor a single precise response that closely aligns with the user's preferences.

\subsubsection{Empirical Results for Finite \texorpdfstring{$d$}{d}}
In this section, we present empirical results for the solution of $\mathcal{P}_{d}(\cdot,\cdot)$ for finite $d$. Figure~\ref{fig:finite_d_results} illustrates sharp phase transitions in the optimal interaction policy for finite dimensions, as reflected in the optimal values of message precision $\rho$ and recommendation exponent $\alpha$ across different feature dimensions $d$. The figure shows how the policy varies with the scaled communication cost $c_c$ (with $\lambda_c=c_c/d$). As $c_c$ increases, the optimal $\rho$ decreases and eventually reaches zero at a dimension-dependent threshold. By contrast, the optimal $\alpha$ increases with $c_c$, rising from zero to a strictly positive value. Notably, there are parameter regions in which both $\rho$ and $\alpha$ are simultaneously nonzero. This pattern is consistent with Proposition~\ref{prop:finite_phase_transition} and provides a clear illustration of the finite-dimensional phase transition in $\rho$. Moreover, the discrete jumps in $\alpha$ highlight the role of the integer constraint on the recommendation set size, which induces sharp regime switches that are absent in the continuous asymptotic approximation.
\begin{figure}[t]
\centering
\includegraphics[width=0.8\linewidth]{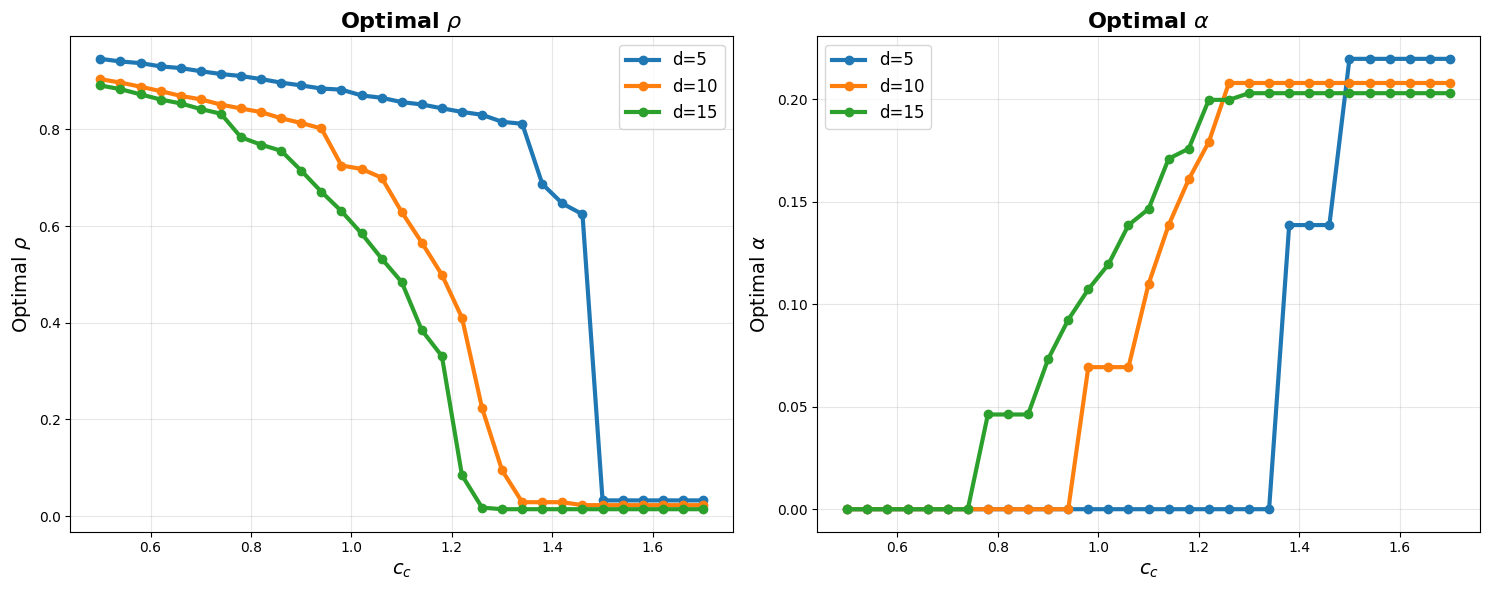}
\setlength{\belowcaptionskip}{-12pt}
\caption{Optimal control variables in finite dimensions with respect to the scaled communication cost parameter. (Results are estimated via Monte Carlo simulations and therefore exhibit sampling variability.)}

\label{fig:finite_d_results}
\end{figure}
\section{Recommendations from Tilted Distribution}
\label{sec:tilted_dist}
In previous sections, our analysis has focused on recommendation sets generated through direct sampling from the posterior distribution. While natural and easy to implement leveraging Gen AI, this approach does not take advantage of the possibility of systematically adjusting the weight placed on the user’s message. In this section, we consider an alternative mechanism. Given a message $\mathbf{m} \in [-1,1]^d$, we define the class $\mathcal{Q}$ as the collection of distributions over $\boldsymbol{\theta}$ induced by the following stochastic construction: a scalar random variable $V$ is drawn from some distribution $P \in \mathcal{P}([-1,1])$, and independently, a random vector $\mathbf{Y}$ is drawn uniformly from among unit vectors orthogonal to $\mathbf{m}$, denoted by $\mathcal{S}(\mathbf{m})$. The recommendation vector is then generated as $\boldsymbol{\theta} = V \mathbf{m} + \sqrt{1 - V^2}\, \mathbf{Y}$. Formally,
\begin{align*}
\mathcal{Q} := \left\{ q(\boldsymbol{\theta}|\mathbf{m}) \mid \boldsymbol{\theta} = V\mathbf{m} + \sqrt{1-V^2} \mathbf{Y}, \quad V \sim \mathcal{P}([-1,1]), \quad \mathbf{Y} \sim \text{Unif}(\mathcal{S}(\mathbf{m})), \quad \mathbf{Y} \perp V \right\}
\end{align*}
We define the class of \textit{tilted distributions}, in which the weight $V$ is deterministic. That is, for a fixed $v \in [-1,1]$, the recommendation is given by $\boldsymbol{\theta} = v \mathbf{m} + \sqrt{1 - v^2}\, \mathbf{Y}$, with $\mathbf{Y} \sim \mathrm{Unif}(\mathcal{S}(\mathbf{m}))$. Formally,
\begin{align*}
    \mathcal{Q}^{\text{Tilt}} := \left\{ q(\boldsymbol{\theta}|\mathbf{m}) \mid \boldsymbol{\theta} = v\mathbf{m} + \sqrt{1-v^2} \mathbf{Y}, \quad v \in [-1, 1], \quad \mathbf{Y} \sim \text{Unif}(\mathcal{S}(\mathbf{m})) \right\}.
\end{align*}

The following theorem compares the performance of the general class $\mathcal{Q}$ with that of the tilted subclass $\mathcal{Q}^{\mathrm{Tilt}}$. It shows that, in high dimensions, randomizing the weight placed on the user’s message provides no asymptotic advantage over an optimally chosen deterministic tilt.

\begin{theorem}[Optimality gap of sampling from a tilted distribution]
\label{thm:tilt_opt}
For a recommendation set of size $n$, the expected utility of a sampling scheme $q \in \mathcal{Q}$ is given by
\begin{align*}
U_n(q) := \mathbb{E}_{\boldsymbol\theta_1, \dots, \boldsymbol\theta_n \sim q(\cdot | \mathbf{m}), \text{ i.i.d.}} \left[ \max_{i \in [n]} \langle \mathbf{h}, \boldsymbol\theta_i \rangle \right].
\end{align*}
Then, there exists a universal constant $K_{\ref{thm:tilt_opt}}$, such that, for $d \geq 4$, the maximum expected utility achievable by any distribution in the class $\mathcal{Q}$ satisfies
\begin{align*}
\max_{q \in \mathcal{Q}} \mathbb{E}_{\mathbf{h}, \mathbf{m}} [U_n(q)] \leq \max_{q \in \mathcal{Q}^{\text{Tilt}}} \mathbb{E}_{\mathbf{h}, \mathbf{m}} [U_n(q)] + \frac{K_{\ref{thm:tilt_opt}}}{\sqrt{d-3} (1-\rho^2)},
\end{align*}
where the expectation is taken over the joint distribution of preferences $\mathbf{h}$, and messages $\mathbf{m}\sim p_{\kappa}(\cdot|\mathbf{h})$, where $\rho \in [0,1)$ and $\kappa = \frac{\rho}{(1-\rho^2)} (d-3)$.
\end{theorem}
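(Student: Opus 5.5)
The plan is to reduce to the case where the posterior fidelity $W$ is frozen at its mode $\rho$, and then compare a randomized tilt $V$ with a deterministic one in that simplified problem. For $q\in\mathcal{Q}$ write $\boldsymbol\theta_i = V_i\mathbf{m}+\sqrt{1-V_i^2}\,\mathbf{Y}_i$ with $V_i$ i.i.d.\ from $P$. Exactly as in Eq.~\eqref{eq:util_representation}, this gives
\begin{align*}
\langle \mathbf h,\boldsymbol\theta_i\rangle = g(W,V_i,X_i), \qquad g(w,v,x):=wv+\sqrt{1-w^2}\sqrt{1-v^2}\,x .
\end{align*}
By Lemma~\ref{lem:prelim_results}, the $X_i$ are i.i.d.\ with law $p_{0,d-1}$ and are independent of $W$ and of $(V_i)$. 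The same argument as in Lemma~\ref{lem:prelim_results}\ref{lem:X_independence} should apply, since it uses only that each $\mathbf{Y}_i$ is uniform on $\mathcal{S}(\mathbf{m})$ and independent of everything else.

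\textbf{Step 1 (freezing $W$).} Write $g(w,v,x)$ as the inner product of $(w,\sqrt{1-w^2})$ with $(v,\sqrt{1-v^2}\,x)$. The second vector has norm at most $1$, so by Cauchy--Schwarz $|g(W,v,x)-g(\rho,v,x)|\le |\arccos W-\arccos\rho|$ uniformly in $(v,x)$. Hence the maxima over $i$ differ by at most this amount, uniformly over $q\in\mathcal{Q}$. It then remains to show $\mathbb E|\arccos W-\arccos\rho|\le K/(\sqrt{d-3}\,(1-\rho^2))$. I would do this from the density \eqref{eq:cos_dist} with $\kappa=\frac{\rho}{1-\rho^2}(d-3)$. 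In the variable $\phi=\arccos w$ the log-density is $(d-3)\big[\tfrac{\rho}{1-\rho^2}\cos\phi+\log\sin\phi\big]$ plus a Jacobian term. Its second derivative in $\phi$ is at most $-(d-3)c$ with $c\gtrsim 1$ near the mode $\phi_\rho=\arccos\rho$, so strong log-concavity gives a standard deviation of order $1/\sqrt{d-3}$. The factor $1/(1-\rho^2)$ is left as slack for the region near $\phi=0$ and the Jacobian term. Applying this bound once to the sup over $\mathcal{Q}$ and once to the sup over $\mathcal{Q}^{\mathrm{Tilt}}$ costs $2K/(\sqrt{d-3}(1-\rho^2))$. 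So it suffices to prove the claim with $W\equiv\rho$.

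\textbf{Step 2 (the frozen problem).} With $W\equiv\rho$, let $F_v$ be the CDF of $g(\rho,v,X)$. Under $q$ the utilities are i.i.d.\ with mixture CDF $G_P(t)=\int F_v(t)\,P(dv)$, so
\begin{align*}
U_n = 1-\int_{-1}^{1} G_P(t)^n\,dt .
\end{align*}
The goal is to find a single $v^\star$ with $\int F_{v^\star}^n \le \int G_P^n + O(1/\sqrt{d})$. First I would exhibit an ordering: for $t$ above the median, the tail $1-F_v(t)$ is unimodal in $v$, with maximizer $v(t)$ increasing and continuous in $t$. This should follow from the explicit density of $X$ and unimodality in $v$ of $P(X>(t-\rho v)/(\sqrt{1-\rho^2}\sqrt{1-v^2}))$. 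Then $1-G_P(t)\le 1-F_{v(t)}(t)$ pointwise.

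The mass of $\int(1-G^n)$ that matters is concentrated in a window of $t$ around the level $t^\star$ where $n(1-G(t^\star))\approx 1$. Below the window $G^n\approx 0$ for every candidate. Above it, $1-G^n\le n(1-G)$, which is small and comparable across candidates. I would choose $v^\star=v(t^\star)$. On the window I would bound $1-F_{v^\star}(t)\ge 1-F_{v(t)}(t)-$ (error), with the error controlled by the smoothness of $v\mapsto \log(1-F_v(t))$ and a window width of $O(1/\sqrt d)$ in $t$. The window width should come from the fact that $\log(1-F_v(t))$ has $t$-slope of order $d$ in the large-deviation regime, so shifting $t$ by $O(1/d)$ already changes the tail by a constant factor.

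The main obstacle is Step 2, because Jensen points the wrong way: $u\mapsto 1-(1-u)^n$ is concave, so a mixture can beat every deterministic tilt, and the argument must rely on high-dimensional concentration rather than convexity. If the $t$-window argument proves too delicate, my fallback is a cruder route that still gives the right rate. For each realization of the $V_i$, replace every $V_i$ by the single value $v^\star$ while coupling the $X_i$. Then bound $|g(\rho,V_i,X_i)-g(\rho,v^\star,X_i)|$ on the event that the index attaining the maximum has $V_i$ within $O(1/\sqrt d)$ of $v^\star$. Finally, show via the rate function $I_\rho$ of Proposition~\ref{prop: utility_approx} that indices with $V_i$ far from the optimal tilt are exponentially unlikely to attain the maximum.
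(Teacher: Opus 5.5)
Your Step~1 is sound in substance. One caveat: the angle-variable log-concavity holds only locally, since for $\phi>\pi/2$ the term $-\kappa\cos\phi$ in the second derivative is positive and can be as large as $\kappa$. It is simpler to bound the chord by $|W-\rho|+|\sqrt{1-W^2}-\sqrt{1-\rho^2}|\le 3|W-\rho|/\sqrt{1-\rho^2}$ and then use Lemma~\ref{lem:w_concentration}, as the paper does.

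The genuine gap is Step~2: you never actually establish the comparison between a randomized tilt and a deterministic one. Your main route relies on unproved large-deviation-level estimates: the location of $t^\star$, the width of the window, and the smoothness of $v\mapsto\log(1-F_v(t))$. Their constants depend on $n$ and $\rho$, and they only make sense for $n=e^{\alpha d}$ with $d$ large. The theorem, however, demands one universal constant valid for every $n$ and every $d\ge 4$. For bounded $n$ the relevant deviations are at the CLT scale, not the large-deviation scale. The fallback is wrong as formulated. If $P=\delta_{v_0}$ with $v_0$ far from $v^\star$, the maximizing index has $V_i=v_0$ with probability one, so no statement localizing the argmax near $v^\star$ can hold. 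Only the one-sided fact that no index exceeds the envelope level is true, and that brings you back to the envelope.

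The missing idea is a pathwise domination that makes any window analysis unnecessary. The coefficient $\sqrt{1-\rho^2}\sqrt{1-V_i^2}$ of $X_i$ is nonnegative, so $X_i\le M_n:=\max_j X_j$ gives, for every $P$,
\begin{align*}
\max_{i\in[n]} g(\rho,V_i,X_i)\le \Phi(M_n),\qquad \Phi(m):=\max_{v\in[-1,1]}\Big\{\rho v+\sqrt{1-\rho^2}\sqrt{1-v^2}\,m\Big\},
\end{align*}
where $\Phi$ is convex and $1$-Lipschitz. For a deterministic tilt the coefficient does not depend on $i$, so $\mathbb{E}[\max_i g(\rho,v,X_i)]=\rho v+\sqrt{1-\rho^2}\sqrt{1-v^2}\,\mathbb{E}[M_n]$ exactly, and the best tilt attains $\Phi(\mathbb{E}[M_n])$. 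The entire gap is therefore $\mathbb{E}[\Phi(M_n)]-\Phi(\mathbb{E}[M_n])\le\mathbb{E}|M_n-\mathbb{E}[M_n]|$. This is the Jensen gap you anticipated, but for a single scalar. Moreover, $M_n=\max_i\langle\tilde{\mathbf{Y}},\tilde{\mathbf{Y}}_i\rangle$ is $\sqrt{2}$-Lipschitz in $(\tilde{\mathbf{Y}},\tilde{\mathbf{Y}}_1,\dots,\tilde{\mathbf{Y}}_n)$ for the product Euclidean metric on $(\mathcal{S}^{d-2})^{n+1}$. Log-Sobolev tensorization then gives $\mathbb{E}|M_n-\mathbb{E}[M_n]|=O(1/\sqrt{d-3})$ uniformly in $n$. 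This is the paper's argument, except that the paper freezes $W$ at $\mathbb{E}[W]$ rather than at $\rho$.

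In fact your envelope step was already computing this quantity. We have $\inf_v F_v(t)=\mathbb{P}(\Phi(X)\le t)$, with the infimum attained at $v(t)=\rho/t$ for $t>\rho$. This $v(t)$ is decreasing in $t$, not increasing, and no unimodality is needed, since a mixture of CDFs is at least their infimum. Hence your bound $1-\int_{-1}^{1}(\inf_v F_v(t))^n\,dt$ equals exactly $\mathbb{E}[\Phi(M_n)]$. What was missing was to recognize this and compare it with $\Phi(\mathbb{E}[M_n])$ via concentration of $M_n$.
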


Theorem \ref{thm:tilt_opt} implies that the agent can deterministically \emph{tilt} the recommendations toward the observed message to achieve near-optimal expected utility, effectively managing the crucial trade-off between exploiting communicated preferences and maintaining diversity across unobserved feature dimensions. The tilt parameter $v$ thus becomes a design lever: when communication is reliable, the agent can bias recommendations more heavily toward the message; when communication is noisy, the agent can diversify recommendations to hedge against uncertainty. Mathematically, the optimal tilt is the solution of the following optimization problem:
\begin{align*}
    v^* = \arg\max_{v} \ \ v\mathbb E[W] + \sqrt{1-v^2} \mathbb E\left[\sqrt{1-W^2}\right] \mathbb E\Big[\max_{i\in[n]} X_i \Big],
\end{align*}
where $W$ and $X_i$'s satisfy the characterization in Lemma \ref{lem:prelim_results}. Note that the optimal tilt parameter $v^*$ depends on the value of $\mathbb E\left[\max_i X_i \right]$, which in turn depends on the recommendation set size $n$, implying the dependence of optimal tilt on $n$, unlike that for the posterior sampling approach. Exploiting Theorem \ref{thm:tilt_opt}, we restrict subsequent analysis to the class of tilted distributions.

Allowing the user to also choose the message precision $\kappa$, the joint objective becomes
\begin{align*}
    \mathcal{T}_d(\kappa,n,v) = \ v\mathbb E[W] + \sqrt{1-v^2} \mathbb E\left[\sqrt{1-W^2}\right] \mathbb E\left[\max_i X_i   \right] -\lambda_s \log n - \lambda_c \Big(\kappa \EE[W] - \log \frac{C_d(0)}{C_d(\kappa)}\Big).
\end{align*}
We also define the high-dimensional approximation of the objective as
\begin{align}
\label{eq:tilt_high_dim_obj}
    \mathcal{T}_{\infty} (\rho,\alpha,v) = \max_{x} \ \ \rho v + \sqrt{1-\rho^2} \sqrt{1-v^2} x - c_s \alpha + \frac{1}{2} c_c \log (1-\rho^2) \ \text{ such that } I(x) \leq \alpha,
\end{align}
where $I(x)  = -\frac{1}{2} \log(1-x^2)$, implying that optimal $x$ is given by $x^* = \sqrt{1-e^{-2\alpha}}$. The relation between $\mathcal{T}_d(\kappa,v,n)$ and the high-dimensional counterpart $\mathcal{T}_{\infty} (\rho,\alpha,v)$ is captured in Theorem \ref{thm:tilt}.

\begin{theorem} [Asymptotically optimal tilt parameter and Phase Transitions]
\label{thm:tilt}
Suppose $\rho$ and $\alpha$ are fixed, and let, for any $d$, $\kappa_d = \frac{\rho}{1-\rho^2}(d-3)$ and $n_d = \lfloor e^{\alpha d} \rfloor$ for fixed $\rho$ and $\alpha$. Then, for any choice of tilt parameter $v$, we have
\begin{align*}
    \lim_{d\rightarrow \infty} \mathcal{T}_d(\kappa_d ,n_d, v) = \mathcal{T}_{\infty} (\rho,\alpha,v).
\end{align*}
Furthermore,
\begin{enumerate}
    \item \emph{Optimal Tilt:} For any fixed $(\rho, \alpha)$, the optimal tilt parameter $v^*$ that optimizes the high-dimensional approximation of the objective, i.e., $\mathcal{T}_{\infty} (\rho,\alpha,v)$, is given
    by
    \begin{equation}
    \label{eq:opt_tilt}
        v^*(\rho, \alpha) = \frac{\rho}{\sqrt{1 - (1-\rho^2) e^{-2\alpha}}}.
    \end{equation}
    \item \emph{Regime Phase Transitions:} Let $(\rho^*, \alpha^*)$ maximize the asymptotic utility $\mathcal{T}_{\infty}$ subject to search cost $c_s$ and communication cost $c_c$. The optimal policy exhibits a sharp phase transition:
\begin{itemize}
    \item \emph{Pure Communication Regime:} If $c_s > c_c$, the optimal policy collapses to $\alpha^* = 0$ and $v^* = 1$.
    \item \emph{Pure Search Regime:} If $c_s < c_c$, the optimal policy collapses to $\rho^* = 0$ and $v^* = 0$.
\end{itemize}
Further, we have that
\begin{align}
\label{eq:rho_alpha_under_tilt}
   \big(1-(\rho^*)^2\big)e^{-2\alpha^*} =
   \frac{1}{2}  \Big( \sqrt{c_{\min}^4+ 4 c_{\min}^2} -c_{\min}^2 \Big),
\end{align}
where $c_{\min} = \min\{c_s , c_c\}$, and  either $\rho^* = 0$ (if $c_c > c_s$) or $\alpha^* = 0$ (if $c_c < c_s$).
\end{enumerate}
\end{theorem}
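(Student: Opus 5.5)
The plan is to prove the three parts in order: the limit, then the optimal tilt, then the phase transition. The last two are deterministic calculus once $\mathcal{T}_\infty$ is identified.

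\emph{Convergence.} With $\kappa_d=\frac{\rho}{1-\rho^2}(d-3)$ and $n_d=\lfloor e^{\alpha d}\rfloor$, I treat the four terms of $\mathcal{T}_d$ separately.
\begin{itemize}
\item[(a)] By Lemma \ref{lem:prelim_results}\ref{lem:fidelity_dist}, $W$ has density $\propto e^{\kappa w}(1-w^2)^{(d-3)/2}$. Its log-density is $(d-3)\big(\frac{\rho w}{1-\rho^2}+\frac12\log(1-w^2)\big)$, which is strictly concave with maximizer $\rho$. A Laplace-type concentration argument (the same one behind the $O(d^{-1/2})$ fluctuation of $W$ used for Proposition \ref{prop: utility_approx}) then gives $\EE[W]\to\rho$. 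Since $\sqrt{1-w^2}$ is bounded and continuous on $[-1,1]$, it also gives $\EE[\sqrt{1-W^2}]\to\sqrt{1-\rho^2}$.
\item[(b)] By Lemma \ref{lem:prelim_results}\ref{lem:X_independence}, the $X_i$ are i.i.d.\ with density $p_{0,d-1}$. I invoke Proposition \ref{prop: utility_approx} at $\rho=0$. There $W\equiv W_i\equiv 0$ in the limit, the rate function reduces to $-\frac12\log(1-w^2)-\frac12\log(1-x^2)$, and the optimum takes $w=0$. This gives $\EE[\max_i X_i]\to f(0,\alpha)=\max\{x: I(x)\le\alpha\}=\sqrt{1-e^{-2\alpha}}$.
\item[(c)] The search cost is $\lambda_s\log n_d\to c_s\alpha$.
\item[(d)] By Lemma \ref{lem:prelim_results}\ref{lem:kl_div_simplifiction} and Proposition \ref{prop: kl_approx}, the communication cost is
\[
\lambda_c\cdot\Big(\frac{d-2}{2}\log\frac{1}{1-\rho^2}+O(\sqrt d)\Big)\to-\frac{c_c}{2}\log(1-\rho^2).
\]
\end{itemize}
Summing, and noting that $v$ enters only through the fixed coefficients $v$ and $\sqrt{1-v^2}$, yields $\mathcal{T}_d\to\mathcal{T}_\infty(\rho,\alpha,v)$ with $x=x^*$.

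\emph{Optimal tilt.} For fixed $(\rho,\alpha)$, the utility part of $\mathcal{T}_\infty$ is the inner product of $(v,\sqrt{1-v^2})$ with $(\rho,\sqrt{1-\rho^2}\,x^*)$. By Cauchy--Schwarz it is maximized when $(v,\sqrt{1-v^2})$ is the normalized version of that vector, so
\[
v^*=\frac{\rho}{\sqrt{\rho^2+(1-\rho^2)(1-e^{-2\alpha})}}=\frac{\rho}{\sqrt{1-(1-\rho^2)e^{-2\alpha}}},
\]
which is \eqref{eq:opt_tilt}. The optimized utility is $\sqrt{1-z}$, where $z:=(1-\rho^2)e^{-2\alpha}\in(0,1]$.

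\emph{Phase transition.} Write $s=\log(1-\rho^2)\le0$ and $t=-2\alpha\le0$. The reduced objective is
\[
\sqrt{1-e^{s+t}}+\frac{c_c}{2}s+\frac{c_s}{2}t .
\]
Its utility part depends only on $u=s+t=\log z$. So for fixed $u$, I maximize the linear cost term $\frac12(c_c s+c_s t)$ over $s,t\le0$ with $s+t=u$. Since $u\le0$, the maximum puts all of $u$ on the coordinate with the smaller coefficient, giving cost $\frac{c_{\min}}{2}u$.
\begin{itemize}
\item If $c_s>c_c$, then $t=0$, i.e.\ $\alpha^*=0$, and $v^*=1$ by \eqref{eq:opt_tilt}.
\item If $c_c>c_s$, then $s=0$, i.e.\ $\rho^*=0$, and $v^*=0$.
\end{itemize}
It remains to maximize $h(z)=\sqrt{1-z}+\frac{c_{\min}}{2}\log z$ over $(0,1]$. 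This function is strictly concave, tends to $-\infty$ as $z\to0$, and has $h'(1^-)=-\infty$, so the maximizer is interior. The first-order condition $z=c_{\min}\sqrt{1-z}$ gives $z^2+c_{\min}^2z-c_{\min}^2=0$. Its positive root is $\frac12\big(\sqrt{c_{\min}^4+4c_{\min}^2}-c_{\min}^2\big)$, which lies in $(0,1)$; this is \eqref{eq:rho_alpha_under_tilt}.

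\emph{Main obstacle.} I expect the hardest step to be (b): the convergence of $\EE[\max_i X_i]$ with $n_d$ exponentially large. If the specialization of Proposition \ref{prop: utility_approx} to $\rho=0$ is not directly usable, I would argue it by hand.
\begin{itemize}
\item Upper bound: a union bound shows $P(X_1>x)\approx e^{-dI(x)}$, so $n_d P(X_1>x)\to0$ for any $x>x^*$.
\item Lower bound: for $x<x^*$, $(1-P(X_1>x))^{n_d}\to0$.
\end{itemize}
Together with $|X_i|\le1$, these give convergence of the expectation.
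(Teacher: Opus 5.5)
Your proposal is correct and follows essentially the same route as the paper. Both use concentration of $W$ and the $\rho=0$ case of the utility approximation for $\EE[\max_i X_i]$, Cauchy--Schwarz (the paper uses direct optimization) to get $v^*$ with optimized utility $\sqrt{1-z}$, $z=(1-\rho^2)e^{-2\alpha}$, and then reduction to $\max_{z}\{\sqrt{1-z}+\tfrac{c_{\min}}{2}\log z\}$. Your $(s,t)$ allocation argument is a cleaner statement of the paper's $c_{\min}$ upper bound plus attainment. Keep your direct tail-bound fallback for $\max_i X_i$, since the heuristic ``$W\equiv W_i\equiv 0$'' (like the paper's $d$ versus $d-1$ remark) does not by itself transfer the $\rho=0$ result from $\max_i\langle \mathbf{h},\boldsymbol\theta_i\rangle$ to $\max_i X_i$.
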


Theorem~\ref{thm:tilt} characterizes the optimal tilt $v^*$ under the high-dimensional approximation of the objective $\mathcal{T}_d(\kappa, n, v)$ and identifies the sharp phase transition that emerges when both the tilt $v$ and the parameters $(\rho, \alpha)$ are chosen optimally. We see that the optimal tilt $v^*$ depends on both $\rho$ and $\alpha$, i.e., the optimal tilt must adjust simultaneously to the message precision and to the size of the recommendation set. Several key structural observations emerge from Theorem~\ref{thm:tilt}. First, whenever $\rho>0$ and $\alpha < \infty$ (i.e., $n < \infty$), the optimal tilt $v^*$ is strictly greater than $\rho$. This indicates that an optimized agent should place more weight on the communicated signal than the typical weight under  posterior sampling, which can be seen as a form of regularization. Second, the availability of tilting induces a specialization in the optimal policy: the policy either extracts information and uses it to provide a single recommendation ($\alpha^*=0$ and $v^* = 1$) or avoids communication entirely to rely on search ($\rho^* = 0$ and $v^*=0$). \emph{There is no joint communication and search regime.} And third, as can be deduced from Eq. \eqref{eq:rho_alpha_under_tilt}, the optimal precision $\rho^*$ and set-size exponent $\alpha^*$ are determined by the minimum of the two cost parameters $c_{\min}$, exhibiting  monotonicity where $\rho^*$ decreases as $c_c$ increases and $\alpha^*$ decreases as $c_s$ increases.

We avoid pursuing a more elaborate convergence result (cf. Theorem \ref{thm:asym_opt}) and performance gap bound (cf. Theorem \ref{thm: perf_gap}) in the interest of space. The proofs of Theorem \ref{thm:tilt_opt} and Theorem~\ref{thm:tilt} are provided in Appendix~\ref{app:proofs_tilt}. Figure~\ref{fig:tilt} (left) compares the payoff achieved under tilting with that obtained from posterior sampling; and reveals substantial gains from tilting under $c_s> c_c$.

\begin{figure}[ht!]
    \centering
    \includegraphics[width=\linewidth]{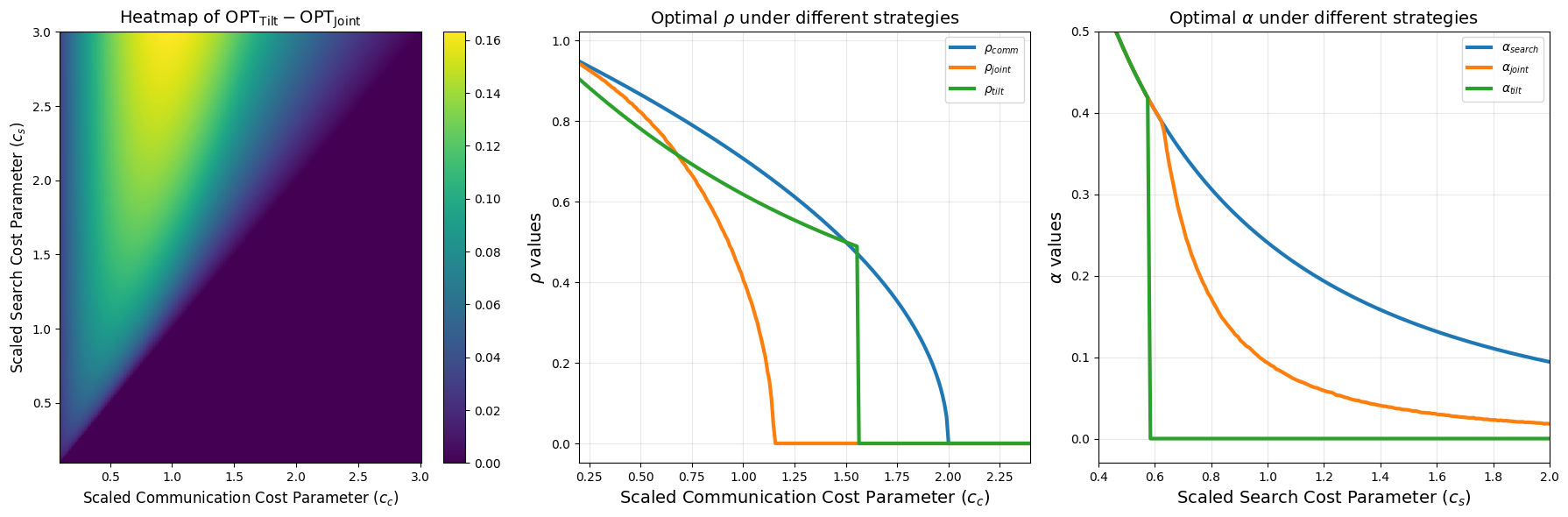}
    \setlength{\belowcaptionskip}{-8pt}
    \caption{Performance comparison between the tilted-distribution and posterior-sampling strategies, along with the comparison of $\rho^*$ w.r.t. $c_c$ (with $c_s = 1.55$)  and $\alpha^*$ w.r.t. $c_s$ (with $c_c = 0.58$).}

    \label{fig:tilt}
\end{figure}

Practically, the tilt parameter corresponds to a tunable control in modern LLM-based recommender systems, analogous to temperature or retrieval-weighting in retrieval-augmented generation (RAG) pipelines \citep{lewis2020retrieval, gao2023retrieval}.  Crucially, the structure of the optimal tilt parameter reveals that the real-world tuning must account for both the quality of the user's message and the desired diversity in the output. This mirrors emerging practices in commercial AI assistants, which increasingly adjust weighting or temperature dynamically based on query quality and uncertainty \citep{wu2024survey, bao2024large}.

\subsubsection{Comparison of Posterior Sampling and Tilted Distribution Schemes}
\label{sec:post_tilt_comparision}
This section compares tilted sampling and posterior sampling, and the optimal interaction policy under each.

\paragraph{Structure of optimal interaction policy:} A striking implication of Theorem \ref{thm:tilt} is that, when all the parameters $(\rho,n,v)$ are chosen optimally, the interaction policy is governed by the smaller of the two cost parameters. Unlike the posterior-sampling approach, where we have to jointly optimize communication and search, the tilted mechanism adapts primarily to the cheapest source of user effort. When communication is cheaper, the system relies on precise messaging; when search is cheaper, it relies on exploration. Further, as seen in Figure \ref{fig:tilt}, the optimal message precision under tilting declines more gradually as communication costs increase, indicating that the users have reason to provide more preference information under tilting.
Also, the tilted sampling generally relies on smaller recommendation sets, with the optimal recommendation set size exponent consistently no larger than that under posterior sampling. Together, these patterns show that tilting makes more effective use of both communication and search than direct posterior sampling.

\paragraph{Switching curves:} Under posterior sampling, the optimal policy switches from a hybrid policy to a search-only regime at a threshold on $c_c$ of $\bar c_c(c_s) < c_s$, whereas under tilted sampling, there is sharp phase transition at a threshold of $c_c=c_s$. The shift from the asymmetric switching curve under posterior sampling to the symmetric $c_c = c_s$ line under tilting results from a move from a passive, ``black-box'' agent to a highly optimized one. Under posterior sampling, the agent's recommendations merely mirror the user's noisy input, which causes the marginal benefit of communication to diminish rapidly as the user’s message becomes coarser (as communication cost increases). This makes search a cheaper lever for utility, leading users to abandon communication even when it is technically cheaper than search, leading to $\bar{c}_c(c_s) < c_s$.

In contrast, tilted sampling allows the agent to act as an active optimizer that makes best use of the user's signal by adjusting a deterministic tilt parameter. This capability restores the efficiency of user's communication; specifically, for $\bar c_c(c_s) < c_c < c_s$, tilting facilitates a shift where the user’s effort is reallocated from search (under posterior sampling) back to communication (under tilting). This effectively restores the marginal parity between communication and search, causing the switching curve to coincide exactly with $c_c = c_s$.

\section{Conclusion}
This work establishes a unified theoretical framework for understanding the collaborative interplay between human communication and AI-driven recommendation. By viewing this interaction through an information-theoretically inspired lens, we capture how users’ cognitive costs for communication and search shape optimal system design and overall performance. Our findings may help inform the design of efficient AI shopping assistants which use the right mix of communication and search for a given situation, and if possible, are tuned to sample recommendations from an appropriately ``tilted'' version of the distribution of what the user may like.

\bibliographystyle{plainnat}
\bibliography{references}

\appendix
\section{Pure Policies under Posterior Sampling}
\label{sec:pure_strategies}

To highlight the benefits of jointly optimizing message precision and recommendation set size under posterior sampling, we compare our main formulation with two benchmark policies in which only one lever is active. Specifically, we consider a pure search benchmark, where only the recommendation set size is optimized, and a pure communication benchmark, where only message precision is optimized. These restricted scenarios isolate the individual contributions of search and communication and allow us to quantify the performance gains from optimizing both levers simultaneously.

\subsection{Brute-Force Search (No Communication)} In this scenario, the user provides no preference information to the agent, which corresponds to setting the message precision $\kappa=0$. Mathematically, the high-dimensional approximation of the user's payoff, $\mathcal{P}_d(0,n)$, is found by substituting $\rho = w = 0$ into the joint optimization problem $\text{OPT}_{\text{Joint}}$, presented in Eq.~\eqref{eq:joint_opt}. The problem then simplifies to,
\begin{align*}
    \text{OPT}_{\text{Search}} = \max_{\alpha, x} \left\{ x - c_s \alpha \ \ \ \text{s.t.} \ \ \ \frac{1}{2} \log(1-x^2) = - \alpha \right\}.
\end{align*}
This constrained optimization problem can be solved analytically, yielding the optimal values
\begin{align}
    \alpha^*(c_s) := \frac{1}{2} \log \left( \frac{1}{2} + \sqrt{\frac{1}{4} + \frac{1}{c_s^2}} \right), \quad \text{and} \quad \text{OPT}_{\text{Search}} = \sqrt{1-e^{-2\alpha^*(c_s)}} - c_s \alpha^*(c_s).
\end{align}
From Theorem \ref{thm:asym_opt}, we also know that there exists $\bar c_c$ such that for all $c_c > \bar c_c$, we have that the optimal solution from $\text{OPT}_{\text{Joint}}$ matches that of $\text{OPT}_{\text{Search}}$. The following corollary provides the performance gap between the optimal performance of the the high-dimensional approximation and that of the original finite-dimensional problem.
\begin{corollary}
\label{cor:brute_force_gap}
Let $\alpha^*(c_s)$ denote the solution of $\text{OPT}_{\text{Search}}$ and let $n^*_\infty = \lfloor e^{d \alpha^*(c_s)} \rfloor$. We have,
\begin{align*}
    \left| \max_{n} \mathcal{P}_d(0,n) - \mathcal{P}_d(0,n^*_\infty) \right| \leq \tilde{O}\big(d^{-\frac{1}{2}}\big).
\end{align*}
\end{corollary}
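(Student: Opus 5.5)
The plan is to specialize the argument behind Theorem \ref{thm: perf_gap} to $\kappa=0$, where it simplifies considerably. With no communication we have $W\equiv$ uniform-marginal and the posterior equals the prior, so by rotational symmetry we may fix $\mathbf h$ and write $\langle \mathbf h,\boldsymbol\theta_i\rangle = Z_i$, where $Z_1,\dots,Z_n$ are i.i.d. with density $p_{0,d}(z)\propto (1-z^2)^{(d-3)/2}$. Hence $\mathcal P_d(0,n) = \EE[\max_i Z_i] - \frac{c_s}{d}\log n$ with $c_s = d\lambda_s$, and there is no communication cost. I will sandwich $\mathcal P_d(0,n)$ around the deterministic surrogate $g(\alpha):=\sqrt{1-e^{-2\alpha}} - c_s\alpha$, with $\alpha=\frac1d\log n$, uniformly in $n$.

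\emph{Step 1 (uniform approximation).} I would show that there is $K$ depending only on $c_s$ such that, for every $n\ge1$,
\[
\Bigl|\EE[\max_i Z_i] - \sqrt{1-e^{-2\alpha_n}}\Bigr| \le K\sqrt{\tfrac{\log d}{d}}, \qquad \alpha_n = \tfrac{1}{d}\log n .
\]
This is Proposition \ref{prop: utility_approx} with $\rho=0$, where the $W$-fluctuation term disappears. The uniformity in $\alpha$ does not come from that proposition directly, so I would prove it by hand using the one-dimensional tail $\Pr(Z>x) = e^{-d I(x) + O(\log d)}$ with $I(x) = -\frac12\log(1-x^2)$. This tail estimate follows from Laplace-type bounds on $\int_x^1(1-z^2)^{(d-3)/2}dz$ together with $\tilde C_d(0)=\Theta(\sqrt d)$.

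For the upper bound, take $x_+$ with $I(x_+) = \alpha_n + C\frac{\log d}{d}$. A union bound gives $\Pr(\max_i Z_i > x_+)\le n e^{-dI(x_+)+O(\log d)}\le d^{-1}$ once $C$ is large. For the lower bound, take $x_-$ with $I(x_-)=\alpha_n - C\frac{\log d}{d}$. Then $\Pr(\max_i Z_i < x_-) \le (1-e^{-dI(x_-)-O(\log d)})^n \le e^{-d}$. Finally, $x\mapsto\sqrt{1-e^{-2I}}$ has derivative bounded by $O(1/\sqrt{I})$, which blows up at $I=0$. Shifting $I$ by $\log d/d$ therefore moves $x$ by at most $O(\sqrt{\log d/d})$, and this holds uniformly, including small $\alpha$, where the square root gives exactly the stated rate.

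\emph{Step 2 (restrict to a compact range).} Since $\EE\max_i Z_i\le1$, any $n$ with $\alpha_n > 1/c_s$ has payoff below $0\le\mathcal P_d(0,1)$ up to $O(d^{-1/2})$ (as $\EE Z=0$), so such $n$ are irrelevant. It therefore suffices to consider $\alpha\in[0,1/c_s]$, where the constants in Step 1 are uniform.

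\emph{Step 3 (compare optimizers).} By Step 1,
\[
\max_n \mathcal P_d(0,n) \le \sup_{\alpha\ge0} g(\alpha) + K\sqrt{\log d/d} = \mathrm{OPT}_{\mathrm{Search}} + K\sqrt{\log d/d}.
\]
In the other direction, $n^*_\infty=\lfloor e^{d\alpha^*}\rfloor$ has $\alpha_{n^*_\infty} = \alpha^* - O(e^{-d\alpha^*}/d)$ when $\alpha^*>0$; note that $\alpha^*(c_s)>0$ for every finite $c_s$ by the closed form. Since $g$ is Lipschitz near $\alpha^*$, this gives $\mathcal P_d(0,n^*_\infty) \ge \mathrm{OPT}_{\mathrm{Search}} - K'\sqrt{\log d/d}$. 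Combining the two bounds, and using that the difference in the statement is trivially nonnegative, yields the claim. The main obstacle is the uniform-in-$n$ tail estimate in Step 1, specifically controlling the polynomial prefactors in $\Pr(Z>x)$ near $x\to0$ and $x\to1$; I would handle the $x\to1$ end using the compact range from Step 2.
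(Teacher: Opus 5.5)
Your argument is correct, and its outer structure matches the paper's. The paper justifies this corollary only by pointing to the proof of Theorem~\ref{thm: perf_gap}: approximate the payoff both at the finite-$d$ optimizer and at $n^*_\infty$ by the deterministic surrogate, confine the optimizer to $\alpha\le 1/c_s$ via $\EE\max_i Z_i\le 1$ and $\mathcal{P}_d(0,1)=0$, and compare through $\mathrm{OPT}_{\mathrm{Search}}$. That is exactly your Steps 2--3.

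The real difference is Step 1. The paper's route invokes Proposition~\ref{prop: utility_approx} at $\rho=0$, which means the two-dimensional $(W,X)$ large-deviation argument plus an $O(d^{-1/2})$ error from replacing $W$ by $0$. It then simply asserts that the resulting constant is uniformly bounded on the compact range. That proposition's constants are only pointwise in $\alpha$, though. At $\rho=0$ its last step $|t_{\mathrm{High}}-t^*|\le|J(t_{\mathrm{High}})-J(t^*)|/K_4$ degenerates as $\alpha\downarrow 0$, because $J(t)=-\frac12\log(1-t^2)$ has $J'(t^*)\to 0$. Since a priori $\alpha_{n^*_d}$ could be of order $1/d$, this small-$\alpha$ uniformity is exactly what the comparison needs.

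Your direct one-dimensional reduction (i.i.d.\ $Z_i$ with density $\propto(1-z^2)^{(d-3)/2}$, no $W$-fluctuation term) supplies it cleanly, together with the $\tfrac12$-H\"older bound $|\sqrt{1-e^{-2I_1}}-\sqrt{1-e^{-2I_2}}|\le\sqrt{2|I_1-I_2|}$. It also shows that the square-root rate comes only from the small-$\alpha$ end: for $\alpha$ bounded away from $0$, your Step 1 gives $\tilde O(1/d)$. The cost is that the argument is specific to $\kappa=0$.

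One detail to state explicitly. When $\alpha_n<C\log d/d$ the level $x_-$ does not exist, and $\Pr(\max_i Z_i<0)=2^{-n}$ is not small for small $n$. In that range use instead $\EE\max_i Z_i\ge\EE Z_1=0\ge\sqrt{1-e^{-2\alpha_n}}-\sqrt{2C\log d/d}$.
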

Mathematically, Corollary~\ref{cor:brute_force_gap} follows by arguments analogous to those used in the proof of Theorem~\ref{thm: perf_gap}. As shown in Proposition~\ref{prop:finite_phase_transition}, the pure search policy is not merely a theoretical baseline but becomes the \emph{optimal} interaction policy when communication costs are prohibitively high. In this regime, the preceding analysis yields a closed-form characterization of the optimal recommendation set size.

The analytic expression for $\alpha^(c_s)$ also allows us to characterize system performance under high search costs. A second-order expansion gives $\alpha^(c_s) \approx \frac{1}{2c_s^2}$, which implies that the optimal payoff satisfies $\text{OPT}_{\text{Search}} \approx \frac{1}{2c_s}$. Thus, the attainable payoff decays on the order of $1/c_s$, indicating a severe deterioration in performance as search becomes more costly for the user.

\subsection{Single recommendation with quality communication} In this scenario, the agent is restricted to providing a single recommendation ($n=1$). This is analogous to setting $\alpha = 0$ in the high-dimensional approximation, and the feasibility constraint in the $\text{OPT}_{\text{Joint}}$ problem, $I_{\rho}(w,x) = \alpha$, simplifies to $I_{\rho}(w,x)=0$. This equality holds only when the recommendation perfectly aligns with the communicated message ($w=\rho$), and there is no orthogonal component to explore ($x=0$). The optimization problem thus reduces to
\begin{align*}
    \text{OPT}_{\text{Comm}} = \max_{\rho} \left\{\rho^2 + \frac{c_c}{2} \log (1-\rho^2)\right\}.
\end{align*}
The optimal solution exhibits a (soft) threshold depending on $c_c$, given by
\begin{align*}
\rho^*(c_c) :=
\begin{cases}
    \sqrt{1-\tfrac{c_c}{2}}, & \text{if } c_c < 2, \\[6pt]
    0, & \text{otherwise},
\end{cases}
\qquad \text{and} \qquad
\text{OPT}_{\text{Comm}} = \begin{cases}
    1 - \tfrac{c_c}{2}\log \tfrac{2e}{c_c}, & \text{if } c_c < 2, \\[6pt] 0, & \text{otherwise}.
\end{cases}
\end{align*}
This result reveals that meaningful communication is only viable if the scaled cost $c_c$ is below a threshold of $2$. If the cost exceeds this point, the optimal policy for the user is to provide no information ($\rho^*(c_c)=0$), resulting in a payoff of zero, which is equivalent to the agent making a random guess. Note that, for any value of $c_c$, the result of $\text{OPT}_{\text{Joint}}$ matches that of $\text{OPT}_{\text{Comm}}$ only when $c_s$ goes to infinity. As before, the following corollary ensures that this simplified model accurately reflects the behavior of the finite-dimensional system.

\begin{corollary}
\label{cor:comm_gap}
Let $\rho^*(c_c)$ and $\rho^*(c_s,c_c)$ denote the solution of $\text{OPT}_{\text{Comm}}$ and $\text{OPT}_{\text{Joint}}$ respectively. Then, for any $c_s$, $\rho^*(c_c) = \rho^*(\infty,c_c) \geq \rho^*(c_s,c_c)$. Let $\kappa^*_\infty = \frac{\rho^*(c_c)}{1-(\rho^*(c_c))^2}d$. We have,
\begin{align*}
    \big| \max_{\kappa} \mathcal{P}_d(\kappa,1) - \mathcal{P}_d(\kappa^*_\infty,1) \big| \leq \tilde{O}\big(d^{-\frac{1}{2}}\big).
\end{align*}
\end{corollary}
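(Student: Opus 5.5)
The corollary has two parts, and I would treat them separately. The first is a deterministic comparison of optimizers. The second is a finite-$d$ error bound obtained by specializing the machinery behind Proposition \ref{prop: utility_approx}, Proposition \ref{prop: kl_approx} and Theorem \ref{thm: perf_gap} to $n=1$.

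\emph{Step 1: the comparison $\rho^*(c_c)=\rho^*(\infty,c_c)\ge\rho^*(c_s,c_c)$.}

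As $c_s\to\infty$, any $\alpha>0$ incurs penalty $c_s\alpha\to\infty$, while $f(\rho,\alpha)\le 1$. So the maximizer of $\mathrm{OPT}_{\mathrm{Joint}}$ is forced to $\alpha^*=0$. At $\alpha=0$ the constraint $I_\rho(w,x)\le 0$ forces $w=\rho$ and $x=0$, since $I_\rho\ge 0$ with unique zero at $(\rho,0)$. This gives $f(\rho,0)=\rho^2$, so the problem reduces to $\mathrm{OPT}_{\mathrm{Comm}}$. That identifies $\rho^*(\infty,c_c)=\rho^*(c_c)$, interpreting $\rho^*(\infty,\cdot)$ as the limit. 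I would justify taking the limit by a compactness argument on $[0,1)\times[0,A]$; the pieces are continuous, and the $\log(1-\rho^2)$ term keeps $\rho$ away from $1$.

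The inequality then follows from the monotonicity in Theorem \ref{thm:asym_opt}(2): $\partial\rho^*/\partial c_s\ge 0$, so $\rho^*(c_s,c_c)$ is nondecreasing in $c_s$ and bounded by its limit as $c_s\to\infty$. As a sanity check, I would also verify directly that $\partial_\rho f(\rho,\alpha)\le 2\rho=\partial_\rho f(\rho,0)$. This makes the increasing-differences structure transparent.

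\emph{Step 2: the finite-$d$ bound.}

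For $n=1$, $\mathcal{P}_d(\kappa,1)=\mathbb E[WW_1]-\lambda_c\,\mathrm{KL}$, using $\mathbb E[X_1]=0$ and independence. This equals $(\mathbb E W)^2-\lambda_c\,\mathrm{KL}$. With $\kappa=\frac{\rho}{1-\rho^2}(d-3)$, the concentration of $W$ at the mode $\rho$ (standard deviation $O(d^{-1/2})$) gives $(\mathbb EW)^2=\rho^2+O(d^{-1/2})$. Proposition \ref{prop: kl_approx} gives $\lambda_c\,\mathrm{KL}=-\frac{c_c}{2}\log(1-\rho^2)+O(d^{-1/2})$. Hence $|\mathcal{P}_d(\kappa(\rho),1)-g(\rho)|=O(d^{-1/2})$ for $g(\rho)=\rho^2+\frac{c_c}{2}\log(1-\rho^2)$, uniformly on $\rho\in[0,\bar\rho]$ with $\bar\rho<1$.

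The discrepancy between $d$ and $d-3$ in the definition of $\kappa^*_\infty$ changes $\rho$ by $O(1/d)$, which is negligible. Lower bound: $\max_\kappa\mathcal{P}_d(\kappa,1)\ge\mathcal{P}_d(\kappa^*_\infty,1)$ trivially. Upper bound: any $\kappa\ge0$ equals $\kappa(\rho)$ for a unique $\rho\in[0,1)$, so $\max_\kappa\mathcal{P}_d\le\sup_\rho g(\rho)+O(d^{-1/2})=g(\rho^*(c_c))+O(d^{-1/2})\le \mathcal{P}_d(\kappa^*_\infty,1)+O(d^{-1/2})$.

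\emph{Main obstacle.}

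The main obstacle is uniformity as $\rho\to1$. The KL error in Proposition \ref{prop: kl_approx} blows up like $(1-\rho^2)^{-3/2}$. To handle this, I would first show that near-optimal $\rho$ are confined to $[0,\bar\rho]$ with $\bar\rho<1$ depending only on $c_c$. For the lower tail I would use an exact bound: the KL grows like $\frac{d}{2}\log\frac1{1-\rho^2}$, more precisely it is at least a monotone function that exceeds $2d/c_c$ once $\rho$ is near $1$, while the utility is at most $1$. This lets me discard large $\rho$ before applying the uniform approximation. Any stray logarithmic factors, for example from Proposition \ref{prop: utility_approx}-type concentration, are absorbed in the $\tilde O$.
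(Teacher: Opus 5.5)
Your proposal is correct and follows essentially the paper's route. The comparison $\rho^*(c_c)=\rho^*(\infty,c_c)\ge\rho^*(c_s,c_c)$ comes from the $c_s$-monotonicity in Theorem~\ref{thm:asym_opt}. The finite-$d$ bound is the argument of Theorem~\ref{thm: perf_gap} specialized to $n=1$: a uniform approximation of $\mathcal{P}_d(\kappa(\rho),1)$ by $\rho^2+\frac{c_c}{2}\log(1-\rho^2)$ on a compact $\rho$-range, followed by the two-sided sandwich. Your use of the monotonicity of the KL term in $\kappa$ to discard $\rho$ near $1$ is a cleaner treatment of the non-uniform error in Proposition~\ref{prop: kl_approx} than the paper's ``the error term is lower order'' step.
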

Similar to Corollary~\ref{cor:brute_force_gap}, Corollary~\ref{cor:comm_gap} follows by arguments analogous to those used in the proof of Theorem~\ref{thm: perf_gap}. Corollary~\ref{cor:comm_gap} shows that under a pure communication regime, the user is incentivized to invest more effort in providing preference information. Moreover, the user’s achievable payoff declines rapidly with the communication cost parameter, eventually collapsing to zero at a finite threshold.

The preceding analysis highlights the inherent fragility of relying on pure policies. In both the search-only and communication-only benchmarks, the user’s attainable payoff declines rapidly as the corresponding cost parameters increase, rendering these single-lever approaches ineffective even under moderate cognitive burdens. This limitation reinforces our central insight: effective system design requires jointly optimizing message precision and recommendation set size in accordance with the underlying cost structure.

\section{Additional Numerical Results}

To validate the accuracy of our analytical approach, Figure~\ref{fig:finite_d} compares the optimal objective values and solutions of the finite-dimensional problem against high-dimensional asymptotic approximations, which are evaluated through simulation. The leftmost panel reports the overall payoff (objective value) and shows that the user’s payoff (dotted) converges to the theoretical limit (dashed) as the feature dimension $d$ increases, consistent with the convergence predicted by Theorem~\ref{thm: perf_gap}. The center and right panels depict the behavior of the optimal control variables. We observe that both the message precision parameter $\kappa$ and the recommendation set size $n$ are increasing functions of the feature dimension $d$. Specifically, $\kappa$ scales linearly with $d$, while $n$ scales exponentially with $d$.

\begin{figure}[!ht]
\centering
\includegraphics[width=\linewidth]{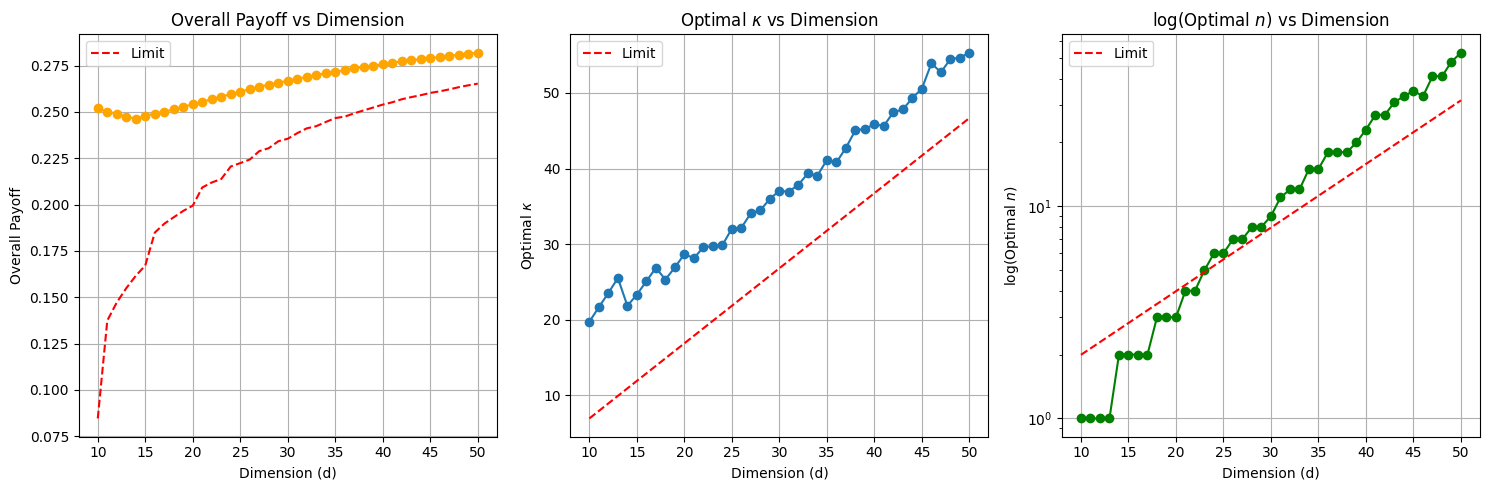}
\caption{Comparison of finite-$d$ simulations (markers) and high-dimensional asymptotics (dashed lines). Panels display the optimal expected payoff (Left), message precision $\rho$ (Center), and set size exponent $\alpha$ (Right) as a function of $d$.}

\label{fig:finite_d}
\end{figure}

\section{Weighted Preferences Extension}
\label{sec:weighted_prefer}
In the baseline model, user preferences were assumed to be uniformly distributed across feature dimensions, so that each attribute contributes equally to overall utility and incurs the same communication cost. In many practical recommendation settings, however, feature dimensions may differ both in how much they contribute to utility and in how costly it is to communicate preferences about them. In this section, we extend the framework to capture such structured heterogeneity by allowing different subsets of features to receive different weights in the utility function and to be associated with different communication costs. This formulation enables us to study how communication and search decisions are allocated across feature subsets when their contributions to utility and their communication costs are asymmetric.

We model the user’s preference vector as a composition of two orthogonal components:
\begin{align*}
    \mathbf{h} = \mu \mathbf{h}_1 + \sqrt{1-\mu^2} \mathbf{h}_2
\end{align*}
where, for simplicity, we assume that the set of features are divided into two subsets $d= d_1+d_2$, and $\mathbf{h}_{1}$ is supported on the first $d_1$ features (last $d_2$ elements of $\mathbf{h}_{1}$ are all zero), and similarly, $\mathbf{h}_{2}$ is supported on the last $d_2$ features. The scalar $\mu \in [0,1]$ determines the relative importance of the two feature groups.

The user’s message follows a similar decomposition, $\mathbf{m} = \mu \mathbf{m}_1 + \sqrt{1-\mu^2} \mathbf{m}_2$, where $\mathbf{m}_1$ and $\mathbf{m}_2$ are generated according to vMF distributions on $\mathcal{S}^{d_1-1}$ and $\mathcal{S}^{d_2-1}$ with precision parameters $\kappa_1$ and $\kappa_2$, respectively. To allow for differential ease of communication across feature groups, we also permit group-specific communication costs, denoted by $\lambda_{1,c}$ and $\lambda_{2,c}$.

The agent, upon receiving the message, constructs recommendation sets $\{\boldsymbol{\theta}_{1,1}, \dots,  \boldsymbol{\theta}_{1,n_1}\}$ and $\{\boldsymbol{\theta}_{2,1}, \dots,  \boldsymbol{\theta}_{2,n_2}\}$ over the two feature subsets, combining them multiplicatively to form the full recommendation menu. Thus, the total number of recommendations is given by $n = n_1 \times n_2$, where each product feature vector takes the form $\boldsymbol{\theta}_{ij} = \mu \boldsymbol{\theta}_{1,i} + \sqrt{1-\mu^2} \boldsymbol{\theta}_{2,j}$. The utility that the user receives from recommendation $\boldsymbol{\theta}_{ij}$ is given by
\begin{align*}
    u(\mathbf{h} , \boldsymbol{\theta}_{ij}) = \mu^2\langle \mathbf{h}_1 , \boldsymbol{\theta}_{1,i} \rangle + (1-\mu^2) \langle \mathbf{h}_2 , \boldsymbol{\theta}_{2,j} \rangle.
\end{align*}

Under this modeling, the overall optimization problem decomposes into two independent subproblems, one for each feature subset, with search and communication costs effectively scaled by their respective utility weights. The overall payoff of the user is given by
\begin{align*}
    \mathcal{N}_{d_1,d_2} & (\kappa_1,\kappa_2 , n_1,n_2) := \mu^2 \mathcal{P}_{d_1}(\kappa_1, n_1) + (1-\mu^2)\mathcal{P}_{d_2}(\kappa_2, n_2),
\end{align*}
where each term $\mathcal{P}_{d_1}(\kappa_1, n_1)$ follows the same structure as in the baseline model, with costs $\big(\frac{\lambda_s}{\mu^2}, \frac{\lambda_{1,c}}{\mu^2} \big)$, and similarly for $\mathcal{P}_{d_2}(\kappa_2, n_2)$ with cost parameters $\big(\frac{\lambda_s}{1-\mu^2}, \frac{\lambda_{2,c}}{1-\mu^2} \big)$.

This formulation highlights the interaction between communication and search across heterogeneous feature groups. For instance, when $\lambda_{2,c} \gg \lambda_s \gg \lambda_{1,c}$, the user rationally concentrates communication effort on the more easily articulated feature subset (the first $d_1$ dimensions), while relying on search to explore the harder-to-communicate features ($d_2$ dimensions). The agent, in turn, optimally sets $n_1 = O(1)$ to avoid redundancy in the well-communicated subspace while diversifying over the poorly communicated subspace by choosing $n_2 = O(e^{\alpha d_2})$ for some $\alpha>0$. This extension provides a natural and intuitive framework for modeling asymmetric feature importance: communication effort is directed toward salient, describable attributes, while search diversity compensates for uncertainty in latent dimensions.

\paragraph{Discussion on $\mu$} Although the optimization problem mathematically decouples into two independent subproblems, the weighting parameter $\mu$ serves as a critical lever for allocating cognitive effort across the two feature subspaces. The parameter $\mu$ influences the system through two reinforcing channels: marginal utility and effective cost. First, the weights determine each subspace’s contribution to the user’s total utility. A larger $\mu$ increases the value of alignment in the first feature subspace, thereby strengthening the incentive to optimize along those dimensions. Second, and more subtly, $\mu$ effectively rescales the cost parameters. As shown in the decomposition, the optimization for the first subspace is governed by effective cost parameters $(\lambda_{1,c}/\mu^2, \lambda_s/\mu^2)$. This creates an inverse relationship between importance and cost, as the subspace becomes more important (larger weight), the effective "price" of acquiring information and searching within that subspace decreases. Consequently, when $\mu$ is large, the system optimally invests heavily in the first subspace by demanding higher message precision and, when beneficial, greater search effort. As $\mu$ decreases, the effective costs for the first subspace rise while those for the second subspace fall, and attention shifts accordingly.

\section{Proofs of Essential Lemmas}

\subsection{Proof of Lemma \ref{lem:prelim_results}}
\label{sec:proof_lem_prelims}

\begin{proof}[Proof of Lemma \ref{lem:prelim_results}.]
We provide the proof of each part separately.
\begin{itemize}
    \item[(i)] The density function of the message fidelity $W$ and the uncommunicated component $\mathbf{Y}$ is provided in \cite[Chapter 9.3]{mardia2009directional}.
    \item[(ii)]  To prove mutual independence, it is sufficient to show that for any pair of distinct indices $i \neq j$, the random variables $X_i$ and $X_j$ are independent. Fix $\mathbf{m} \in \mathcal{S}^{d-1}$ and let $\mathcal{S}(\mathbf{m}) = \{ \mathbf{v} \in \mathcal{S}^{d-1} : \langle \mathbf{m}, \mathbf{v} \rangle = 0 \}$ be the equatorial $(d-2)$-sphere orthogonal to $\mathbf{m}$. We condition on the event $\mathbf{Y} = \mathbf{y} \in \mathcal{S}(\mathbf{m})$. Under this conditioning, the random variables $[X_i | \mathbf{Y}=\mathbf{y}] = \langle \mathbf{y}, \mathbf{Y}_i \rangle$ and $[X_j | \mathbf{Y}=\mathbf{y}] = \langle \mathbf{y}, \mathbf{Y}_j \rangle$ are functions of $\mathbf{Y}_i$ and $\mathbf{Y}_j$ respectively. Since $\mathbf{Y}_i$ and $\mathbf{Y}_j$ are i.i.d. samples from $\text{Unif}(\mathcal{S}(\mathbf{m}))$, the conditional variables $X_i$ and $X_j$ are also conditionally independent:
    \begin{align*}
    P(X_i \le a, X_j \le b | \mathbf{Y}=\mathbf{y}) = P(X_i \le a | \mathbf{Y}=\mathbf{y}) P(X_j \le b | \mathbf{Y}=\mathbf{y}).
    \end{align*}
    This gives us
    \begin{align*}
    P(X_i \le a, X_j \le b) = \int_{\mathbf{y} \in \mathcal{S}^{d-1}} P(X_i \le a | \mathbf{Y}=\mathbf{y}) P(X_j \le b | \mathbf{Y}=\mathbf{y}) p_d(\mathbf{y})   dy.
    \end{align*}
    Next, we invoke rotational invariance. Let $\mathcal{O}(\mathcal{S}(\mathbf{m}))$ be the group of orthogonal transformations (rotations and reflections) that map the subspace $\text{span}(\mathcal{S}(\mathbf{m}))$ onto itself. For any two vectors $\mathbf{y}_1, \mathbf{y}_2 \in \mathcal{S}(\mathbf{m})$, there exists a transformation $\mathbf{R} \in \mathcal{O}(\mathcal{S}(\mathbf{m}))$ such that $\mathbf{y}_2 = \mathbf{R} \mathbf{y}_1$. Because $\mathbf{Y}_i$ is uniformly distributed over $\mathcal{S}(\mathbf{m})$, its distribution is invariant under $\mathbf{R}$ (i.e., $\mathbf{R}\mathbf{Y}_i \overset{d}{=} \mathbf{Y}_i$). Therefore:$$\langle \mathbf{y}_2, \mathbf{Y}_i \rangle = \langle \mathbf{R} \mathbf{y}_1, \mathbf{Y}_i \rangle = \langle \mathbf{y}_1, \mathbf{R}^T \mathbf{Y}_i \rangle \overset{d}{=} \langle \mathbf{y}_1, \mathbf{Y}_i \rangle$$This demonstrates that the conditional probability $P(X_i \le a | \mathbf{Y}=\mathbf{y})$ is identical for all $\mathbf{y} \in \mathcal{S}(\mathbf{m})$. Consequently, the conditional distribution is equal to the marginal distribution: $P(X_i \le a | \mathbf{Y}=\mathbf{y}) = P(X_i \le a)$. Similarly, we have $P(X_j \le b | \mathbf{Y}=\mathbf{y})  = P(X_j \le b)$. By substituting this back in the integral, we have
    \begin{align*}
    P(X_i \le a, X_j \le b) = P(X_i \le a) P(X_j \le b).
    \end{align*}
    This holds for any pair $i \neq j$, and therefore the set of random variables $\{X_1, \dots, X_n\}$ are mutually independent.

    Next, we characterize the distribution of $X_i$'s. Since the uniform distribution on $\mathcal{S}^{d-1}$ is invariant under orthogonal transformations, the distribution of the inner product $X_i = \langle \mathbf{Y}, \mathbf{Y}_i \rangle$ is independent of the orientation of the reference vector $\mathbf{m}$. Therefore, without loss of generality, we can choose $\mathbf{m} = \mathbf{e}_d = (0, 0, \dots, 0, 1)$. For this choice of $\mathbf{m}$, we can generate samples from $\text{Unif}\big(\mathcal{S}(\mathbf{e}_d)\big)$ as follows: first sample $\tilde{\mathbf{Y}} \sim \text{Unif}\big(\mathcal{S}^{d-2}\big)$, then set $\mathbf{Y} = (\tilde{\mathbf{Y}}, 0)$. Similarly, we can write $\mathbf{Y}_i = (\tilde{\mathbf{Y}}_i, 0)$, where $\tilde{\mathbf{Y}}_i$'s are independent samples from $\text{Unif}\big(\mathcal{S}^{d-2}\big)$. Consequently, $X_i = \langle \mathbf{Y}, \mathbf{Y}_i \rangle = \langle \tilde{\mathbf{Y}}, \tilde{\mathbf{Y}}_i \rangle$. Note that $\tilde{\mathbf{Y}}_i \sim \text{Unif}\big(\mathcal{S}^{d-2}\big)$ is equivalent to $\tilde{\mathbf{Y}}_i$ following the vMF distribution $p_{0}(\cdot | \tilde{\mathbf{Y}})$ with $\kappa = 0$, and dimension $d-1$. Therefore, by Lemma \ref{lem:prelim_results}\ref{lem:fidelity_dist}, the probability density function of $X_i$ is given by $p_{0,d-1}(x) = \tilde{C}_{d-1}(0) (1-x^2)^{\frac{d-4}{2}}$.

    \item[(iii)] We first show that the posterior distribution $q_{\kappa}(\mathbf{h} | \mathbf{m})$ matches the vMF distribution $p_\kappa(\mathbf{h} | \mathbf{m})$ (as given in Eq. \eqref{eq:vMF_dist}). Using Bayes' rule:
    \begin{align*}
        q_{\kappa}(\mathbf{h} | \mathbf{m}) \propto p_\kappa(\mathbf{m} | \mathbf{h}) p(\mathbf{h}) \propto e^{\kappa \langle \mathbf{h}, \mathbf{m} \rangle}.
    \end{align*}
    According to Eq. \eqref{eq:vMF_dist}, the normalization constant in the above equation is given by $C_d(\kappa)$ which gives us,
    \begin{align*}
        q_{\kappa}(\mathbf{h} | \mathbf{m}) = C_d(\kappa) e^{\kappa \langle \mathbf{h}, \mathbf{m} \rangle}.
    \end{align*}

    Next, we show the simplification for the KL-divergence of the conditional distribution $q_{\kappa}(\cdot | \mathbf{m})$ from $p(\cdot)$. We have
    \begin{align*}
    D_{\mathrm{KL}}\big(q_{\kappa}(\cdot | \mathbf{m}) \| p(\cdot)\big) &= \EE_{\mathbf{h} \sim q_{\kappa}(\cdot | \mathbf{m})} \left[\log \frac{q_{\kappa}(\mathbf{h} | \mathbf{m})}{p(\mathbf{h})}\right] \\
    &= \EE_{\mathbf{h} \sim q_{\kappa}(\cdot | \mathbf{m})} \left[\log \left(\frac{C_d(\kappa) e^{\kappa \langle \mathbf{h}, \mathbf{m} \rangle}}{C_d(0)}\right)\right]\\
    &= \EE_{\mathbf{h} \sim q_{\kappa}(\cdot | \mathbf{m})} \left[\log C_d(\kappa) - \log C_d(0) + \kappa \langle \mathbf{h}, \mathbf{m} \rangle\right] \\
    &= \log C_d(\kappa) - \log C_d(0) + \kappa \cdot \EE_{\mathbf{h} \sim q_{\kappa}(\cdot | \mathbf{m})} [\langle \mathbf{h}, \mathbf{m} \rangle].
    \end{align*}
    Further, by Law of Iterated Expectation,
    \begin{align*}
        \EE_{\mathbf{m}} \left[ \EE_{\mathbf{h} \sim q_{\kappa}(\cdot | \mathbf{m})} [\langle \mathbf{h}, \mathbf{m} \rangle] \right] = \EE_{\mathbf{h},\mathbf{m}}[\langle \mathbf{h}, \mathbf{m} \rangle] = \EE[W],
    \end{align*}
    where $W$ is a random variable as defined in Lemma \ref{lem:prelim_results}\ref{lem:fidelity_dist}. Substituting this back in the calculation of KL-divergence, we have
    \begin{align*}
        \EE_{\mathbf{m}} \big[D_{\mathrm{KL}}\big(q_{\kappa} (\cdot | \mathbf{m})\| p(\cdot)\big) \big] = \kappa \EE[W] - \log \frac{C_d(0)}{C_d(\kappa)}.
    \end{align*}
\end{itemize}
\end{proof}

\subsection{Concentration of Message Fidelity}

\begin{lemma}[Concentration of $W$]
\label{lem:w_concentration}
Let $W\in[-1,1]$ follows the density function $p_{\kappa,d}(w)$ (see Lemma \ref{lem:prelim_results}). Suppose $\rho\in[0,1)$ is fixed and $\kappa=\frac{\rho}{1-\rho^2}d^{\star}$ with $d^{\star} = d-3$, then, for all $d^{\star} >0$,
\begin{align}
\label{eq:W-subgauss}
\mathbb{E}\Big[\exp\big(t (W - \mathbb{E}[W]) \big)\Big] \le \exp\left( \frac{t^2}{2 d^\star} \right), &&
\mathbb{P}\big(|W - \mathbb{E}[W]| > \varepsilon \big) \le 2 \exp\left( -\frac{d^\star \varepsilon^2}{2} \right).
\end{align}
Further, we have that there exists a universal constant $K$ such that,
\begin{align*}
   \frac{K (1-\rho^2)}{\sqrt{d^\star}} \exp \left( \frac{d^\star\rho^2}{1-\rho^2} + \frac{d^\star}{2} \log(1-\rho^2)\right) \leq \int_{-1}^1 \exp \left( \frac{d^\star\rho w}{1-\rho^2} + \frac{d^\star}{2} \log(1-w^2)\right) dw.
\end{align*}
and
\begin{align*}
    \int_{-1}^1 \exp \left( \frac{d^\star\rho w}{1-\rho^2} + \frac{d^\star}{2} \log(1-w^2)\right) dw \leq \sqrt{\frac{2\pi}{d^\star}} \exp \left( \frac{d^\star\rho^2}{1-\rho^2} + \frac{d^\star}{2} \log(1-\rho^2)\right).
\end{align*}
Finally, we also have that, there exits a universal constant $\bar K$ such that
\begin{align*}
    \mathbb{E}[|W-\rho|] \leq \frac{\bar K}{\sqrt{d^\star (1-\rho^2)}}.
\end{align*}
\end{lemma}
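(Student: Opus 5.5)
The plan is to treat $p_{\kappa,d}$ as a strongly log-concave density on $(-1,1)$ and derive each claim from that one structural fact. With $\kappa=\frac{\rho}{1-\rho^2}d^\star$ and $d^\star=d-3$ we have
\begin{align*}
p_{\kappa,d}(w)\propto \exp\big(d^\star\phi(w)\big), \qquad \phi(w):=\frac{\rho w}{1-\rho^2}+\frac12\log(1-w^2).
\end{align*}
The key computations are the following.
\begin{itemize}
\item $\phi'(w)=\frac{\rho}{1-\rho^2}-\frac{w}{1-w^2}$, which vanishes exactly at $w=\rho$.
\item $\phi''(w)=-\frac{1+w^2}{(1-w^2)^2}\le -1$ on $(-1,1)$.
\item $\phi(\rho)=\frac{\rho^2}{1-\rho^2}+\frac12\log(1-\rho^2)$, which is exactly the exponent appearing in both integral bounds.
\end{itemize}
So the potential $-d^\star\phi$ is $d^\star$-strongly convex and $\rho$ is the unique mode.

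\textbf{Sub-Gaussian bound.} I would invoke the standard fact that a $d^\star$-strongly log-concave measure satisfies a log-Sobolev inequality with constant $1/d^\star$ (Bakry--Émery). The interval $(-1,1)$ is convex and the density vanishes at the endpoints when $d^\star>0$, so restricting to it causes no trouble. The Herbst argument applied to the $1$-Lipschitz function $w\mapsto w$ then gives $\mathbb{E}[e^{t(W-\mathbb{E}W)}]\le e^{t^2/(2d^\star)}$. A Chernoff bound with the optimal choice $t=d^\star\varepsilon$, applied to both tails, yields $2e^{-d^\star\varepsilon^2/2}$.

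\textbf{Upper integral bound.} Strong concavity together with $\phi'(\rho)=0$ gives $\phi(w)\le\phi(\rho)-\frac12(w-\rho)^2$. Integrating the resulting Gaussian over $\mathbb{R}$ gives $\sqrt{2\pi/d^\star}\,e^{d^\star\phi(\rho)}$.

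\textbf{Lower integral bound.} I would restrict the integral to a window $[\rho,\rho+\ell]$ with $\ell\le(1-\rho)/2$. On this window $1-w\ge(1-\rho)/2$, hence $1-w^2\ge(1-\rho^2)/4$ and $|\phi''|\le 32/(1-\rho^2)^2$. Taylor's theorem then gives $d^\star\phi(w)\ge d^\star\phi(\rho)-16\,d^\star\ell^2/(1-\rho^2)^2$. I would choose $\ell=c(1-\rho^2)/\sqrt{d^\star}$, capped at $(1-\rho)/2$. Since $(1-\rho)/2\ge(1-\rho^2)/4$, the cap costs only a constant factor when $d^\star\ge1$. The result is an integral of at least $K(1-\rho^2)d^{\star-1/2}e^{d^\star\phi(\rho)}$.

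The main obstacle is the small-$d^\star$ edge case $0<d^\star<1$. There I would instead use the crude bound $\phi(w)\ge\phi(\rho)-\frac{16\ell^2}{(1-\rho^2)^2}$ on a window of length of order $1-\rho^2$, and absorb the difference into $K$.

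\textbf{Bound on $\mathbb{E}|W-\rho|$.} I would split it as
\begin{align*}
\mathbb{E}|W-\rho|\le \mathbb{E}|W-\mathbb{E}W|+|\mathbb{E}W-\rho|.
\end{align*}
The first term is at most $\sqrt{\operatorname{Var}W}\le 1/\sqrt{d^\star}$, using the MGF bound (or Brascamp--Lieb). For the second term I would use the classical fact that for a one-dimensional log-concave density the mean and mode differ by at most $\sqrt3$ standard deviations. This gives $|\mathbb{E}W-\rho|\le\sqrt{3/d^\star}$. The total is therefore $O(1/\sqrt{d^\star})$, which is at most $\bar K/\sqrt{d^\star(1-\rho^2)}$.

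If the mean--mode inequality is not to be cited, I would argue directly instead. Integrating $\frac{d}{dw}e^{d^\star\phi}$ over $(-1,1)$ gives zero, so $\mathbb{E}[g(W)]=g(\rho)$ with $g(w)=w/(1-w^2)$. Since $g'\ge1$, I would then control $|\mathbb{E}W-\rho|$ via the concentration of $W$ around its mean. This route produces the stated $(1-\rho^2)$-dependence naturally.
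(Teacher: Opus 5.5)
Your proof is correct. For the first three claims it follows the paper's approach:
\begin{itemize}
\item the sub-Gaussian bound via Bakry--\'Emery, Herbst and Chernoff;
\item the upper integral bound via $\phi(w)\le\phi(\rho)-\tfrac12(w-\rho)^2$ and a Gaussian integral;
\item the lower integral bound via a second-order Taylor bound on a window of width $\asymp(1-\rho^2)/\sqrt{d^\star}$ next to the mode.
\end{itemize}
Your window $[\rho,\rho+\ell]$ with $\ell=(1-\rho^2)/(4\sqrt{d^\star})\le(1-\rho)/2$ is actually tidier than the paper's device $\hat\rho=\max\{\rho,1/\sqrt{d^\star}\}$. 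That device's curvature constant $H_{\max}$ blows up when $d^\star=1$.

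The genuine difference is in the bound on $\mathbb{E}|W-\rho|$.
\begin{itemize}
\item The paper bounds $\mathbb{E}[(W-\rho)^2]$ directly as a ratio of integrals. The numerator uses the global curvature bound $I(w)\ge I(\rho)+\tfrac12 d^\star(w-\rho)^2$. The denominator uses the lower integral bound just proved. This is self-contained. However, the curvature $d^\star$ in the numerator and the local curvature $\asymp d^\star/(1-\rho^2)^2$ behind the denominator are mismatched as $\rho\to1$. The argument therefore loses a factor $(1-\rho^2)^{-1}$ and lands exactly on $\bar K/\sqrt{d^\star(1-\rho^2)}$.
\item You split the quantity into $\mathbb{E}|W-\mathbb{E}W|\le\sqrt{\mathrm{Var}\,W}\le 1/\sqrt{d^\star}$ plus the mean--mode gap. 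You control the gap by the Johnson--Rogers inequality $|\mathbb{E}W-\rho|\le\sqrt3\,\sigma$ for unimodal laws. This never uses the lower integral bound. It gives the $\rho$-uniform bound $(1+\sqrt3)/\sqrt{d^\star}$, which is strictly stronger than the statement. The cost is importing an external classical inequality.
\end{itemize}

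Two small remarks.
\begin{itemize}
\item Since $d$ is an integer, $d^\star>0$ already forces $d^\star\ge 1$, so your paragraph on $0<d^\star<1$ is unnecessary. Drop it rather than repair it: as stated, it is wrong. For real $d^\star\to0$ the integral stays below $2e^{d^\star\phi(\rho)}$, while the claimed lower bound grows like $(d^\star)^{-1/2}e^{d^\star\phi(\rho)}$. No universal $K$ can absorb that difference.
\item Your fallback identity $\mathbb{E}[g(W)]=g(\rho)$ with $g(w)=w/(1-w^2)$ is correct. Turning it into a bound on $|\mathbb{E}W-\rho|$ would still need tail control of $g$, which is unbounded near $\pm1$. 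The mean--mode route makes this unnecessary.
\end{itemize}
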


\begin{proof}
We prove the result in three parts.
\begin{enumerate}
    \item \textit{Concentration of $W$ around the mean:} Consider the potential function $I(w)$ such that the density is expressed as $p_{\kappa,d}(w) \propto e^{-I(w)}$, where
\begin{equation*}
I(w) := -\kappa w - \frac{d^\star}{2} \log(1-w^2)
\end{equation*}
We first compute the second derivative of $I(w)$ on the support $(-1, 1)$:
\begin{equation*}
I'(w) = -\kappa + \frac{d^\star w}{1-w^2}, \quad I''(w) = d^\star \left( \frac{1+w^2}{(1-w^2)^2} \right) \geq d^\star.
\end{equation*}
As case be easily observed, we have the uniform lower bound on the second derivative,
\begin{equation*}
I''(w) \ge I''(0) = d^\star.
\end{equation*}
By the Bakry-\'Emery theorem, the distribution of $W$ satisfies a Log-Sobolev Inequality with constant $ 1/d^\star$. It follows from Herbst's argument that $W$ is sub-Gaussian with parameter $ 1/d^\star$, i.e., we have
\begin{align*}
    \mathbb{E}\Big[\exp\big(t (W - \mathbb{E}[W]) \big)\Big] \le \exp\left( \frac{t^2}{2 d^\star} \right).
\end{align*}
We refer the readers to \citep[Theorem 5.2]{ledoux2001concentration} and \citep{bakry2013analysis} for more details on Bakry-\'Emery theorem and Herbst's argument. Next, the tail bound follows from a standard Chernoff's bound.

\item \textit{Bound on the integral:} As already shown, $I''(w) \geq d^\star$, which gives us that
\begin{align*}
    I(w) \geq I(\rho) + I'(\rho)(w-\rho) + \frac{1}{2} d^\star (w-\rho)^2, \ \ \ \forall w\in(-1,1).
\end{align*}
Under the condition that $\kappa=\frac{\rho}{1-\rho^2}d^{\star}$, we get that $I'(\rho) = 0$. As such, for all $w\in(-1,1)$,
\begin{align}
\label{eq:Iw_lower_bound}
    I(w) \geq I(\rho) + \frac{1}{2} d^\star (w-\rho)^2.
\end{align}
This in turn implies that
\begin{align*}
    \int_{-1}^1 e^{-I(w)} dw \leq e^{-I(\rho)} \int_{-\infty}^\infty \exp\left(-\frac{1}{2} d^\star (w-\rho)^2\right) dw = \sqrt{\frac{2\pi}{d^\star}}e^{-I(\rho)}.
\end{align*}
We define
\begin{equation*}
    \hat{\rho} = \max \Big\{ \rho , \frac{1}{\sqrt{d^\star}} \Big\},
\end{equation*}
and we consider the neighborhood $(-\hat{\rho}, \hat{\rho})$. We also define
\begin{align*}
    H_{\max} := \frac{1}{d^\star}\max_{w \in (-\hat{\rho}, \hat{\rho})} I''(w) = \frac{1}{d^\star} I''(\hat \rho) = \frac{1+\hat \rho^2}{(1-\hat \rho^2)^2}.
\end{align*}
Then, by Taylor's theorem, for all $w\in (-\hat{\rho}, \hat{\rho})$, there exists a $\xi \in [w,\rho]$, such that
\begin{align*}
    I(w) &= I(\rho) + I'(\rho)(w-\rho) + \frac{1}{2} I''(\xi) (w-\rho)^2\\
    & \stackrel{(a)}{=}I(\rho) + \frac{1}{2} I''(\xi) (w-\rho)^2\\
    & \leq I(\rho) + \frac{1}{2} \Big( \max_{\xi \in [w,\rho]}|I''(\xi)| \Big) \times (w-\rho)^2\\
    &\leq I(\rho) + \frac{1}{2} \Big( \max_{\xi \in (-\hat{\rho}, \hat{\rho})}|I''(\xi)| \Big) \times (w-\rho)^2\\
    & = I(\rho) + \frac{1}{2} d^\star H_{\max} (w-\rho)^2,
\end{align*}
where (a) uses $I'(\rho) = 0$. Substituting this into the integral yields:
\begin{align*}
\int_{-1}^1 e^{-I(w)} dw &\ge \int_{- \hat \rho }^{\hat \rho} e^{-I(w)}   dw\\
&\geq \int_{- \hat \rho }^{\hat \rho} \exp\left(-I(\rho) -\frac{1}{2} d^\star H_{\max} (w-\rho)^2\right) dw\\
&= e^{-I(\rho)} \times \frac{1}{\sqrt{d^\star H_{\max}}} \int_{- (\hat \rho + \rho)\sqrt{d^\star H_{\max}} }^{(\hat \rho - \rho) \sqrt{d^\star H_{\max}}} \exp\left(-\frac{1}{2} u^2\right) du
\end{align*}
where we used the substitution $u = \sqrt{d^{\star} H_{\max}}(w-\rho)$. Here, we have that
\begin{align*}
    (\hat \rho - \rho) \sqrt{d^\star H_{\max}} = \Big( \max \Big\{ \rho , \frac{1}{\sqrt{d^\star}} \Big\} - \rho\Big) \sqrt{d^\star H_{\max}} \ge 0,
\end{align*}
and
\begin{align*}
    (\hat \rho + \rho) \sqrt{d^\star H_{\max}} &= \Big( \max \Big\{ \rho , \frac{1}{\sqrt{d^\star}} \Big\} + \rho\Big) \sqrt{d^\star H_{\max}} \geq \Big( 1 + \rho\sqrt{d^\star }\Big) \sqrt{\frac{1+\rho^2}{(1-\rho^2)^2}} \geq 1.
\end{align*}
Substituting these in the previous equation gives us that
\begin{align*}
    \int_{-1}^1 e^{-I(w)} dw &\ge e^{-I(\rho)} \times \frac{1}{\sqrt{d^\star H_{\max}}} \int_{-1}^{0} \exp\left(-\frac{1}{2} u^2\right) du\\
    &= e^{-I(\rho)} \frac{1-\rho^2}{\sqrt{d^\star(1+\rho^2)}} \int_{-1}^{0}\exp\left(-\frac{1}{2} u^2\right) du \geq \frac{K(1-\rho^2)}{\sqrt{d^\star}}e^{-I(\rho)},
\end{align*}
where we choose $K = \frac{1}{\sqrt{2}} \int_{-1}^{0}\exp\left(-\frac{1}{2} u^2\right) du$.

\item \textit{Second moment around $\rho$:} We have
\begin{align*}
    \mathbb{E}[(W-\rho)^2] &= \Big[ \int_{-1}^1 \exp \left( -I(w) \right) dw \Big]^{-1} \int_{-1}^1 (w-\rho)^2 \exp \left( -I(w)\right) dw \allowdisplaybreaks\\
    & \stackrel{(a)}{\leq} \frac{\sqrt{d^\star}}{K (1-\rho^2)} e^{I(\rho)} \int_{-1}^1 (w-\rho)^2 \exp \left( -I(w)\right) dw\allowdisplaybreaks\\
    & \stackrel{(b)}{\leq} \frac{\sqrt{d^\star}}{K (1-\rho^2)} e^{I(\rho)} \int_{-1}^1 (w-\rho)^2 \exp \left( -I(\rho) -\frac{1}{2} d^\star (w-\rho)^2\right) dw\allowdisplaybreaks\\
    & \leq \frac{\sqrt{d^\star}}{K (1-\rho^2)} \int_{-\infty}^\infty (w-\rho)^2 \exp \left( -\frac{1}{2} d^\star (w-\rho)^2\right) dw\allowdisplaybreaks\\
    & = \frac{\sqrt{d^\star}}{K (1-\rho^2)} \sqrt{\frac{2\pi}{(d^\star)^3}} = \frac{\sqrt{2\pi}}{K d^\star (1-\rho^2)},
\end{align*}
where (a) follows by the bound on $\int_{-1}^1 \exp \left( -I(w) \right) dw $ provided in second part, and (b) follows by Eq. \eqref{eq:Iw_lower_bound}. Now the result follows by using $\mathbb{E}|W-\rho| \leq \sqrt{\mathbb{E} [(W-\rho)^2]}$ and choosing $\bar K = \big( \sqrt{2\pi}/ K \big)^{1/2}$.
\end{enumerate}
\end{proof}

\section{Proofs of Results in Section \ref{sec:limiting_d_results}}

\subsection{Proof of Proposition \ref{prop: utility_approx}}
\label{sec:proof_prop_util_approx}

\begin{proof}
Recall the representation (Eq.~\eqref{eq:util_representation}), $\langle \mathbf{h},\boldsymbol{\theta}_i\rangle  =  W W_i  +  \sqrt{1-W^2} \sqrt{1-W_i^2} X_i,$ where, conditional on the message $\mathbf{m}$,
\begin{align*}
W=\langle \mathbf{h},\mathbf{m}\rangle,\qquad W_i=\langle \boldsymbol{\theta}_i,\mathbf{m}\rangle,\qquad X_i=\langle \mathbf{Y},\mathbf{Y}_i\rangle,\quad \langle \mathbf{Y},\mathbf{m}\rangle=\langle \mathbf{Y}_i,\mathbf{m}\rangle=0,
\end{align*}

Let $\rho\in[0,1)$ and $\alpha\geq 0$ be fixed and suppose $\kappa = \frac{\rho}{1-\rho^2}d^{\star}$, where $d^{\star}=d-3$. We first bound the difference between the actual utility and the utility evaluated at the concentration point $\rho$. For simplicity of notations, we define $U_i(W) := W W_i + \sqrt{1-W^2} \sqrt{1-W_i^2} X_i$. Using the fact that $W_i , X_i \in [-1,1]$ and the triangle inequality, we have
\begin{align*}
\Big| U_i(W) - U_i(\rho) \Big|
&\leq |W_i| |W-\rho| + \sqrt{1-W_i^2} |X_i| \left| \sqrt{1-W^2} - \sqrt{1-\rho^2} \right| \allowdisplaybreaks\\
&\leq |W-\rho| + \left| \sqrt{1-W^2} - \sqrt{1-\rho^2} \right| \allowdisplaybreaks\\
& \le |W-\rho| \left( 1 + \frac{2}{\sqrt{1-\rho^2}} \right) \le  \frac{3|W-\rho|}{\sqrt{1-\rho^2}},
\end{align*}
where we use the identity for the difference of square roots:
\begin{align}
\label{eq:sqrt_1_minus_w_sq}
\left| \sqrt{1-W^2} - \sqrt{1-\rho^2} \right| = \frac{|W^2 - \rho^2|}{\sqrt{1-W^2} + \sqrt{1-\rho^2}} \leq \frac{|W+\rho||W-\rho|}{\sqrt{1-\rho^2}} \leq \frac{2|W-\rho|}{\sqrt{1-\rho^2}}.
\end{align}
From Lemma \ref{lem:w_concentration}, we have that
\begin{align*}
   \mathbb{E}[|W-\rho|] \leq \frac{\bar K}{\sqrt{d^\star (1-\rho^2)}}.
\end{align*}
By integrating these results, the expected maximum utility satisfies:
\begin{align}
\label{eq:W_to_rho_max_util}
    \mathbb{E} \Big[ \big| \max_i U_i(W) - \max_i U_i(\rho) \big| \Big] \leq \frac{3 \bar K}{\sqrt{d^\star}(1-\rho^2)}.
\end{align}

We define the simplified utility function as,
\begin{align} \label{eq:utility_def}
V(w,x) := \rho w + \sqrt{1-\rho^2} \sqrt{1-w^2} x.
\end{align}
We define the tail probability of $G(t)$ as the probability that a single sample ($W_1,X_1$) yields a utility of at least $t$,
\begin{align}
\label{eq:tail_prob_def}
G(t) := \mathbb{P}\left( V(W, X) \geq t \right) = \iint_{\mathcal{A}_t} p_{\kappa, d}(w) \times p_{0, d-1}(x) dw  dx,
\end{align}
where the set $\mathcal{A}_t$ is the region of the domain satisfying the utility threshold, i.e., $\mathcal{A}_t := \{ (w, x) \in (-1,1)^2 : V(w, x) \geq t \}.$ Finally, we denote the minimum rate required to achieve utility $t$ as $J(t)$, which corresponds to the optimization problem:
\begin{align} \label{eq:min_rate_def}
J(t) := \inf_{w,x \in (-1,1)} \big\{I_\rho(w, x) \ \text{ such that } V(w,x) \geq t \big\}.
\end{align}
Using the closed-form expression for the product of the marginal densities  and , we can express the joint density in terms of the large deviations rate function:
\begin{align}
\label{eq:pdf_G}
p_{\kappa, d}(w) p_{0, d-1}(x) = \frac{A_{\rho, d}}{\sqrt{1-x^2}} \exp\left(-d^{\star} I_{\rho}(w,x)\right),
\end{align}
where the normalization constant is given by
\begin{align*}
A_{\rho, d}^{-1} = \int_{-1}^1 \int_{-1}^1 \frac{1}{\sqrt{1-x^2}} \exp\left(-d^{\star} I_{\rho}(w,x)\right)dw dx.
\end{align*}
We prove the main result in multiple parts. The first part is the following claim that uses the result in Lemma \ref{lem:w_concentration}.

\begin{claim}
\label{claim:A_rho_d_bound}
Suppose $K$ is a universal constant as mentioned in Lemma \ref{lem:w_concentration}. Then, we have
\begin{align*}
   \frac{d - 4}{2\pi} \leq A_{\rho, d} \leq \frac{d -3}{K^2}.
\end{align*}
\end{claim}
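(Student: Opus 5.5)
The plan is to factor the normalization integral $A_{\rho,d}^{-1}$ into a $w$-integral and an $x$-integral, and then bound each factor with Lemma \ref{lem:w_concentration}. The lower bound on $A_{\rho,d}$ comes out exactly as claimed. The upper bound, as worded, only holds up to a factor $(1-\rho^2)$ that I cannot remove, so it needs clarification.

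\textbf{Step 1: factorization.} Expanding Eq.~\eqref{eq:rate_function} gives
\begin{align*}
\frac{e^{-d^\star I_\rho(w,x)}}{\sqrt{1-x^2}} = e^{-d^\star\left(\frac{\rho^2}{1-\rho^2}+\frac12\log(1-\rho^2)\right)}\, \exp\Big(\tfrac{d^\star\rho w}{1-\rho^2}+\tfrac{d^\star}{2}\log(1-w^2)\Big)\,(1-x^2)^{\frac{d^\star-1}{2}}.
\end{align*}
Hence $A_{\rho,d}^{-1}=\mathcal{W}\cdot\mathcal{X}$, where
\begin{align*}
\mathcal{W}:=e^{I(\rho)}\int_{-1}^1 e^{-I(w)}dw, \qquad \mathcal{X}:=\int_{-1}^1(1-x^2)^{\frac{d-4}{2}}dx,
\end{align*}
with $I(\cdot)$ the potential from the proof of Lemma \ref{lem:w_concentration}. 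That lemma gives directly
\begin{align*}
\frac{K(1-\rho^2)}{\sqrt{d^\star}}\le\mathcal{W}\le\sqrt{\frac{2\pi}{d^\star}}.
\end{align*}

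\textbf{Step 2: the $x$-factor.} For the upper bound, use $1-x^2\le e^{-x^2}$ to get $\mathcal{X}\le\int_{\mathbb{R}} e^{-(d-4)x^2/2}\,dx=\sqrt{2\pi/(d-4)}$. For the lower bound, apply the lower integral bound of Lemma \ref{lem:w_concentration} with $\rho=0$ and $d^\star$ replaced by $d-4$. Equivalently, restrict to $|x|\le 1/\sqrt{d-4}$ and use $\log(1-x^2)\ge -2x^2$ there. Either route gives $\mathcal{X}\ge K/\sqrt{d-3}$, up to adjusting how the universal constant is normalized.

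\textbf{Step 3: lower bound on $A_{\rho,d}$.} Combining the two upper bounds,
\begin{align*}
A_{\rho,d}^{-1}\le \frac{2\pi}{\sqrt{(d-3)(d-4)}}\le\frac{2\pi}{d-4}, \qquad\text{so}\qquad A_{\rho,d}\ge\frac{d-4}{2\pi},
\end{align*}
which is exactly the claimed lower bound.

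\textbf{Step 4: upper bound on $A_{\rho,d}$ (the obstacle).} Combining the two lower bounds gives
\begin{align*}
A_{\rho,d}^{-1}\ge \frac{K^2(1-\rho^2)}{d-3},
\end{align*}
i.e.\ $A_{\rho,d}\le (d-3)/\big(K^2(1-\rho^2)\big)$. This is weaker than the claim by the factor $(1-\rho^2)$.

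The factor cannot be removed by a sharper estimate of $\mathcal{W}$. Laplace's method gives $\mathcal{W}\approx\sqrt{2\pi/d^\star}\,(1-\rho^2)/\sqrt{1+\rho^2}$, so the $(1-\rho^2)$ factor is genuine. Consequently, with $K$ universal, the bound $A_{\rho,d}\le(d-3)/K^2$ fails for $\rho$ close to $1$ and large $d$.

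The way I would make the stated form hold is to use that $\rho$ is fixed throughout Proposition \ref{prop: utility_approx}. Then $K(1-\rho^2)^{1/2}$ can be renamed as a $\rho$-dependent constant $K$, and the bound reads $A_{\rho,d}\le(d-3)/K^2$ with $K=K(\rho)$. Alternatively, one can restrict to a range $\rho\le\rho_0<1$ and shrink the universal constant accordingly. The claim's upper bound should be read in one of these two senses.

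This reading is harmless downstream, since the proposition's error constant $K_{\ref{prop: utility_approx}}$ is already allowed to depend on $\rho$.
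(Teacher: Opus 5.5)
Your proof follows the paper's route: factor $A_{\rho,d}^{-1}$ into the $w$-integral and the $x$-integral, bound each with Lemma~\ref{lem:w_concentration} (the $x$-factor via $\rho=0$, $d^\star\to d^\star-1$), and get the lower bound $A_{\rho,d}\ge (d-4)/(2\pi)$ exactly as the paper does. Your reservation about the upper bound is correct and is a genuine error in the paper, not in your argument: the paper reaches $K^2/d^\star\le A_{\rho,d}^{-1}$ only by silently dropping the factor $(1-\rho^2)$ from the $w$-integral lower bound. Since $d^\star A_{\rho,d}^{-1}\to 2\pi(1-\rho^2)/\sqrt{1+\rho^2}$, the stated bound fails for $\rho$ near $1$ with a universal $K$. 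So the claim needs your $\rho$-dependent constant $K\sqrt{1-\rho^2}$, or a restriction $\rho\le\rho_0<1$. This is harmless downstream, because the constants in Claim~\ref{claim:g_upper_bound} and Proposition~\ref{prop: utility_approx} may depend on $\rho$.
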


\begin{proof}[Proof of Claim \ref{claim:A_rho_d_bound}]
We have that
\begin{align*}
    \frac{1}{A_{\rho, d}} &= \int_{-1}^1 \int_{-1}^1 \frac{1}{\sqrt{1-x^2}} \exp\left(-d^{\star} I_{\rho}(w,x)\right)dw dx\\
    & = \int_{-1}^1 \int_{-1}^1  \exp\left(\frac{d^{\star} \rho(w - \rho)}{1-\rho^2} + \frac{d^{\star}}{2} \log\frac{(1-w^2)}{(1-\rho^2)} + \frac{d^{\star}-1}{2} \log(1-x^2) \right)dw dx\\
    & = \int_{-1}^1  \exp\left(\frac{d^{\star} \rho(w-\rho)}{1-\rho^2} + \frac{d^{\star}}{2} \log\frac{(1-w^2)}{(1-\rho^2)} \right)dw \int_{-1}^1   \exp\left( \frac{d^{\star}-1}{2} \log(1-x^2) \right)dx.
\end{align*}
Then, from Lemma \ref{lem:w_concentration}, we know that
\begin{align*}
   K \frac{1-\rho^2}{\sqrt{d^\star}}\leq \int_{-1}^1  \exp\left(\frac{d^{\star} \rho(w-\rho)}{1-\rho^2} + \frac{d^{\star}}{2} \log\frac{(1-w^2)}{(1-\rho^2)} \right)dw \leq \sqrt{\frac{2\pi}{d^\star}}
\end{align*}
Next, by simply substituting $\rho=0$ and $d^{\star} \rightarrow d^{\star}-1$ in the previous equation,
\begin{align*}
   \frac{K}{\sqrt{d^\star-1}} \leq  \int_{-1}^1   \exp\left( \frac{d^{\star}-1}{2} \log(1-x^2) \right)dx \leq \sqrt{\frac{2\pi}{d^\star -1}}.
\end{align*}
Thus,
\begin{align*}
   \frac{K^2}{d^\star} \leq \frac{1}{A_{\rho, d}} \leq \frac{2\pi}{ d^\star -1}.
\end{align*}
This completes the proof of the claim.
\end{proof}

Next we make the claim presenting an upper bound on the probability $G(t)$.
\begin{claim}
\label{claim:g_upper_bound}
There exists a universal constant $K_1$ such that,
\begin{align*}
    G(t) \leq K_1 d^{\star} \exp\left(-d^{\star} J(t)\right).
\end{align*}
\end{claim}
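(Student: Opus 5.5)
The plan is to bound the integrand pointwise on $\mathcal{A}_t$ by its worst-case value $\exp(-d^\star J(t))$. What remains is then an explicit integral, and Claim \ref{claim:A_rho_d_bound} bounds the normalizing constant $A_{\rho,d}$. No Laplace-type asymptotics are needed, so the constant comes out universal.

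\emph{Step 1 (representation).} Start from the definition \eqref{eq:tail_prob_def} and the density identity \eqref{eq:pdf_G}:
\begin{align*}
G(t) = \iint_{\mathcal{A}_t} \frac{A_{\rho,d}}{\sqrt{1-x^2}} \exp\big(-d^\star I_\rho(w,x)\big)\, dw\, dx .
\end{align*}
If $\mathcal{A}_t=\emptyset$, then $G(t)=0$ and $J(t)=+\infty$, so the claim holds trivially (interpreting $e^{-\infty}=0$). Assume from now on that $\mathcal{A}_t\neq\emptyset$.

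\emph{Step 2 (pointwise bound).} Every $(w,x)\in\mathcal{A}_t$ satisfies $V(w,x)\ge t$, so it is feasible for the infimum \eqref{eq:min_rate_def}. Hence $I_\rho(w,x)\ge J(t)$, and therefore $\exp(-d^\star I_\rho(w,x)) \le \exp(-d^\star J(t))$ on $\mathcal{A}_t$. Pulling this factor out and enlarging the domain to $(-1,1)^2$ gives
\begin{align*}
G(t) \le A_{\rho,d}\, e^{-d^\star J(t)} \int_{-1}^1 dw \int_{-1}^1 \frac{dx}{\sqrt{1-x^2}} = 2\pi\, A_{\rho,d}\, e^{-d^\star J(t)} .
\end{align*}
The leftover weight $1/\sqrt{1-x^2}$ is integrable, with integral $\pi$, which is exactly why it was kept outside the exponent in \eqref{eq:pdf_G}.

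\emph{Step 3 (normalizing constant).} Claim \ref{claim:A_rho_d_bound} gives $A_{\rho,d}\le (d-3)/K^2 = d^\star/K^2$, where $K$ is the universal constant from Lemma \ref{lem:w_concentration}. Combining with Step 2 yields $G(t)\le \frac{2\pi}{K^2}\, d^\star e^{-d^\star J(t)}$, so the claim holds with the universal constant $K_1 := 2\pi/K^2$.

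The only step needing care is verifying identity \eqref{eq:pdf_G}: the $x$-density $(1-x^2)^{(d-4)/2}$ must equal $(1-x^2)^{d^\star/2}/\sqrt{1-x^2}$, and the $w$-part with $\kappa=\frac{\rho}{1-\rho^2}d^\star$ must match $e^{-d^\star I_\rho}$ up to a $(w,x)$-independent factor absorbed in $A_{\rho,d}$. Both checks are direct substitutions. Since the argument uses no property of $t$ beyond the definition of $J(t)$, the bound is valid for every $t$ and every $\rho\in[0,1)$.
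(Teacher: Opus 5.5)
Your Steps 1--3 follow the paper's proof line for line: you bound $e^{-d^\star I_\rho}$ by $e^{-d^\star J(t)}$ on $\mathcal{A}_t$, enlarge the domain to $(-1,1)^2$ to pick up the factor $2\pi$, and invoke Claim~\ref{claim:A_rho_d_bound} to get $K_1=2\pi/K^2$.

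The problem is Step~3, and you inherit it from the paper. The bound $A_{\rho,d}\le d^\star/K^2$ does not follow from Lemma~\ref{lem:w_concentration}. That lemma's lower bound on the $w$-integral carries a factor $(1-\rho^2)$, so the proof of Claim~\ref{claim:A_rho_d_bound} actually gives only $1/A_{\rho,d}\ge K^2(1-\rho^2)/d^\star$. The loss is real and not an artifact of the proof. The $w$-part of the rate function has curvature $(1+\rho^2)/(1-\rho^2)^2$ at $w=\rho$, so the $w$-integral is of order $(1-\rho^2)/\sqrt{d^\star}$. Hence $A_{\rho,d}\asymp d^\star/(1-\rho^2)$, which is unbounded relative to $d^\star$ as $\rho\to1$. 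So your Steps~2--3 yield only $K_1=2\pi/\big(K^2(1-\rho^2)\big)$. That is enough for Proposition~\ref{prop: utility_approx}, where $\rho$ is fixed and constants may depend on it. It does not support your closing assertion that the constant is universal and the bound holds uniformly over $\rho\in[0,1)$.

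To get a genuinely universal $K_1$, do not enlarge $\mathcal{A}_t$ to the full square. Split into two cases:
\begin{itemize}
\item If $t\le\rho^2$, then $(\rho,0)\in\mathcal{A}_t$, so $J(t)=0$ and the claim is trivial for any $K_1\ge1$.
\item If $t>\rho^2$, every $(w,x)\in\mathcal{A}_t$ satisfies $\rho w+\sqrt{1-\rho^2}\sqrt{1-w^2}>\rho^2$. Writing $\rho=\cos\epsilon$ and $w=\cos\phi$, this reads $\cos(\phi-\epsilon)>\cos^2\epsilon$. That forces $\phi=O(\sin\epsilon)$, and hence $1-w=O(1-\rho^2)$.
\end{itemize}
In the second case, $\iint_{\mathcal{A}_t}(1-x^2)^{-1/2}\,dw\,dx\le C\pi(1-\rho^2)$ for a universal $C$. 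This factor cancels the $(1-\rho^2)^{-1}$ in the corrected bound on $A_{\rho,d}$, giving $K_1=C\pi/K^2$.
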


\begin{proof}[Proof of Claim \ref{claim:g_upper_bound}]
From the definition of $J(t)$, we have that $I_\rho(w,x)\geq J(t)$ for all $(w,x) \in \mathcal{A}_t$. This gives us that
\begin{align*}
G(t) &\leq \exp\left(-d^{\star} J(t)\right) \cdot A_{\rho, d} \iint_{\mathcal{A}_t} \frac{1}{\sqrt{1-x^2}} dw dx \leq 2\pi A_{\rho, d}\exp \left(-d^{\star} J(t)\right),
\end{align*}
where we use that
\begin{align*}
    \iint_{\mathcal{A}_t} \frac{1}{\sqrt{1-x^2}} dw dx \leq \int_{-1}^1 \int_{-1}^1 \frac{1}{\sqrt{1-x^2}} dw dx = 2\pi.
\end{align*}
Afterwards, the result follows by using that $A_{\rho, d} \leq \frac{d-3}{K^2} = \frac{d^{\star}}{K^2}$, and choosing the constant $K_1 = \frac{2\pi}{K^2}$. This completes the proof of the claim.
\end{proof}

Next, we prove the lower bound on the probability $G(t)$.
\begin{claim}
\label{claim:g_lower_bound}
There exists a constant $K_2$ such that,
\begin{align*}
    G(t) \geq \frac{K_2}{\sqrt{d^\star}} \exp\left(-d^{\star} J(t)\right).
\end{align*}
\end{claim}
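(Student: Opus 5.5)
We will bound $G(t)$ from below by a localized Laplace-type argument around a minimizer of the rate problem defining $J(t)$ in Eq. \eqref{eq:min_rate_def}, using the exact density representation \eqref{eq:pdf_G} together with the lower bound $A_{\rho,d}\geq (d-4)/(2\pi)$ from Claim \ref{claim:A_rho_d_bound}. The factor $\sqrt{1-x^2}^{-1}\geq 1$ in \eqref{eq:pdf_G} can be dropped. This gives
\begin{align*}
G(t)\geq A_{\rho,d}\iint_{\mathcal{A}_t}\exp\big(-d^\star I_\rho(w,x)\big)\,dw\,dx .
\end{align*}
It therefore suffices to exhibit a subset $\mathcal{B}\subset\mathcal{A}_t$ of area of order $(d^\star)^{-3/2}$ on which $I_\rho\leq J(t)+O(1/d^\star)$. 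Multiplying by $A_{\rho,d}\asymp d^\star$ then gives the claimed order $(d^\star)^{-1/2}e^{-d^\star J(t)}$.

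\emph{Case of typical $t$.} First we treat the case $J(t)=0$. This happens exactly when $t\leq V(\rho,0)=\rho^2$, since $I_\rho\geq 0$ with unique zero at $(\rho,0)$. Here $\mathcal{A}_t$ contains the half-plane-like region $\{w\geq\rho,\ x\geq 0\}$ intersected with a neighborhood of $(\rho,0)$, because $V$ is increasing in $x$ and, near $(\rho,0)$ with $x\geq 0$, $\partial_w V=\rho - \ldots\geq 0$ up to a small correction. We take $\mathcal{B}$ to be a box of side $c/\sqrt{d^\star}$ in that quadrant. There $I_\rho\leq C/d^\star$ by the quadratic Taylor bound used in Lemma \ref{lem:w_concentration} (with $H_{\max}$ bounded for $\rho<1$ fixed), which yields $G(t)\geq K_2'>0$. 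This is stronger than needed.

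\emph{Main case, $J(t)>0$ finite.} Let $(w_0,x_0)$ attain $J(t)$. The function $I_\rho$ is continuous and coercive: it tends to $+\infty$ at the boundary of $(-1,1)^2$. Hence a minimizer exists in the interior, and it lies on $\{V=t\}$. The Lagrange condition gives $\nabla I_\rho(w_0,x_0)=\lambda\nabla V(w_0,x_0)$ with $\lambda>0$ and $\nabla V\neq 0$. We work in local coordinates $(s,r)$, where $s$ is the distance along the unit normal $\nu=\nabla V/|\nabla V|$ into $\mathcal{A}_t$ and $r$ is tangential. We take
\begin{align*}
\mathcal{B}=\Big\{(w_0,x_0)+s\nu+r\tau:\ 0\leq s\leq \tfrac{1}{d^\star},\ |r|\leq \tfrac{1}{\sqrt{d^\star}}\Big\},
\end{align*}
suitably shifted inward by $O(r^2)$ in $s$ so that it sits inside $\mathcal{A}_t$ despite the curvature of $\{V=t\}$. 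Since $V$ and $I_\rho$ are $C^2$ with bounded Hessians on a fixed compact neighbourhood of $(w_0,x_0)$, a second-order expansion gives $I_\rho\leq J(t)+\lambda|\nabla V|s+C(s^2+r^2)\leq J(t)+C'/d^\star$ on $\mathcal{B}$. The inward shift costs only an additional $O(r^2)=O(1/d^\star)$. Thus $\exp(-d^\star I_\rho)\geq e^{-C'}e^{-d^\star J(t)}$ on $\mathcal{B}$, and $|\mathcal{B}|\asymp (d^\star)^{-3/2}$, which completes the bound.

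\emph{Main obstacle.} The delicate step is making the constants uniform. The Hessian bounds, the lower bound on $|\nabla V|$ at the minimizer, and the curvature of the level set all depend on $(\rho,t)$, and they degenerate as $t$ approaches $\max V$, where $(w_0,x_0)$ tends to the boundary and $J(t)\to\infty$. I would first try to settle for $K_2=K_2(\rho,t)$, locally uniform in $t$ on compact subsets of the range where $J(t)<\infty$. This is all that Proposition \ref{prop: utility_approx} needs, since $t$ only ranges near $f(\rho,\alpha)$ with $\alpha$ fixed. Near $J(t)=0$ the two cases are glued using continuity of $J$.
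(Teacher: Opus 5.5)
Your proposal is correct and follows essentially the same route as the paper. Both arguments give a local Laplace-type lower bound at the constrained minimizer $\mathbf{z}_t$ of $I_\rho$ on $\{V=t\}$, in rotated normal/tangential coordinates where the Lagrange condition kills the tangential first-order term, over a region inside $\mathcal{A}_t$ of normal width of order $1/d^\star$ and tangential width of order $1/\sqrt{d^\star}$ (your sheared box plays the role of the paper's parabolic region weighted by $e^{-d^\star C_t r}$), combined with $A_{\rho,d}\geq (d-4)/(2\pi)$ from Claim~\ref{claim:A_rho_d_bound}. Your separate treatment of $t\le\rho^2$ (where $J(t)=0$ and the minimizer does not lie on $\{V=t\}$), and your remark that $K_2$ depends on $(\rho,t)$ and need only be locally uniform near $t^*=f(\rho,\alpha)$, address points the paper's proof leaves implicit.
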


\begin{proof}[Proof of Claim \ref{claim:g_lower_bound}]
Let $\mathbf{z}_t = (w_t, x_t)$ be the point on the boundary $V(w,x) =t$ that minimizes the rate function $I_\rho(w,x)$. By the first-order optimality conditions, we have that $\nabla I_\rho(\mathbf{z}_t)$ must be parallel to $\nabla V(\mathbf{z}_t)$. We define an orthonormal basis ($\mathbf{e}_{\perp}, \mathbf{e}_{\parallel})$ centered at $\mathbf{z}_t$ as follows,
\begin{align}
\mathbf{e}_{\perp} := \frac{\nabla V(w_t, x_t)}{\|\nabla V(w_t, x_t)\|}, \quad \mathbf{e}_{\parallel} := \begin{pmatrix} -e_{\perp, x} \ e_{\perp, w} \end{pmatrix}.
\end{align}

Here, $\mathbf{e}_{\perp}$ is the unit vector normal to the level set $\{V(w,x) = t\}$ pointing into the set $\mathcal{A}_t$, and $\mathbf{e}_{\parallel}$ is the corresponding unit tangent vector. Any point $\mathbf{z}= (w,x)$ in a neighborhood of $\mathbf{z}_t$ can be represented in these local coordinates ($r,s$) as
\begin{align}
\mathbf{z} = \mathbf{z}_t + r \mathbf{e}_{\perp} + s \mathbf{e}_{\parallel},
\end{align}
Since the transformation is a rotation followed by a translation, the Jacobian determinant is $|\text{det}(\mathbf{e}_{\perp}, \mathbf{e}_{\parallel})| = 1$, ensuring that the area element satisfies $dw dx = dr ds$.

Consider a neighborhood $\mathcal{N}_d$ around $\mathbf{z}_t$ defined in local coordinates by
\begin{align*}
  \mathcal{N}_d(\mathbf{z}_t) = \big\{ \mathbf{z} = \mathbf{z}_t + r \mathbf{e}_{\perp} + s \mathbf{e}_{\parallel} : |r| \leq d^{-1/2}, \ |s| \leq d^{-1/2} \big\}
\end{align*}
Since $\mathbf{z}_t$ lies in the interior of the domain $(-1, 1)^2$, the utility function $V$ is twice-continuously differentiable on $\mathcal{N}_d(\mathbf{z}_t)$ for sufficiently large $d$. Because the Hessian of function $V$, $\mathbf{H}_V$ is continuous, its eigenvalues are bounded on this compact neighborhood. Thus, there exists a constant $K_t > 0$, independent of $d$, such that the minimum eigenvalue of $\mathbf{H}_V$ satisfies
\begin{align*}
    \lambda_{\min}(\mathbf{H}_V(\boldsymbol{\xi})) \geq -K_t \quad \text{for all } \boldsymbol{\xi} \in \mathcal{N}_d.
\end{align*}

Applying Taylor's theorem at $\mathbf{z}_t$, for any point $\mathbf{z} \in \mathcal{N}_d(\mathbf{z}_t)$, there exists some $\boldsymbol{\xi}$ such that
\begin{align*}
    V(r, s) &= V(\mathbf{z}_t) + \nabla V(\mathbf{z}_t)^\top (r \mathbf{e}_{\perp} + s \mathbf{e}_{\parallel}) + \frac{1}{2} (r \mathbf{e}_{\perp} + s \mathbf{e}_{\parallel})^\top \mathbf{H}_V(\boldsymbol{\xi}) (r \mathbf{e}_{\perp} + s \mathbf{e}_{\parallel})\\
    &\geq t + \|\nabla V(\mathbf{z}_t)\| r - \frac{K_t}{2}(r^2 + s^2),
\end{align*}
where we use the fact that $V(\mathbf{z}_t) = t$, $\nabla V(\mathbf{z}_t) = \|\nabla V(\mathbf{z}_t)\| \mathbf{e}_{\perp}$, and using the lower bound on the Hessian eigenvalues. To ensure that a point $\mathbf{z} = \mathbf{z}_t + r \mathbf{e}_{\perp} + s \mathbf{e}_{\parallel}$ is contained in the  set $\mathcal{A}_t = \{ \mathbf{z} : V(\mathbf{z}) \geq t \}$, it is sufficient to satisfy
\begin{align*}
    \|\nabla V(\mathbf{z}_t)\| r \geq \frac{K_t}{2}(r^2 + s^2).
\end{align*}
As such, we have that for all sufficiently large $d$
\begin{align*}
    \mathcal{R}_t := \Big\{ \mathbf{z} = \mathbf{z}_t + r \mathbf{e}_{\perp} + s \mathbf{e}_{\parallel} : (r,s) \in \mathcal{N}_t,  \Big\}  \subseteq \mathcal{A}_t,
\end{align*}
where
\begin{align*}
    \mathcal{N}_t = \Big\{ (r,s) : |r| \leq d^{-1/2}, \ |s| \leq d^{-1/2}, \|\nabla V(\mathbf{z}_t)\| r \geq \frac{K_t}{2}(r^2 + s^2) \Big\}.
\end{align*}

Since $I_{\rho}(w, x)$ is twice-continuously differentiable on the interior of the domain, we expand it in the local $(r, s)$ coordinates around the optimizer $\mathbf{z}_t$. Recall that, by the first-order optimality conditions, $\nabla I_{\rho}(\mathbf{z}_t)$ is parallel to $\nabla V(\mathbf{z}_t)$, and thus $\nabla I_{\rho}(\mathbf{z}_t)^\top \mathbf{e}_{\parallel} = 0$. For any $\mathbf{z} \in \mathcal{R}_t$, we have:
\begin{align*}
I_{\rho}(\mathbf{z}) &\leq I_{\rho}(\mathbf{z}_t) + \big|\nabla I_{\rho}(\mathbf{z}_t)^\top (r \mathbf{e}_{\perp} + s \mathbf{e}_{\parallel})\big| + \frac{1}{2} M_t (r^2 + s^2) \\
&\leq J(t) + \Big(\big|\nabla I_{\rho}(\mathbf{z}_t)^\top \mathbf{e}_{\perp}\big| + \frac{M_t \|\nabla V(\mathbf{z}_t)\|}{K_t}\Big) r,
\end{align*}
where we use that $r^2 + s^2 \leq \frac{ \|\nabla V(\mathbf{z}_t)\|}{K_t}r$ for any $\mathbf{z} \in \mathcal{R}_t$. This gives us that
\begin{align}
\label{eq:I_rho_Z_bound}
\exp\big(-d^\star I_{\rho}(\mathbf{z})\big) \geq \exp\big(-d^\star J(t) - d^\star C_t r\big), \ \ \forall \mathbf{z} \in \mathcal{R}_t,
\end{align}
where $C_t := \Big(\big|\nabla I_{\rho}(\mathbf{z}_t)^\top \mathbf{e}_{\perp}\big| + \frac{M_t \|\nabla V(\mathbf{z}_t)\|}{K_t}\Big)$. Now, using Eq. \eqref{eq:tail_prob_def} and \eqref{eq:pdf_G}, we have that
\begin{align*}
G(t) &=  A_{\rho, d} \iint_{\mathcal{A}_t} \frac{1}{\sqrt{1-x^2}}  \exp\big(-d^\star I_{\rho}(w, x)\big) dw dx \\
& \stackrel{(a)}{\geq} A_{\rho, d} \iint_{\mathcal{A}_t} \exp\big(-d^\star I_{\rho}(w, x)\big) dw dx \\
&\stackrel{(b)}{\geq} A_{\rho, d} \iint_{\mathcal{A}_t} \exp\big(-d^\star I_{\rho}(\mathbf{z})\big) dr ds \\
& \stackrel{(c)}{\geq} A_{\rho, d} \iint_{\mathcal{R}_t} \exp\big(-d^\star I_{\rho}(\mathbf{z})\big) dr ds.
\end{align*}
where (a) follows by using $\frac{1}{\sqrt{1-x^2}} \geq 1$, (b) follows by using the earlier argument $|\text{det}(\mathbf{e}_\perp, \mathbf{e}_\parallel)| =1$ implying that $dwdx = drds$, (c) follows as $\mathcal{R}_t \subseteq \mathcal{A}_t$. Using the results in Eq. \eqref{eq:I_rho_Z_bound}, we have that
\begin{align*}
    \iint_{\mathcal{R}_t} \exp\big(-d^\star I_{\rho}(\mathbf{z})\big) dr ds \geq e^{- d^\star J(t)} \iint_{\mathcal{N}_t} \exp\left( - d^\star C_t r\right) drds.
\end{align*}
Next, we establish a lower bound for the integral over $\mathcal{N}_t$. The neighborhood $\mathcal{N}_t$ is defined by the quadratic constraint $\|\nabla V(\mathbf{z}_t)\| r \geq \frac{K_t}{2}(r^2 + s^2)$. We get,
\begin{align*}
    \Big\{ (r,s) : 0\leq r \leq d^{-1/2}, \ |s| \leq d^{-1/2}, r \leq \frac{\|\nabla V(\mathbf{z}_t)\|}{K_t},  \frac{\|\nabla V(\mathbf{z}_t)\|}{K_t} r \geq s^2 \Big\} \subseteq \mathcal{N}_t.
\end{align*}
Thus, writing $\gamma := \frac{2\|\nabla V(\mathbf{z}_t)\|}{K_t}$, we get
\begin{align*}
    \iint_{\mathcal{N}_t} \exp(-d^\star C_t r) dr ds &\geq \int_{0}^{d^{-\frac12}} \exp(-d^\star C_t r) \left( \int_{-\sqrt{\gamma r}}^{\sqrt{\gamma r}} ds \right) dr \\
    & = 2\sqrt{\gamma} \int_{0}^{d^{-\frac12}} \sqrt{r} \exp(-d^\star C_t r)dr\\
    &= \frac{2\sqrt{\gamma}}{(d^\star C_t)^{3/2}} \int_{0}^{\sqrt{d^\star} C_t} \sqrt{u} e^{-u} du.
\end{align*}
As $d \to \infty$, the upper limit $\sqrt{d^\star} C_t$ tends to infinity. The integral converges to the Gamma function value $\Gamma(3/2) = \frac{\sqrt{\pi}}{2}$. As such, for $d^\star$ large enough, we have that there exists a constant $C_1$ such that $\int_{0}^{\sqrt{d^\star} C_t} \sqrt{u} e^{-u} du \geq C_1$. Thus, we get
\begin{align*}
    \iint_{\mathcal{N}_t} \exp(-d^\star C_t r) dr ds \geq \frac{C_2}{(d^\star)^{3/2}},
\end{align*}
where we choose $C_2 = \frac{2\sqrt{\gamma}}{(C_t)^{3/2}} C_1 $. Further, from Claim \ref{claim:A_rho_d_bound}, we have that $A_{\rho,d} \geq \frac{d^\star - 1}{2\pi} \geq \frac{d^\star}{4\pi}$. Thus,
\begin{align*}
    G(t) \geq \frac{K_2}{\sqrt{d^\star}} e^{- d^\star J(t)},
\end{align*}
by choosing $K_2 = \frac{C_2}{4\pi}$. This completes the proof of Claim \ref{claim:g_lower_bound}.
\end{proof}

Next, we present the final steps of the proof. Recall that $\alpha$ denotes the exponent of the recommendation set size. We use $\tilde O(\cdot)$ to denote $O(\cdot)$ up to polylogarithmic factors in $d$.

{\paragraph{Case $\alpha = 0$ (single recommendation).} When $\alpha = 0$, we have $n = \lfloor e^{0} \rfloor = 1$, so the recommendation set consists of a single item and there is no maximum over multiple samples. The expected utility is simply $\mathbb{E}[V(W_1, X_1)]$. Since $V(w,x) = \rho w + \sqrt{1-\rho^2}\sqrt{1-w^2}\, x$ and $W_1, X_1$ are independent with $\mathbb{E}[X_1] = 0$, we have
\begin{align*}
    \mathbb{E}[V(W_1, X_1)] = \rho\, \mathbb{E}[W_1].
\end{align*}
From Lemma~\ref{lem:w_concentration}, $|\mathbb{E}[W_1] - \rho| \leq \mathbb{E}|W_1 - \rho| \leq \frac{\bar K}{\sqrt{d^\star(1-\rho^2)}}$, so
\begin{align*}
    \mathbb{E}[V(W_1, X_1)] = \rho^2 + O(d^{-1/2}).
\end{align*}
Meanwhile, $f(\rho, 0) = \max\{V(w,x) : I_\rho(w,x) \leq 0\}$. Since $I_\rho(w,x) \geq 0$ with equality if and only if $(w,x) = (\rho, 0)$, we have $f(\rho, 0) = V(\rho, 0) = \rho^2$. Combining with Eq.~\eqref{eq:W_to_rho_max_util}, the total approximation error is $O(d^{-1/2})$, which yields the stated bound when $\alpha =0$.

\paragraph{Case $\alpha > 0$.} Next, we assume that $\alpha >0$. } Suppose $t^*$, $t_{\text{High}}$ and $t_{\text{Low}}$ are such that
\begin{align*}
    J(t^*) = \alpha, \ \ J(t_{\text{High}}) = \alpha + \frac{2\log d^\star}{d^\star}, \ \ J(t_{\text{Low}}) = \alpha - \frac{\log d^\star}{d^\star}.
\end{align*}
Then, as $J(t)$ is a strictly increasing function, from Claims \ref{claim:g_lower_bound} and \ref{claim:g_upper_bound}, we have
\begin{align*}
    G(t) \geq K_2 \sqrt{d^\star} \exp\left( - d^\star \alpha \right), \ \ \forall t \leq t_{\text{Low}}, \ \ \text{ and } \ \ G(t) \leq \frac{K_1}{d^\star} \exp\left( - d^\star \alpha \right), \ \ \forall t \geq t_{\text{High}}.
\end{align*}
This in turn implies that, for all $t \geq t_{\text{High}}$,
\begin{align*}
    \mathbb{P}\Big( \max_{i\leq n} V(W_i,X_i) \geq t\Big) = 1-\big(1-G(t)\big)^n \leq n G(t) \leq e^{(d^\star+3) \alpha} \times \frac{K_1}{d^\star} \exp\left( - d^\star \alpha \right) = \frac{K_1}{d^\star}\exp(3\alpha).
\end{align*}
Similarly, for all $t \leq t_{\text{Low}}$,
\begin{align*}
    \mathbb{P}\Big( \max_{i\leq n} V(W_i,X_i) \geq t\Big) \geq 1- \exp\left(- n G(t)\right) \geq 1- \exp\left(- K_2 \sqrt{d^\star}\exp(3\alpha)\right).
\end{align*}
We now bound $\mathbb{E}[\max_i V(W_i,X_i)]$ using a truncation argument. Since $|V(w,x)| \leq 1$ and denoting $V_i := V(W_i,X_i)$, we have
\begin{align*}
    \mathbb{E}\Big[\max_i V_i\Big] &\leq t_{\text{High}} + \mathbb{P}\Big(\max_i V_i > t_{\text{High}}\Big) \leq t_{\text{High}} + \frac{K_1}{d^\star}\exp(3\alpha),\\
    \mathbb{E}\Big[\max_i V_i\Big] &\geq t_{\text{Low}} \cdot \mathbb{P}\Big(\max_i V_i \geq t_{\text{Low}}\Big) - \mathbb{P}\Big(\max_i V_i < t_{\text{Low}}\Big) \geq t_{\text{Low}} - 2\exp\big(-K_2 \sqrt{d^\star}\, e^{3\alpha}\big).
\end{align*}
Since both correction terms are $O(d^{-1})$, we obtain
\begin{align*}
    t_{\text{Low}} - O(d^{-1}) \leq \mathbb{E} \Big[\max_{i\leq n} V(W_i,X_i)\Big] \leq t_{\text{High}} + O(d^{-1}).
\end{align*}
Now, to complete the argument, we need that $t_{\text{High}} - t_{\text{Low}} = \tilde O(d^{-1})$. This holds because for any given $\rho \in [0,1)$, $J(t)$ is a smooth function, and so for any $\alpha\geq 0$, there exists a constant $K_4$ such that $|t_{\text{High}} - t^*| \leq \frac{1}{K_4} |J(t_{\text{High}}) - J(t^*)| = \tilde O(d^{-1})$. Similarly, $|t_{\text{Low}} - t^*| = \tilde O(d^{-1})$. Thus,
\begin{align*}
    \mathbb E \Big[\max_{i\leq n} V(W_i,X_i)\Big] = t^* + \tilde O(d^{-1}).
\end{align*}
Finally, recall that $f(\rho, \alpha) = \sup_{w,x \in (-1,1)} \{ V(w, x) : I_\rho(w, x) \leq \alpha \}$ and $J(t) = \inf_{w,x \in (-1,1)} \{I_\rho(w, x) : V(w, x) \geq t \}$. Note that $f(\rho, \cdot)$ and $J(\cdot)$ are functional inverses of each other. Since $I_\rho$ is strictly convex and $V$ is monotonic in the region of interest, the set $\{ \mathbf{z} : I_\rho(\mathbf{z}) \leq \alpha \}$ is a convex set, and its boundary corresponds exactly to the level sets of $V$ where the maximum is achieved. Thus, $J(t^*) = \alpha$ implies $t^* = f(\rho, \alpha)$, and so
\begin{align*}
    \mathbb E \Big[\max_{i\leq n} V(W_i,X_i)\Big] = f(\rho, \alpha) + \tilde O(d^{-1}).
\end{align*}
Combining with Eq.~\eqref{eq:W_to_rho_max_util}, which gives
\begin{align*}
    \mathbb E \Big[\max_{i\leq n} WW_i + \sqrt{1-W^2} \sqrt{1-W_i^2} X_i\Big] = \mathbb{E}[\max_i V(W_i,X_i)] + O(d^{-1/2}),
\end{align*}
the total error is $\tilde O(d^{-1}) + O(d^{-1/2}) = O(d^{-1/2})$, yielding the $K\sqrt{\log d / d}$ bound stated in the proposition. This completes the proof.

\end{proof}

\subsection{Proof of Proposition \ref{prop: kl_approx}}

\begin{proof}
From Lemma \ref{lem:prelim_results}, we have that
\begin{align*}
    \mathbb E_{\mathbf h,\mathbf m} \left[D_{\mathrm{KL}}\big(q_\kappa(\cdot| \mathbf m) \Vert p(\cdot)\big)\right]
&= -\log\frac{C_d(0)}{C_d(\kappa)}+\kappa \mathbb E[W],
\end{align*}
where $W \sim p_{\kappa, d}(w)$. Further, from Lemma \ref{lem:w_concentration}, we have that
\begin{align}
\label{eq:kappa_w_bound}
    \kappa\mathbb{E} |W - \rho| \leq \frac{\kappa \bar K}{\sqrt{d^\star (1-\rho^2)}} = \frac{\rho \bar K \sqrt{d^\star }}{\sqrt{(1-\rho^2)^3}}.
\end{align}
Note that, the normalization constant $C_d(\kappa)$ is defined as
\begin{align*}
   \frac{1}{C_d(\kappa)} = \int_{\mathbf{h} \in \mathcal{S}^{d-1}} \exp(\kappa \mathbf{h}^\top \mathbf{m}) d\mathbf{m},
\end{align*}
It can be shown that (see \citep{mardia2009directional}),
\begin{align*}
    \frac{1}{C_d(\kappa)} &= \frac{1}{C_{d-1}(0)} \int_{-1}^1 (1-w^2)^{\frac{d^\star}{2}} \exp \left( \kappa w \right) dw \\
    &= \frac{1}{C_{d-1}(0)} \int_{-1}^1 \exp \left( \frac{d^\star\rho w}{1-\rho^2} + \frac{d^\star}{2} \log(1-w^2) \right) dw,
\end{align*}
where $\frac{1}{C_{d-1}(0)}$ is same as the surface area of $\mathcal{S}^{d-2}$ and the second equality follows as $\kappa = \frac{d^\star\rho}{1-\rho^2}$. As such, by using Lemma \ref{lem:w_concentration}, for $\kappa = \frac{\rho d^\star}{1-\rho^2}$, we have
\begin{align*}
   \frac{1}{C_{d-1}(0)} \frac{K (1-\rho^2)}{\sqrt{d^\star}}  \leq \frac{1}{C_{d}(\kappa)} \exp \left( - \frac{d^\star\rho^2}{1-\rho^2} - \frac{d^\star}{2} \log(1-\rho^2)\right) \leq \frac{1}{C_{d-1}(0)} \sqrt{\frac{2\pi}{d^\star}}.
\end{align*}
By substituting $\rho = 0$ in the above expression,
\begin{align*}
   \frac{1}{C_{d-1}(0)} \frac{K}{\sqrt{d^\star}} \leq \frac{1}{C_{d}(0)} \leq \frac{1}{C_{d-1}(0)} \sqrt{\frac{2\pi}{d^\star}}.
\end{align*}
Thus,
\begin{align*}
  -K_0 \leq  -\log \frac{C_{d}(0)}{C_{d}(\kappa)} +  d^\star \left( \frac{\rho^2}{1-\rho^2} +\frac{1}{2} \log(1-\rho^2)\right) \leq K_0 - \log (1-\rho^2),
\end{align*}
where $K_0 =\max\big\{0, \log \frac{\sqrt{2\pi}}{K} \big\}$. This, in turn, implies that
\begin{align*}
  -K_0 \leq  -\log \frac{C_{d}(0)}{C_{d}(\kappa)} +  d^\star \frac{\rho^2}{1-\rho^2} +\frac{d^\star +1}{2} \log(1-\rho^2) \leq K_0,
\end{align*}
Finally, by using Eq. \eqref{eq:kappa_w_bound} and the fact that $\kappa = \frac{\rho d^\star}{1-\rho^2}$, we have
\begin{align*}
    \Big| -\log \frac{C_{d}(0)}{C_{d}(\kappa)} + \kappa \mathbb{E} W -\frac{d^\star +1}{2} \log(1-\rho^2) \Big| \leq K_0 + \frac{\rho \bar K \sqrt{d^\star }}{\sqrt{(1-\rho^2)^3}}.
\end{align*}
Now, the proof is complete by choosing $K_{\ref{prop: kl_approx}} = K_0 + \bar K$ and using $d^\star \leq d$.
\end{proof}

\subsection{Proof of Theorem \ref{thm:asym_opt}}
\label{app:proof_thm_asym_opt}

\begin{proof}
We prove each part of the theorem sequentially.

\paragraph{Part 1: Convergence}
We analyze the asymptotic behavior of each term as $d \to \infty$ under the scaling assumptions $\lambda_s = c_s/d + o(d^{-1})$ and $\lambda_c = c_c/d + o(d^{-1})$. Let $n = \lfloor e^{\alpha d} \rfloor$ and $\kappa = \frac{\rho}{1-\rho^2}(d-3)$ for fixed $\alpha, \rho$. From Proposition \ref{prop: utility_approx}, the expected utility of the best recommendation converges as
\begin{equation*}
   \lim_{d\rightarrow \infty} \mathbb{E}\left[\max_{i \in [n]} \langle \mathbf h, \boldsymbol\theta_i \rangle\right] = f(\rho, \alpha),
\end{equation*}
where $f(\rho, \alpha)$ is the value of the deterministic optimization problem defined in Proposition \ref{prop: utility_approx}. Next, using the scaling of $\lambda_s$, the search cost becomes,
\begin{equation*}
    \lambda_s \log n = \left(\frac{c_s}{d} + o(d^{-1})\right) (\alpha d) = c_s \alpha + o(1).
\end{equation*}
Finally, from Proposition \ref{prop: kl_approx},
\begin{align*}
    D_{\text{KL}} &=  \frac{d-2}{2} \log \left(\frac{1}{1-\rho^2}\right) + \mathcal{E}_{\ref{prop: kl_approx}}(\rho),
\end{align*}
where we use $D_{\text{KL}}$ as a shorthand notation for the KL-divergence. Multiplying by the communication cost parameter $\lambda_c$ and taking the limit,
\begin{align*}
   \lim_{d\rightarrow \infty} \lambda_c D_{\text{KL}} &= \lim_{d\rightarrow \infty} \left(\frac{c_c}{d} + o(d^{-1})\right) \left(\frac{d-2}{2} \log \left(\frac{1}{1-\rho^2}\right) + \mathcal{E}_{\ref{prop: kl_approx}}(\rho)\right) = -\frac{c_c}{2} \log (1-\rho^2),
\end{align*}
where we use the fact that $|\mathcal{E}_{\ref{prop: kl_approx}}(\rho)| \leq \frac{K_{\ref{prop: kl_approx}} \sqrt{d}}{\sqrt{(1-\rho^2)^3}}$, and $K_{\ref{prop: kl_approx}}$ is a universal constant. Combining these terms, the objective function converges to
\begin{equation*}
    \lim_{d \to \infty} \mathcal{P}_d(\kappa, n) = f(\rho, \alpha) - c_s \alpha + \frac{c_c}{2} \log(1-\rho^2).
\end{equation*}
This matches the definition of the objective function in $\mathrm{OPT}_{\mathrm{Joint}}$ (Eq. \eqref{eq:joint_opt}). This shows that the finite-dimensional objective converges uniformly to the asymptotic objective. {Now, under the assumption that the optimizer ($\rho^* , \alpha^*$) lies on a compact set, the maximum also converges, proving the first statement.

As such, to complete the argument, we show that for $c_s > 0$ and $c_c > 0$, any optimizer of $\mathcal{P}_\infty$ must lie in a compact subset of $[0,1) \times [0,\infty)$. Since $f(\rho, \alpha) \leq 1$, the objective satisfies
\begin{align*}
    \mathcal{P}_\infty(\rho, \alpha) \leq 1 - c_s \alpha + \frac{c_c}{2} \log(1-\rho^2).
\end{align*}
Note that $\mathcal{P}_\infty(0, 0) = f(0,0) = 0$ provides a baseline value. For any $(\rho, \alpha)$ to be optimal, we need $\mathcal{P}_\infty(\rho, \alpha) \geq 0$, which requires both $c_s \alpha \leq 1$ and $\frac{c_c}{2} \log(1-\rho^2) > -1$. The first condition gives $\alpha \leq 1/c_s$. The second gives $1-\rho^2 > e^{-2/c_c}$, i.e., $\rho < \sqrt{1 - e^{-2/c_c}} < 1$. Since the optimizer is confined to the compact set $[0, \sqrt{1 - e^{-2/c_c}}] \times [0, 1/c_s]$, the uniform convergence of $\mathcal{P}_d$ to $\mathcal{P}_\infty$ on this set implies convergence of the optima.}

\paragraph{Part 2: Monotonicity}
We denote the asymptotic objective function by $\mathcal{P}_\infty(\rho, \alpha)$ which is given by
\begin{equation*}
    \mathcal{P}_\infty(\rho, \alpha) = f(\rho, \alpha) - c_s \alpha + \frac{c_c}{2} \log(1-\rho^2).
\end{equation*}
For simplicity of the argument, we assume that the optimal solution $(\rho^*, \alpha^*)$ lies in the interior of the feasible region $(0, 1) \times (0, \infty)$. From the first-order necessary conditions for a local maximum, we have
\begin{align}
\label{eq:foc}
    \frac{\partial}{\partial \rho} \mathcal{P}_\infty = \frac{\partial f}{\partial \rho} - \frac{c_c \rho}{1-\rho^2} = 0, && \frac{\partial}{\partial \alpha} \mathcal{P}_\infty = \frac{\partial f}{\partial \alpha} - c_s = 0
\end{align}
For the solution to be a local maximum, the second-order sufficient conditions require that
\begin{align*}
    \frac{\partial^2}{\partial \rho^2} \mathcal{P}_\infty <0, && \frac{\partial^2}{\partial \alpha^2} \mathcal{P}_\infty <0,
\end{align*}
and the Hessian matrix $\mathbf{H}_\infty$ of the objective function $\mathcal{P}_\infty$ with respect to $\rho$ and $\alpha$ be negative definite, where the Hessian is given by
\begin{equation*}
    \mathbf{H}_\infty
    = \begin{pmatrix}
        \frac{\partial^2 f}{\partial \rho^2} - \frac{c_c(1+\rho^2)}{(1-\rho^2)^2} & \frac{\partial^2 f}{\partial \rho \partial \alpha} \\
        \frac{\partial^2 f}{\partial \alpha \partial \rho} & \frac{\partial^2 f}{\partial \alpha^2},
    \end{pmatrix}
\end{equation*}
where second order conditions imply that $|\mathbf{H}_\infty | > 0$. Furthermore, we invoke the property that communication precision and search set size act as substitutes in the production of utility, simply because as $\rho$ increases, the alignment between the agent's recommendation and the user's preference increases. This reduces the necessity of a large search set exponent $\alpha$ to find a high-utility outcome, as such, we have $\frac{\partial^2}{\partial \alpha \partial \rho } \mathcal{P}_\infty = \frac{\partial^2 f}{\partial \rho \partial \alpha} < 0$.

Since the first-order conditions must hold even as $c_c$ changes, we can take the total derivative of both equations in Eq. \eqref{eq:foc} with respect to $c_c$ to get that

\begin{equation*}
    \mathbf{H}_\infty
    \begin{pmatrix}
        \frac{\partial \rho^*}{\partial c_c} \\
        \frac{\partial \alpha^*}{\partial c_c}
    \end{pmatrix}
    +
    \begin{pmatrix}
        \frac{\partial^2}{\partial \rho \partial c_c} \mathcal{P}_\infty \\
        \frac{\partial^2}{\partial \alpha \partial c_c} \mathcal{P}_\infty
    \end{pmatrix}
    =
    \begin{pmatrix}
        0 \\ 0
    \end{pmatrix}.
\end{equation*}
Further, we have that
\begin{align*}
    \frac{\partial^2}{\partial \rho \partial c_c} \mathcal{P}_\infty = \frac{\partial}{\partial c_c} \left( \frac{\partial f}{\partial \rho} - \frac{c_c \rho}{1-\rho^2} \right) = - \frac{\rho}{1-\rho^2}, &&
    \frac{\partial^2}{\partial \alpha \partial c_c} \mathcal{P}_\infty = \frac{\partial}{\partial c_c} \left( \frac{\partial f}{\partial \alpha} - c_s \right) = 0
\end{align*}
Next, we apply Cramer's rule to solve for the linear system, giving us
\begin{equation*}
    \frac{\partial \rho^*}{\partial c_c} = - \frac{1}{|\mathbf{H}_\infty|} \det \begin{pmatrix}
        \frac{-\rho}{1-\rho^2} & \frac{\partial^2}{\partial \alpha \partial \rho } \mathcal{P}_\infty  \\
        0 &
        \frac{\partial^2}{\partial \alpha^2 } \mathcal{P}_\infty
    \end{pmatrix} = \frac{1}{|\mathbf{H}_\infty|} \left( \frac{\rho}{1-\rho^2} \frac{\partial^2}{\partial \alpha^2} \mathcal{P}_\infty \right)< 0,
\end{equation*}
where we use the fact that $|\mathbf{H}_\infty|>0$ and $\frac{\partial^2}{\partial \alpha^2} \mathcal{P}_\infty<0$. Similarly, we get that
\begin{align*}
    \frac{\partial \alpha^*}{\partial c_c} = - \frac{1}{|\mathbf{H}_\infty|} \det \begin{pmatrix}
        \frac{\partial^2}{\partial \rho^2 } \mathcal{P}_\infty & \frac{-\rho}{1-\rho^2} \\
        \frac{\partial^2}{\partial \alpha \partial \rho } \mathcal{P}_\infty & 0
    \end{pmatrix} = \frac{1}{|\mathbf{H}_\infty|} \left( -\frac{\rho}{1-\rho^2} \frac{\partial^2}{\partial \alpha \partial \rho } \mathcal{P}_\infty \right) > 0.
\end{align*}
Next, we repeat the procedure in terms of the search cost parameter $c_s$. We have that
\begin{align*}
    \frac{\partial^2}{\partial \rho \partial c_s} \mathcal{P}_\infty = \frac{\partial}{\partial c_s} \left( \frac{\partial f}{\partial \rho} - \frac{c_c \rho}{1-\rho^2} \right) = 0, &&
    \frac{\partial^2}{\partial \alpha \partial c_s} \mathcal{P}_\infty = \frac{\partial}{\partial c_s} \left( \frac{\partial f}{\partial \alpha} - c_s \right) = -1
\end{align*}
This gives us that
\begin{equation*}
    \frac{\partial \rho^*}{\partial c_s} = \frac{1}{|\mathbf{H}_\infty|} \left( -1 \times \frac{\partial^2}{\partial \alpha \partial \rho } \mathcal{P}_\infty \right)> 0,
\end{equation*}
and also,
\begin{equation*}
    \frac{\partial \alpha^*}{\partial c_s} = \frac{1}{|\mathbf{H}_\infty|} \left( 1 \times \frac{\partial^2}{\partial \rho^2 } \mathcal{P}_\infty \right) < 0.
\end{equation*}
This completes the proof of the monotonicity properties.

\paragraph{Part 3: Search-Only Regime}
To prove this part, we first show that for $c_c > c_s$, the optimal policy for $\mathcal{P}_\infty(\rho,\alpha)$ satisfy that optimal message precision $\rho^* = 0$. From the expression for $f(\rho, \alpha)$ in Proposition \ref{prop: utility_approx}, we know that
\begin{align*}
    f(\rho,\alpha) := \max_{w, x \in (-1,1)} & \ \ \ \rho w + \sqrt{1-\rho^2} \sqrt{1-w^2} x \ \text{ such that } I_{\rho}(w,x) \leq \alpha,
\end{align*}
where $I_{\rho}(w,x)$ is the large deviations rate function, given by
\begin{align*}
    I_{\rho}(w,x) = -\frac{\rho(w-\rho)}{1-\rho^2} - \frac{1}{2} \log(1-w^2)+ \frac{1}{2} \log(1-\rho^2) -\frac{1}{2} \log(1-x^2).
\end{align*}
Using the fact that $-\frac{\rho(w-\rho)}{1-\rho^2} - \frac{1}{2} \log(1-w^2)+ \frac{1}{2} \log(1-\rho^2) \geq 0$ for any $w\in (-1,1)$, we get that
\begin{align*}
    \big\{ (w,x) : |w| < 1, |x|<1, I_{\rho}(w,x) \leq \alpha \big\} \subseteq \Big\{ (w,x) : |w| < 1, |x|<1, -\frac{1}{2} \log(1-x^2)  \leq \alpha \Big\}.
\end{align*}
This implies that
\begin{align*}
    f(\rho,\alpha) \leq \max_{w, x \in (-1,1)} & \ \ \ \rho w + \sqrt{1-\rho^2} \sqrt{1-w^2} x \ \text{ such that } -\frac{1}{2} \log(1-x^2)  \leq \alpha.
\end{align*}
This in turn gives us that,
\begin{align*}
    \mathcal{P}_\infty (\rho, \alpha) &\leq \max_{w, x \in (-1,1)} \ \ \ \rho w + \sqrt{1-\rho^2} \sqrt{1-w^2} x - c_s \alpha - \frac{c_c}{2}\log \frac{1}{1-\rho^2} \ \text{ s.t. } -\frac{1}{2} \log(1-x^2)  \leq \alpha\\
    &= \sqrt{\rho^2 + (1-\rho^2) (1-e^{-2\alpha})} - c_s \alpha - \frac{c_c}{2}\log \frac{1}{1-\rho^2},
\end{align*}
where the second equality is achieved by simply maximizing in terms of $w$ and $x$. It can be shown that when $c_c>c_s$, the $(\rho,\alpha)$ that maximizes the RHS in the above expression satisfy $\rho = 0$. A more similar argument (and in more detail) for the same is presented in Theorem \ref{thm:tilt_opt} and its proof. As such, for $c_c>c_s$, we have
\begin{align*}
    \max_{\rho\in[0,1) , \alpha\geq 0} \mathcal{P}_\infty (\rho, \alpha) \leq \max_{\alpha\geq 0} \sqrt{1-e^{-2\alpha}} - c_s \alpha.
\end{align*}
Further, we also have that,
\begin{align*}
    \max_{\rho\in[0,1) , \alpha\geq 0} \mathcal{P}_\infty (\rho, \alpha) & \geq \max_{ \alpha\geq 0} \mathcal{P}_\infty (0, \alpha) \\
    & = \max_{ \alpha\geq 0} \max_{w, x \in (-1,1)} \Big[\sqrt{1-w^2} x - c_s \alpha \ \text{ s.t. } -\frac{1}{2} \log(1-w^2) -\frac{1}{2} \log(1-x^2)  \leq \alpha \Big]\\
    & = \max_{ \alpha\geq 0}  \max_{x \in (-1,1)} \Big[ x - c_s \alpha \ \text{ s.t. } -\frac{1}{2} \log(1-x^2)  \leq \alpha\Big]\\
    & = \max_{\alpha\geq 0} \sqrt{1-e^{-2\alpha}} - c_s \alpha.
\end{align*}
Combining the above two inequalities, it implies that for $c_c>c_s$,
\begin{align*}
    \max_{\rho\in[0,1) , \alpha\geq 0} \mathcal{P}_\infty (\rho, \alpha) = \max_{\alpha\geq 0} \sqrt{1-e^{-2\alpha}} - c_s \alpha,
\end{align*}
and so $\rho^* = 0$ whenever $c_c>c_s$. Thus, we get that there exists $\bar c_c(c_s)$ such that for $c_c>\bar c_c(c_s)$, we have $\rho^*(c_c,c_s) =0$.
\end{proof}

\section{Proofs of Results in Section \ref{sec:results_finite_d}}

\subsection{Proof of Theorem \ref{thm: perf_gap}}
\label{app:proof_thm_perf_gap}

\begin{proof} We begin by defining the asymptotic objective function derived from the joint optimization problem $\mathrm{OPT}_{\mathrm{Joint}}$. For any $\rho \in [0, 1)$ and $\alpha \ge 0$, with slight abuse of notation, we define
\begin{equation*}
\mathcal{P}_\infty(\rho, \alpha) = f(\rho, \alpha) - c_s \alpha + \frac{c_c}{2} \log(1-\rho^2)
\end{equation*}
where $f(\rho, \alpha)$ is the limit of the expected maximum utility defined in Proposition \ref{prop: utility_approx}. For any fixed $(\rho, \alpha)$ we construct a policy using the mapping $\kappa(\rho) = \frac{\rho}{1-\rho^2}(d-3)$ and $n(\alpha) = e^{d\alpha}$, where we ignore the integrality of the recommendation set size $n$ for simplicity. The finite-dimensional objective $\mathcal{P}_d\big(\kappa(\rho), n(\alpha)\big)$ can be decomposed into three terms: product utility, search cost, and communication cost.

First, invoking Proposition \ref{prop: utility_approx}, the expected product utility satisfies
\begin{equation*}
    \mathbb{E}\left[\max_{i \in [n]} \langle \mathbf h, \boldsymbol \theta_i \rangle\right] = f(\rho, \alpha) + \mathcal{E}_{\ref{prop: utility_approx}}(\rho,\alpha),
\end{equation*}
where $|\mathcal{E}_{\ref{prop: utility_approx}}(\rho,\alpha)| \leq K_{\ref{prop: utility_approx}}(\rho,\alpha) \sqrt{\frac{\log d}{d}}$, Here, we write $\mathcal{E}_{\ref{prop: utility_approx}}(\rho,\alpha)$ instead of simply $\mathcal{E}_{\ref{prop: utility_approx}}$ to show the dependency on $\rho$ and $\alpha$.

Second, utilizing the definition of scaled costs $c_s = d\lambda_s$, the search cost is exactly
\begin{equation*}
    \lambda_s \log n = \frac{c_s}{d} (d \alpha) = c_s \alpha
\end{equation*}
Third, using Proposition \ref{prop: kl_approx}  and the scaled cost $c_c = d\lambda_c$, the communication cost is
\begin{align*}
    \lambda_c D_{\text{KL}} &= \frac{c_c}{d} \left[ \frac{d-2}{2} \log \left(\frac{1}{1-\rho^2}\right) + \mathcal{E}_{\ref{prop: kl_approx}}(\rho) \right] = -\frac{c_c}{2} \log(1-\rho^2) + \frac{c_c}{d}\mathcal{E}_{\ref{prop: kl_approx}}(\rho) - \frac{c_c}{d} \log \left(\frac{1}{1-\rho^2}\right),
\end{align*}
where $|\mathcal{E}_{\ref{prop: kl_approx}}(\rho)| \leq K_{\ref{prop: kl_approx}} \frac{\sqrt{d}}{(1-\rho^2)^3}$, and $K_{\ref{prop: kl_approx}}$ is a universal constant.

Combining these terms, we can relate the finite objective $\mathcal{P}_d$ to the asymptotic objective $\mathcal{P}_\infty$ for any valid $(\rho, \alpha)$ pair
\begin{equation*}
    \mathcal{P}_d(\kappa(\rho), n(\alpha)) = \mathcal{P}_\infty(\rho, \alpha) + \mathcal{E}_{\text{total}}(\rho, \alpha),
\end{equation*}
where the total error is bounded by
\begin{align*}
\mathcal{E}_{\text{total}}(\rho, \alpha) \leq K_{\ref{prop: utility_approx}}(\rho,\alpha) \sqrt{\frac{\log d}{d}} + \frac{c_c}{d} \left(K_{\ref{prop: kl_approx}} \frac{\sqrt{d}}{(1-\rho^2)^3}\right) + \frac{c_c}{d} \log \left(\frac{1}{1-\rho^2}\right),
\end{align*}
giving us that $|\mathcal{E}_{\text{total}}(\rho, \alpha)| $ is dominated by the utility approximation error. And so, there exists a constant $K_{\text{total}}(\rho,\alpha)$ such that for
\begin{align*}
    |\mathcal{E}_{\text{total}}(\rho,\alpha)| \leq K_{\text{total}}(\rho,\alpha)\sqrt{\frac{\log d}{d}}.
\end{align*}

Now, let $(\kappa^*_d, n^*_d)$ be the true optimal solution to $\mathcal{P}_d$. We define the implicit asymptotic parameters $(\tilde{\rho}, \tilde{\alpha})$ by inverting the mapping relations: $\tilde{\alpha} = \frac{1}{d} \log n^*_d$ and $\tilde{\rho}$ such that $\kappa^*_d = \frac{\tilde{\rho}}{1-\tilde{\rho}^2}(d-3)$. The performance gap is given by $\Delta_d := \mathcal{P}_d(\kappa^*_d, n^*_d) - \mathcal{P}_d(\kappa^*_\infty, n^*_\infty)$, where $(\rho^*,\alpha^*)$ is the optimizer of $\mathcal{P}_{\infty}$ and $(k^*_\infty,n^*_\infty)$ is constructed by mapping $(\rho^*,\alpha^*)$ using the mapping $\kappa(\rho)$ and $n(\alpha)$.

Since $(\kappa^*_d, n^*_d)$ is the maximizer of $\mathcal{P}_d$, we immediately have the lower bound $\Delta_d \ge 0$. Next, for the upper bound, we expand both terms using the approximation relation derived above
\begin{align*}
    \big|\mathcal{P}_d(\kappa^*_d, n^*_d) - \mathcal{P}_\infty(\tilde{\rho}, \tilde{\alpha})\big| & \leq K_{\text{total}}(\tilde \rho,\tilde \alpha)\sqrt{\frac{\log d}{d}},\\
    \big|\mathcal{P}_d(\kappa^*_\infty, n^*_\infty) - \mathcal{P}_\infty(\rho^*, \alpha^*)\big|& \leq  K_{\text{total}}(\rho^*,\alpha^*)\sqrt{\frac{\log d}{d}},
\end{align*}
Substituting these into the gap expression yields
\begin{align*}
    \big|\Delta_d \big| & \leq  \big|\mathcal{P}_d(\kappa^*_d, n^*_d) - \mathcal{P}_\infty(\tilde{\rho}, \tilde{\alpha})\big|+ \big|\mathcal{P}_d(\kappa^*_\infty, n^*_\infty) - \mathcal{P}_\infty(\rho^*, \alpha^*)\big|\\
    & \leq \big( K_{\text{total}}(\tilde \rho,\tilde \alpha) + K_{\text{total}}(\rho^*,\alpha^*)\big) \sqrt{\frac{\log d}{d}},
\end{align*}
where we use the fact that $\mathcal{P}_\infty(\tilde{\rho}, \tilde{\alpha}) \le \mathcal{P}_\infty(\rho^*, \alpha^*)$.

It remains to show that $K_{\text{total}}(\tilde{\rho}, \tilde{\alpha})$ is uniformly bounded in $d$. We use a similar argument as in the proof of Theorem \ref{thm:asym_opt}. Since $\mathcal{P}_d(0, 1) \geq 0$ (zero communication, single recommendation), any optimizer must satisfy $\mathcal{P}_d(\kappa^*_d, n^*_d) \geq 0$. Using the bound $\mathbb{E}[\max_i \langle \mathbf{h}, \boldsymbol{\theta}_i \rangle] \leq 1$, we have
\begin{align*}
    0 \leq \mathcal{P}_d(\kappa^*_d, n^*_d) \leq 1 - \lambda_s \log n^*_d - \lambda_c D_{\mathrm{KL}}.
\end{align*}
This gives $\lambda_s \log n^*_d \leq 1$, i.e., $\tilde{\alpha} = \frac{\log n^*_d}{d} \leq \frac{1}{c_s}$. Similarly, $\lambda_c D_{\mathrm{KL}} \leq 1$. From Proposition~\ref{prop: kl_approx}, $D_{\mathrm{KL}} \geq \frac{d-2}{2}\log\frac{1}{1-\tilde{\rho}^2} - |\mathcal{E}_{\ref{prop: kl_approx}}|$. For $d$ sufficiently large, the error term is lower order (in terms of $d$), giving $\frac{c_c}{2}\log\frac{1}{1-\tilde{\rho}^2} \lesssim 1$, which yields $\tilde{\rho} \leq \sqrt{1 - e^{-2/c_c}}$. Thus $(\tilde{\rho}, \tilde{\alpha})$ lies in the compact set $[0, \sqrt{1 - e^{-2/c_c}}] \times [0, 1/c_s]$ for all large $d$, and $K_{\text{total}}(\tilde{\rho}, \tilde{\alpha})$ is uniformly bounded.

This establishes the upper bound and completes the proof.
\end{proof}

\subsection{Proof of Proposition \ref{prop:finite_phase_transition}}

\begin{proof}
From the expression of the objective $\mathcal{P}_d(\kappa,n)$, we have that for any fixed interaction policy $(\kappa, n)$,
\begin{align*}
    \frac{\partial}{\partial \lambda_s} \mathcal{P}_d(\kappa,n) = - \log n, && \frac{\partial}{\partial \lambda_c} \mathcal{P}_d(\kappa,n) = - D_{\text{KL}}\big(q_{\kappa}(\cdot|\cdot) \| p(\cdot)\big).
\end{align*}
By the Envelope Theorem, the derivative of the maximized payoff $ \max_{\kappa, n} \mathcal{P}_d(\kappa, n)$ with respect to a cost parameter is the derivative of the objective evaluated at the optimal policy $(\kappa^*, n^*)$. Since $\log n \geq 0$ for all $n \geq 1$ and the KL divergence is non-negative, it follows that:
\begin{align*}
\frac{\partial}{\partial \lambda_s} \Big[\max_{\kappa, n} \mathcal{P}_d(\kappa, n)\Big] = -\log n^* \leq 0, \quad \text{and} \quad \frac{\partial }{\partial \lambda_c} \Big[\max_{\kappa, n} \mathcal{P}_d(\kappa, n)\Big] = -D_{\text{KL}}(q_{\kappa^*} \| p) \leq 0.
\end{align*}

Next, we use the fact that the expected utility is bounded above by $1$. For any policy with $n \geq 2$, the payoff is constrained by $\mathcal{P}_d(\kappa, n) \leq 1 - \lambda_s \log 2$. We can define a threshold $\bar{\lambda}_s = 1/\log 2$ such that for any $\lambda_s > \bar{\lambda}_s$, the payoff for any $n \geq 2$ is strictly less than zero. Since the system can always achieve a payoff of at least $0$ by selecting $n=1$ and $\kappa=0$, any policy with $n \geq 2$ becomes strictly suboptimal. Combined with the fact that the optimal payoff is a decreasing function in terms of $\lambda_s$, there exists $\bar\lambda_s(\lambda_c)$ such that for all $\lambda_s \geq \bar\lambda_s(\lambda_c)$, we have $n^*_d = 1$. This implies that for $\lambda_c < \underline \lambda_c(\lambda_s) = \bar \lambda_s^{-1}(\lambda_s)$, we have that $n^*_d = 1$.

A similar argument applies to the Pure Search regime. Note that the KL divergence $D_{\text{KL}}(q_{\kappa} \| p)$ is a strictly increasing and unbounded function of $\kappa$ for the vMF distribution. As such, there exists a threshold $\bar{\lambda}_c$ such that for all $\lambda_c > \bar{\lambda}_c$ and for any $n$,
\begin{align*}
    \frac{\partial}{\partial \kappa} \Big[\mathcal{P}_d(\kappa, n)\Big] \Bigg|_{\kappa = 0} < 0.
\end{align*}
Combined with the fact that the optimal payoff is a decreasing function in terms of $\lambda_c$, there exists $\bar\lambda_c(\lambda_s)$ such that for all $\lambda_c \geq \bar\lambda_c(\lambda_s)$, we have $\kappa^*_d = 0$.
\end{proof}

\section{Proofs of Results in Section \ref{sec:tilted_dist}}
\label{app:proofs_tilt}

\subsection{Proof of Theorem \ref{thm:tilt_opt}}

\begin{proof}
Using the decomposition presented in Section \ref{sec:decomposition}, the decomposition of $\boldsymbol{\theta}_i$, as $\boldsymbol{\theta}_i\sim q(\cdot|\mathbf{m}) \in \mathcal{Q}$, we have
\begin{align*}
\mathbf{h} = W \mathbf{m} + \sqrt{1-W^2} \mathbf{Y}, \quad \boldsymbol{\theta}_i = V_i \mathbf{m} + \sqrt{1-V_i^2} \mathbf{Y}_i,
\end{align*}
where $\mathbf{Y}$ and $\mathbf{Y}_i$'s represent the uncommunicated component (see Section \ref{sec:decomposition}). Using the above decomposition, we can write the utility
\begin{align*}
\langle \mathbf{h}, \boldsymbol{\theta}_i \rangle = W V_i + \sqrt{1-W^2}\sqrt{1-V_i^2} X_i,
\end{align*}
where $X_i = \langle \mathbf{Y}, \mathbf{Y}_i \rangle$. From Lemma \ref{lem:prelim_results}, we know that $X_i$'s are independent and identically distribution. The expected utility of the recommendation set is then,
\begin{align*}
U_n(q) = \mathbb{E} \Big[ \max_{i \in [n]} \big\{ W V_i + \sqrt{1-W^2}\sqrt{1-V_i^2} X_i \big\} \Big].
\end{align*}
Using the bound on the MGF (moment generating function) of $W$ from Lemma \ref{lem:w_concentration}, we  have that
\begin{align*}
\mathbb{E}\big[|W-\mathbb E[W]|\big] \leq \frac{1}{\sqrt{d^\star}}.
\end{align*}
Further, as $|W|\leq 1$, we have
\begin{align*}
    \mathbb E \left| \sqrt{1-W^2} - \mathbb E [\sqrt{1-W^2}]  \right| \leq 2 \mathbb E \left| \sqrt{1-W^2} - \sqrt{1-\rho^2}  \right| \leq \frac{2}{\sqrt{1-\rho^2}}\mathbb{E}|W-\rho| \leq  \frac{2\rho \bar K}{\sqrt{d^\star}(1-\rho^2)}.
\end{align*}
where the second inequality follows by using Eq. \eqref{eq:sqrt_1_minus_w_sq}, and the third inequality follows by Lemma \ref{lem:w_concentration}. Further, as $|V_i|\leq 1$ and $|X_i|\leq 1$, we get
\begin{align}
\label{eq:replace_w_expec}
\Big| U_n(q) - \mathbb{E} \Big[ \max_{i \in [n]} \big\{ \mathbb E[W] V_i + \mathbb E \big[\sqrt{1-W^2}\big] \sqrt{1-V_i^2} X_i \big\} \Big] \Big| \leq  \frac{1+2\rho \bar K}{\sqrt{d^\star}(1-\rho^2)}.
\end{align}

To simplify this expectation, we define $M_n = \max_{i \in [n]} X_i$. Then, we can upper bound each individual item's utility using,
\begin{align}
\label{eq:replace_Mn_expec}
\max_{i \in [n]} & \big\{ \mathbb{E}[W] V_i + \mathbb{E}[\sqrt{1-W^2}] \sqrt{1-V_i^2} X_i \big\} \nonumber\\
&\leq \max_{i \in [n]} \big\{ \mathbb{E}[W] V_i + \mathbb{E}[\sqrt{1-W^2}] \sqrt{1-V_i^2} M_n \big\} \nonumber\\
& \leq \max_{i \in [n]} \big\{ \mathbb{E}[W] V_i + \mathbb{E}[\sqrt{1-W^2}] \sqrt{1-V_i^2} \mathbb E [M_n] \big\} + |M_n -\mathbb E [M_n]| \nonumber\\
&\leq \max_v \Big\{\mathbb{E}[W] v + \mathbb{E}[\sqrt{1-W^2}] \sqrt{1-v^2} \mathbb E [M_n] \Big\} + |M_n -\mathbb E [M_n]|,
\end{align}
where we use that $\mathbb{E}[\sqrt{1-W^2}] \sqrt{1-V_i^2}\leq 1$, and the last inequality holds simply because
\begin{equation*}
    \mathbb{E}[W] V_i + \mathbb{E}[\sqrt{1-W^2}] \sqrt{1-V_i^2} \mathbb E [M_n] \leq \max_v \Big\{\mathbb{E}[W] v + \mathbb{E}[\sqrt{1-W^2}] \sqrt{1-v^2} \mathbb E [M_n] \Big\}.
\end{equation*}

Next, we provide a bound on $E|M_n -\mathbb E [M_n]|$. We use an argument similar to that in the proof of Lemma \ref{lem:prelim_results}, to argue that the distribution of $M_n$ is independent of message $\mathbf{m}$. Recall that, $\mathbf{Y}$ and ${\mathbf{Y}_i}$'s are independent samples from the uniform distribution over $\mathcal{S}(\mathbf{m}) = \{ \mathbf{y} \in \mathcal{S}^{d-1} : \langle \mathbf{y}, \mathbf{m} \rangle = 0 \}$. Due to the rotational symmetry of $\mathcal{S}^{d-1}$, the distribution of $X_i = \langle \mathbf{Y} , \mathbf{Y}_i \rangle$ is independent of $\mathbf{m}$. Therefore, without loss of generality, we can choose $\mathbf{m} = \mathbf{e}_d = (0, 0, \dots, 0, 1)$. For this choice of $\mathbf{m}$, we can generate samples from $\text{Unif}\big(\mathcal{S}(\mathbf{e}_d)\big)$ as follows: first sample $\tilde{\mathbf{Y}} \sim \text{Unif}\big(\mathcal{S}^{d-2}\big)$, then set $\mathbf{Y} = (\tilde{\mathbf{Y}}, 0)$. Similarly, we can write $\mathbf{Y}_i = (\tilde{\mathbf{Y}}_i, 0)$, where $\tilde{\mathbf{Y}}_i$ are independent samples from $\text{Unif}\big(\mathcal{S}^{d-2}\big)$. Consequently, $X_i = \langle \mathbf{Y}, \mathbf{Y}_i \rangle = \langle \tilde{\mathbf{Y}}, \tilde{\mathbf{Y}}_i \rangle$. This implies that $M_n$ follows the same distribution as $ \max_i \langle \tilde{\mathbf{Y}}, \tilde{\mathbf{Y}}_i \rangle$, where $\tilde{\mathbf{Y}}$ and $\tilde{\mathbf{Y}}_i$'s are independent samples from $\text{Unif}\big(\mathcal{S}^{d-2}\big)$.

As such, without loss of generality, we  define $M_n$ as a function of $\tilde{\mathbf{Y}}$ and $\tilde{\mathbf{Y}}_i$'s. Meaning, we  write $M_n(\tilde{\mathbf{Y}}, \tilde{\mathbf{Y}}_1, \dots, \tilde{\mathbf{Y}}_n) := \max_{i \in [n]} \langle \tilde{\mathbf{Y}}, \tilde{\mathbf{Y}}_i \rangle$, where $M_n(\tilde{\mathbf{Y}}, \tilde{\mathbf{Y}}_1, \dots, \tilde{\mathbf{Y}}_n)$ is $1$-Lipschitz in each of its arguments separately.

As presented in \citep{ledoux2001concentration, bakry2013analysis}, the uniform distribution over $\mathcal{S}^{d-2}$ satisfies a Log-Sobolev Inequality (LSI) with constant $1/(d-2)$. By the tensorization property of the LSI \cite[Chapter 5]{ledoux2001concentration}, the product measure on $\mathcal{S}^{d-2} \times (\mathcal{S}^{d-2})^n$ also satisfies an LSI with the same constant $1/(d-2)$. Consequently, for any function $g: \mathcal{S}^{d-2} \times (\mathcal{S}^{d-2})^n \rightarrow \mathbb R$ that is $1$-Lipschitz separately in each of its arguments, we have
\begin{align*}
    \mathbb P \big(|g(\tilde{\mathbf{Y}}) - \mathbb E[g(\tilde{\mathbf{Y}})]| > \varepsilon\big) \leq 2 \exp\left( -\frac{(d-2)\varepsilon^2}{2} \right).
\end{align*}
Since $M_n(\tilde{\mathbf{Y}}, \tilde{\mathbf{Y}}_1, \dots, \tilde{\mathbf{Y}}_n) = \max_{i \in [n]} \langle \tilde{\mathbf{Y}}, \tilde{\mathbf{Y}}_i \rangle$ is $1$-Lipschitz in each of its arguments separately, we obtain
\begin{align*}
\mathbb P \big(|M_n - \mathbb{E}[M_n]| > \varepsilon\big) \leq 2 \exp\left( -\frac{(d-2)\varepsilon^2}{2} \right).
\end{align*}
By integrating this tail, the expected deviation is bounded as
\begin{align*}
\mathbb{E}\big[|M_n - \mathbb{E}[M_n]|\big] \leq \sqrt{\frac{\pi}{2(d-2)}} \leq \sqrt{\frac{\pi}{2d^\star}}.
\end{align*}
Combining this with the arguments in Eq. \eqref{eq:replace_w_expec} and \eqref{eq:replace_Mn_expec}, and using triangle's inequality, we get
\begin{align*}
   \Big| U_n(q) - \mathbb E \Big[ \max_{i \in [n]} \big\{ \mathbb{E}[W] V_i + \mathbb{E}[\sqrt{1-W^2}] \sqrt{1-V_i^2} \mathbb E [M_n] \big\} \Big] \Big| \leq \frac{1}{\sqrt{d^\star}} \Big( \sqrt{\frac{\pi}{2}} + \frac{2\rho \bar K}{(1-\rho^2)} \Big).
\end{align*}
Now, the result follows by taking $K_{\ref{thm:tilt_opt}} = \sqrt{\frac{\pi}{2}} + 2\bar K$. This completes the proof.
\end{proof}

\subsection{Proof of Theorem \ref{thm:tilt}}
\label{app:proof_thm_tilt}
\begin{proof}
To find the expected value of $M_n$, we first consider the special case ($\rho=0$) in Proposition \ref{prop: utility_approx}. From the utility approximation result in Proposition \ref{prop: utility_approx}, given a message $\mathbf{m}$ and a recommendation set size $n = \lfloor e^{d \alpha} \rfloor$, the expected utility for $\rho=0$ is
\begin{align*}
\lim_{d\rightarrow \infty}\mathbb{E} \Big[ \max_{i \in [n]} \langle \mathbf{h}, \boldsymbol{\theta}_i \rangle \Big] = f(0, \alpha),
\end{align*}
where $f(0, \alpha)$ is the asymptotic utility defined by the large deviations rate function $I_0(w, x)$:
\begin{align*}
f(0, \alpha) := \max_{w, x \in (-1, 1)} \left\{ \sqrt{1-w^2} x \right\} \quad \text{subject to} \quad I_0(w, x) \leq \alpha,
\end{align*}
where $I_0(w, x) = -\frac{1}{2} \log(1-w^2) - \frac{1}{2} \log(1-x^2)$. The maximum is attained at $w=0$ and $x = \sqrt{1 - e^{-2\alpha}}$, giving us
\begin{align*}
f(0, \alpha) = \sqrt{1 - e^{-2\alpha}}.
\end{align*}
This result characterizes the expected maximum of inner products for vectors drawn uniformly from $\mathcal{S}^{d-1}$. Specifically, for $\tilde{\mathbf{Y}}, \tilde{\mathbf{Y}}_1, \dots, \tilde{\mathbf{Y}}_n$ i.i.d. and uniform over $\mathcal{S}^{d-1}$
\begin{align*}
\lim_{d\rightarrow \infty} \mathbb{E} \left[ \max_{i \in [n]} \langle \tilde{\mathbf{Y}}, \tilde{\mathbf{Y}}_i \rangle \right] = \sqrt{1 - e^{-2\alpha}}.
\end{align*}
Finally, we apply this to the maximum of $X_i = \langle \mathbf{Y}, \mathbf{Y}_i \rangle$. Although $\mathbf{Y}$ and $\mathbf{Y}_i$ are uniform over the $(d-2)$-dimensional hypersphere $\mathcal{S}^{d-2}$ rather than $\mathcal{S}^{d-1}$, the asymptotic behavior of the rate function remains identical for large $d$. Thus, the expected maximum of the search components is:
\begin{align*}
\lim_{d\rightarrow \infty} \mathbb{E}[M_n] = \lim_{d\rightarrow \infty} \mathbb{E} \left[ \max_{i \in [n]} X_i \right] = \sqrt{1 - e^{-2\alpha}}.
\end{align*}
From Lemma \ref{lem:w_concentration} (and as argued in the proof of Proposition \ref{prop: utility_approx}), we have
\begin{align*}
\lim_{d\rightarrow \infty} \mathbb{E}[W] = \rho \quad \text{and} \quad \lim_{d\rightarrow \infty} \mathbb{E}[\sqrt{1-W^2}] = \sqrt{1-\rho^2}.
\end{align*}
By setting the concentration parameter $\kappa_d(\rho) = \frac{\rho}{1-\rho^2}(d-3)$ and the set size $n_d(\alpha) = \lfloor e^{\alpha d} \rfloor$ for fixed $\rho$ and $\alpha$, for any choice of tilt parameter $v$, we have
\begin{align*}
\lim_{d \to \infty} \mathcal{T}_d(\kappa_d, n_d, v) =\mathcal{T}_{\infty}(\rho, \alpha, v),
\end{align*}
where the limiting utility is given by:
\begin{align*}
\mathcal{T}_{\infty}(\rho, \alpha, v) = \rho v + \sqrt{1-\rho^2}\sqrt{1-v^2}\sqrt{1-e^{-2\alpha}}.
\end{align*}
To find the optimal tilt parameter $v^*$ corresponding to $\mathcal{T}_{\infty}$, we simply optimize $\mathcal{T}_{\infty}(\rho, \alpha, v)$ for fixed $\rho$ and $\alpha$, which gives,
\begin{align*}
    v^*(\rho, \alpha) = \frac{\rho}{\sqrt{\rho^2 + (1-\rho^2)(1-e^{-2\alpha})}}.
\end{align*}
Substituting $v^*(\rho, \alpha)$ back into $\mathcal{T}_{\infty}(\rho, \alpha, v)$, the maximized utility (at the optimal tilt) simplifies to
\begin{align*}
\max_v \mathcal{T}_{\infty}(\rho, \alpha, v) = \sqrt{\rho^2 + (1-\rho^2)(1-e^{-2\alpha})}.
\end{align*}
Next, we aim to solve for $\rho^*$ and $\alpha^*$, where
\begin{align*}
   (\rho^*, \alpha^*) &= \arg \max_{\rho \in [0,1), \alpha \geq 0} \left\{ \sqrt{\rho^2 + (1-\rho^2)(1-e^{-2\alpha})} - c_s \alpha + \frac{1}{2} c_c \log(1-\rho^2) \right\}\\
   &= \arg \max_{\rho \in [0,1), \alpha \geq 0} \left\{ \sqrt{1-(1-\rho^2)e^{-2\alpha}} + \frac{1}{2}  c_s\log e^{-2\alpha} + \frac{1}{2} c_c \log(1-\rho^2) \right\}.
\end{align*}
As can be easily observed: (i) when $c_c > c_s$, it is optimal to set $\rho=0$, i.e., $\rho^* = 0$ which also gives $v^* = 0$, as $v^*(0,\alpha)=0$ for any $\alpha > 0$, (ii) Conversely, when $c_c < c_s$, it is optimal to set $\alpha=0$, i.e., $\alpha^* = 0$ which also gives $v^* = 1$, as $v^*(\rho,0)=1$ for any $\rho > 0$.

Further, by representing $c_{\text{min}} = \min\{c_s, c_c\}$ and suppose $z = (1-\rho^2)e^{-2\alpha}$, then we have
\begin{align}
\label{eq:c_min_bound}
    \max_{\rho \in [0,1), \alpha \geq 0} & \left\{ \sqrt{1-(1-\rho^2)e^{-2\alpha}} + \frac{1}{2}  c_s\log e^{-2\alpha} + \frac{1}{2} c_c \log(1-\rho^2) \right\} \nonumber\\
    & \leq \max_{\rho \in [0,1), \alpha \geq 0}  \left\{ \sqrt{1-(1-\rho^2)e^{-2\alpha}} + \frac{1}{2}  c_{\text{min}}\log e^{-2\alpha} + \frac{1}{2} c_{\text{min}} \log(1-\rho^2) \right\} \nonumber\\
    & = \max_{z \in [0,1)} \left\{ \sqrt{1-z} + \frac{1}{2}  c_{\text{min}}\log z \right\}.
\end{align}
By solving the above optimization problem, we get
\begin{align*}
    z^* = \frac{1}{2}  \Big( \sqrt{c_{\min}^4+ 4 c_{\min}^2} -c_{\min}^2 \Big).
\end{align*}
and the equality in Eq. \eqref{eq:c_min_bound} holds when either $\rho^* =0$ (if $c_c > c_s$) or $\alpha^* =0$ (if $c_c < c_s$), and
\begin{align*}
    \big(1-(\rho^*)^2\big)e^{-2\alpha^*} = z^* = \frac{1}{2}  \Big( \sqrt{c_{\min}^4+ 4 c_{\min}^2} -c_{\min}^2 \Big).
\end{align*}
This completes the proof.
\end{proof}

\end{document}